\documentclass{article}
\usepackage[margin=1in]{geometry}
\usepackage{authblk}

\usepackage{hyperref}
\usepackage{url}

\usepackage{microtype}
\usepackage{graphicx}
\usepackage{float}
\usepackage{booktabs} 
\usepackage{bbding}

\usepackage{hyperref}
\usepackage{array}
\newcolumntype{P}[1]{>{\centering\arraybackslash}p{#1}}

\usepackage{amsmath}
\usepackage{amssymb}
\usepackage{mathtools}
\usepackage{amsthm}
\usepackage[authoryear]{natbib}
\usepackage{enumerate}
\usepackage{bbm}

\usepackage{caption}

\usepackage{booktabs} 
\usepackage{tikz} 

\usetikzlibrary{shapes.geometric, arrows}

\tikzstyle{startstop} = [rectangle, rounded corners, 
minimum width=1cm, 
minimum height=0.4cm,
text centered, 
draw=black ]

\tikzstyle{io} = [trapezium, 
trapezium stretches=true, 
trapezium left angle=70, 
trapezium right angle=110, 
minimum width=3.0cm, 
minimum height=1cm, text centered, 
draw=black]

\tikzstyle{process} = [rectangle, 
minimum width=3.0cm, 
minimum height=1cm, 
text centered, 
text width=3.0cm, 
draw=black, 
fill]

\tikzstyle{decision} = [diamond, 
minimum width=3.0cm, 
minimum height=1cm, 
text centered, 
draw=black, 
fill=green!30]
\tikzstyle{arrow} = [thick,->,>=stealth]

\usepackage[american]{babel}
\usepackage{amsmath}
\usepackage{amsthm}
\usepackage{amssymb} 
\usepackage[T1]{fontenc} 
\usepackage[capitalise]{cleveref}
\usepackage{ifthen}

\usepackage[ ]{algorithmic}
\usepackage[vlined,ruled, ]{algorithm2e}

\usetikzlibrary{shapes,decorations,arrows,calc,arrows.meta,fit,positioning}
\tikzset{
    -Latex,auto,node distance =1 cm and 1 cm,semithick,
    state/.style ={ellipse, draw, minimum width = 0.7 cm},
    point/.style = {circle, draw, inner sep=0.04cm,fill,node contents={}},
    bidirected/.style={Latex-Latex,dashed},
    el/.style = {inner sep=2pt, align=left, sloped}
}

\makeatletter
\newcommand*{\centernot}{%
  \mathpalette\@centernot
}
\def\@centernot#1#2{%
  \mathrel{%
    \rlap{%
      \settowidth\dimen@{$\m@th#1{#2}$}%
      \kern.5\dimen@
      \settowidth\dimen@{$\m@th#1=$}%
      \kern-.5\dimen@
      $\m@th#1\not$%
    }%
    {#2}%
  }%
}
\makeatother

\newtheorem{theorem}{Theorem}
\newtheorem{corollary}{Corollary}

\newtheorem{lemma}{Lemma}

\newtheorem{definition}{Definition}

\usepackage{tikz}

\usepackage[textsize=tiny]{todonotes}

\usepackage[utf8]{inputenc} 
\usepackage[T1]{fontenc}    
\usepackage{hyperref}       
\usepackage{url}            
\usepackage{booktabs}       
\usepackage{amsfonts}       
\usepackage{nicefrac}       
\usepackage{microtype}      
\usepackage{lipsum}
\usepackage{fancyhdr}       
\usepackage{graphicx}       
\graphicspath{{media/}}     

\usepackage{caption}
\usepackage{subcaption}
\usepackage{comment}

\usepackage{amsmath} 
\usepackage{xspace}

\newcommand{\y}{\mathbf{y}}

\usepackage{amsmath}
\usepackage{amssymb}
\usepackage{mathtools}
\usepackage{amsthm}
\theoremstyle{plain}

\usepackage{lipsum,lineno}
\usepackage{tabularray}

\usepackage[utf8]{inputenc} 
\usepackage[T1]{fontenc}    
\usepackage{hyperref}       
\usepackage{url}            
\usepackage{booktabs}       
\usepackage{amsfonts}       
\usepackage{nicefrac}       
\usepackage{microtype}      

\usepackage{amsmath}
\usepackage{amssymb}
\usepackage{mathtools}
\usepackage{amsthm}

\usepackage{xargs}                      
\usepackage{xcolor}

\usepackage{tikz}
\usetikzlibrary{positioning, calc, shapes.geometric, shapes, shapes.multipart, arrows.meta, arrows, decorations.markings, external, trees}

\usepackage{scalefnt}
\usetikzlibrary{shapes.misc}
\usetikzlibrary{positioning, calc, shapes.geometric, shapes, shapes.multipart, arrows.meta, arrows, decorations.markings, external, trees, fit}
\tikzset{
	-Latex,auto,node distance =1 cm and 1 cm,semithick,
	state/.style ={ellipse, draw, minimum width = 0.7 cm},
	point/.style = {circle, draw, inner sep=0.04cm,fill,node contents={}},
	nnv/.style={
		rectangle, draw,thick,minimum width=0.7cm,minimum height=1.5cm
	},
        nnh/.style={
            rectangle, draw,thick,minimum width=1.5cm,minimum height=1.0cm
          },
	outer/.style={draw=gray,dashed,thick, inner sep=3pt
	},
	XOR/.style={draw,circle,append after command={
			[shorten >=\pgflinewidth, shorten <=\pgflinewidth,]
			(\tikzlastnode.north) edge (\tikzlastnode.south)
			(\tikzlastnode.east) edge (\tikzlastnode.west)
		}
	},
	bidirected/.style={Latex-Latex,dashed},
	el/.style = {inner sep=2pt, align=left, sloped},
	cross/.style={cross out, draw=black, minimum size=2*(#1-\pgflinewidth), inner sep=0pt, outer sep=0pt},
	cross/.default={1pt}
}

\usepackage{natbib}
\usepackage{microtype}

\title{
Information-Directed Sampling for Causal Bandits
}

\author{%
  Muhammad Qasim Elahi$^1$, Murat Kocaoglu$^2$,  Mahsa Ghasemi$^1$ \\
       School of Electrical and Computer Engineering, Purdue University$^1$\\
       School of Computer Science, Johns Hopkins University$^2$\\

  \texttt{elahi0@purdue.edu, mkocaoglu@jhu.edu, mahsa@purdue.edu } \\
}

\usepackage{fancyhdr}
\begin{document}
\maketitle

\begin{abstract}
Causal bandits exploit structural relationships among variables to share
information across interventions and accelerate the identification of
high-reward decisions. In many applications, however, some variables
cannot be directly manipulated, even though they influence the reward
and provide useful information about the underlying causal system. We
study contextual causal bandits with non-manipulable variables, where
context variables are observed before action selection and additional
variables are observed after each intervention. Assuming a known causal
graph without latent confounding, we adopt a Bayesian formulation in
which the conditional probability tables of the observational
distribution constitute the unknown parameter. This representation
allows observations collected under one intervention to update reward
estimates for other interventions through their shared causal
mechanisms. We develop causal variants of Thompson Sampling and
Information-Directed Sampling (IDS) for this setting. For Thompson
Sampling, we establish an entropy-dependent sublinear Bayesian regret
bound. For IDS, we derive an entropy-dependent regret bound that
explicitly quantifies the additional error introduced by Monte Carlo
approximation of the expected regret and information gain; when these
quantities are available exactly, the bound recovers the standard
sublinear IDS rate. We further provide high-probability confidence
bounds for the Monte Carlo estimates used by the algorithm. Experiments
on several synthetic causal bandit tasks show that the proposed methods
outperform causal and non-causal baselines by more effectively exploiting
information shared across interventions.
\end{abstract}

\section{Introduction}
\label{submission}

In the classical multi-armed bandit problem, a decision-maker repeatedly
selects an action from a finite set and observes the resulting reward.
Because the reward distributions are initially unknown, the learner must
balance \emph{exploration}, which gathers information about uncertain
actions, and \emph{exploitation}, which favors actions believed to yield
high rewards
\cite{lattimore2020bandit,slivkins2019introduction,vermorel2005multi}.
Without additional structure, observing the reward of one action
typically provides no direct information about the rewards of the other
actions. Consequently, effective algorithms must explore the available
actions sufficiently often
\cite{garivier2011kl,jamieson2014best,jamieson2014lil}.

In many decision-making problems, however, actions are related through
a common underlying mechanism. An observation collected after one
action may therefore provide information about several other actions.
Structured bandit methods exploit such relationships to improve
statistical efficiency and reduce unnecessary exploration
\cite{schulz2020finding,jun2020crush,tirinzoni2020novel,
van2024optimal,wan2023towards,mersereau2009structured}.
The central challenge is to characterize and exploit the relevant
information-sharing structure while continuing to balance exploration
and exploitation.

Causal bandits provide a principled framework for structured
decision-making when the environment is governed by an underlying
structural causal model
\cite{lattimore2016causal,sen2017best,lee2018structural,
wei2024approximate,qasim2024partial}. In this setting, each action
corresponds to an intervention on the causal system, and the resulting
observations are generated according to a shared causal model
\cite{pearl2009causality}. Unlike an unstructured bandit model, the
reward distributions associated with different interventions are
coupled through common causal mechanisms. Consequently, observations
obtained under one intervention can potentially improve the learner's
estimates of the rewards associated with other interventions
\cite{lattimore2016causal,sen2017best,yabe2018causal,
lee2018structural}.

Many existing causal bandit formulations permit interventions on a
large collection of observed variables. In practice, however, some
variables cannot be directly manipulated. Examples include genetic
characteristics in healthcare, demographic attributes in public-policy
applications, and macroeconomic conditions in economic decision-making.
Although such variables are non-manipulable, they may strongly influence
the reward and the effects of feasible interventions. Their presence
therefore changes the set of candidate interventions and creates an
additional challenge for efficiently sharing information across actions.

Lee and Bareinboim~\cite{lee2019structural} study causal bandits with
non-manipulable variables in graphs that may contain latent confounding.
They characterize the interventions that can be optimal under the
manipulability constraints and use a generalized
$z^2$-identification procedure to derive multiple estimators of
interventional reward distributions. These estimators are combined
through a bootstrap-based minimum-variance weighted average and
incorporated into variants of Thompson Sampling and KL-UCB. Their
results demonstrate the empirical value of exploiting causal
information, but do not provide regret guarantees for the resulting
algorithms.

We study contextual causal bandits with non-manipulable variables under
the assumption that the causal graph is known and contains no latent
confounders. At each round, context variables are observed before the
learner selects an intervention. The learner subsequently observes the
reward and the remaining observed variables in the causal graph.
For example, in a healthcare application, patient characteristics and
medical history may be available before treatment selection, whereas
physiological measurements that cannot be directly manipulated may be
observed after treatment. The learner's objective is to select a
feasible intervention that maximizes the context-dependent expected
reward.

Following the Bayesian information-theoretic framework of
\cite{russo2016information,russo2014learning}, we represent uncertainty
about the causal system using a random parameter
$\boldsymbol{\theta}$. Because the graph contains no latent confounders,
the relevant interventional distributions are identifiable from the
observational distribution through the truncated factorization formula.
We therefore let $\boldsymbol{\theta}$ collect the conditional
probability tables associated with the causal graph. Observations from
each round update the posterior distribution over these shared
parameters, allowing samples collected under one intervention to improve
the estimated rewards and information gains of other interventions.

Based on this formulation, we develop causal variants of Thompson
Sampling and Information-Directed Sampling. Thompson Sampling selects
an intervention according to its posterior probability of being optimal
under the observed context. IDS instead selects a distribution over
interventions by balancing their expected instantaneous regret against
the information they provide about the context-dependent optimal
decision. Since the posterior expectations required by IDS are
generally unavailable in closed form, we estimate them using Monte Carlo
samples and explicitly account for the resulting approximation error.

Our main contributions are summarized as follows:
\begin{itemize}
    \item We formulate contextual causal bandits with non-manipulable
    variables in a Bayesian framework in which the conditional
    probability tables of the observational distribution are treated as
    the unknown parameter. This formulation enables observations
    collected under one intervention to update estimates associated with
    other interventions through their shared causal mechanisms.

    \item We propose causal Thompson Sampling and
    Information-Directed Sampling algorithms for this setting. We
    establish an entropy-dependent sublinear Bayesian regret bound for
    Thompson Sampling. For IDS, we derive a regret bound that separates
    the standard information-theoretic term from the additional error
    caused by Monte Carlo approximation; the oracle version, in which
    the information ratio is computed exactly, achieves the standard
    sublinear IDS guarantee.

    \item We derive high-probability concentration bounds for the Monte
    Carlo estimates of the expected instantaneous regret and contextual
    information gain. These results provide computable confidence sets
    and quantify the effect of posterior-sampling error on the IDS
    regret guarantee.

    \item We evaluate the proposed methods on multiple synthetic causal
    bandit tasks, including structured examples and randomly generated
    causal graphs. The results show that the proposed algorithms
    outperform causal and non-causal baselines by more effectively
    exploiting information shared across interventions.
\end{itemize}

\section{Preliminaries}

We adopt the Structural Causal Model (SCM) framework \cite{pearl2009causality}. An SCM, denoted by $\mathcal{M}$, is defined as a 4-tuple $\langle \mathbf{U}, \mathbf{V}, \mathbf{F}, P(\mathbf{U}) \rangle$, where $\mathbf{U}$ denotes a set of exogenous (unobserved) variables determined by factors external to the model, and $\mathbf{V}$ denotes a set of endogenous (observed) variables determined by variables in $\mathbf{U} \cup \mathbf{V}$ through the structural functions $\mathbf{F}$. In our setting, the endogenous variables $\mathbf{V}$ take values in finite domains and consist of the reward variable $Y$, actionable (manipulable) variables, non-actionable (non-manipulable) variables, and context variables. For further simplicity, we assume they are binary throughout the paper. For any subset of nodes $\mathbf{X} \subseteq \mathbf{V}$, let $\Omega(\mathbf{X})$ denote the Cartesian product of the state spaces of all variables in $\mathbf{X}$. However, our proposed methods and results also hold when the observed nodes are discrete and may take more than two possible values. The structural functions $\mathbf{F}$ specify how each $V_i$ is assigned a value, denoted as $v_i = f_i(\mathbf{PA}^i, \mathbf{U}^i)$, based on the values of its parent variables $\mathbf{PA}^i \subseteq \mathbf{V}$ and exogenous variables $\mathbf{U}^i \subseteq \mathbf{U}$. Finally, $P(\mathbf{U})$ is the probability distribution over the exogenous variables $\mathbf{U}$.

Each SCM is associated with a causal graph $\mathcal{G} = \langle \mathbf{V}, \mathbf{E} \rangle$, where the edge set $\mathbf{E}$ consists of two types: directed edges, such as $V_i \rightarrow V_j$, which indicate direct functional dependence (i.e., $V_i$ is used in defining $f_j$ in $\mathbf{F}$), and bidirected edges, such as $V_i \leftrightarrow V_j$, which represent the presence of an unobserved (latent) confounder affecting both $V_i$ and $V_j$. We use notations $\textit{pa}$, $\textit{ch}$, $\textit{an}$, and $\textit{de}$ to refer to the parents, children, ancestors, and descendants of a variable, respectively. Capitalized forms, such as $\textit{Pa}$, $\textit{Ch}$, $\textit{An}$, and $\textit{De}$, include the variable itself (e.g., $\textit{An}(W) = \textit{an}(W) \cup {W}$). For a set of variables, the relations are defined as the union of their individual outputs, e.g., $\textit{An}(\mathbf{W}) = \bigcup_{W \in \mathbf{W}} \textit{An}(W)$. Note that $\textit{pa}(V_i) = \mathbf{PA}^i$. A subgraph of $\mathcal{G}$, denoted $\mathcal{G}_{\overline{\mathbf{X}}}$, is obtained by removing edges pointing to the variables in $\mathbf{X}$. The connected component (c-component) of the DAG $\mathcal{G}$, containing vertex $V_i$, is denoted by $\mathsf{CC}_{\mathcal{G}}(V_i)$, which is the maximal set of all vertices in $\mathcal{G}$ that have a path to $V_i$, consisting only of bi-directed edges \cite{tian2002general}.

In the $K$-armed bandit problem, $K$ arms with distinct reward distributions are available, and the goal is to minimize cumulative regret over $T$ rounds. Regret is defined as the difference between the maximum expected cumulative reward achievable by always selecting the optimal arm and the expected cumulative reward obtained by a given algorithm. In the SCM-MAB setting, each arm corresponds to an intervention on a subset of variables. Given a causal graph $\mathcal{G}$ with reward $Y$, the arms are defined as ${ \mathrm{do}(\mathbf{X} = \mathbf{x}) \mid \mathbf{X} \subseteq \mathbf{V} \setminus {Y} }$, where the distribution of the reward variable under the intervention $\mathrm{do}(\mathbf{X} = \mathbf{x})$, denoted by $P(Y_{\mathbf{x}})$, coincides with the interventional distribution $P_{\mathbf{x}}(Y)$. The expected reward associated with an intervention is $\mu_{\mathbf{x}} = \mathbb{E}[Y \mid \mathrm{do}(\mathbf{x})]$. When additional context variables $\mathbf{C}$ are observed prior to the intervention, the objective becomes to optimize the expected reward conditioned on the context, namely $\mu_{\mathbf{x}}(\mathbf{C} = \mathbf{c}) = \mathbb{E}[Y \mid \mathrm{do}(\mathbf{x}), \mathbf{C} = \mathbf{c}]$. Moreover, we assume that $\mathbf{C}$ is closed under ancestry, i.e., $An(\mathbf{C}) = \mathbf{C}$, so that context variables have only other context variables as ancestors and are therefore unaffected by interventions, thereby preserving their interpretation as pre-intervention information and avoiding time-ordering issues.

\section{Possibly Optimal Arms for Causal Bandits}

In this section, we revisit the results of \cite{lee2019structural}, which characterize possibly optimal arms in causal bandits when certain nodes in the causal graph are non-manipulable. Let \( \mathbf{N} \subseteq \mathbf{V} \setminus \{Y\} \) denote the set of non-manipulable variables, noting that the reward variable \( Y \) is also inherently non-manipulable.
 
\begin{definition} (\textbf{Unobserved Confounder (UC)-Territory} \cite{lee2018structural}) Consider a causal graph $\mathcal{G}(\mathbf{V},\mathbf{E})$ with reward node $Y$, and let \( \mathcal{H} \) be the subgraph \( \mathcal{G}[\mathsf{An}(Y)] \). A set of variables \( \mathbf{T} \subseteq V(\mathcal{H}) \) containing \( Y \) is called a \textbf{UC-territory} on \( \mathcal{G} \) with respect to \( Y \) if \( \mathsf{De}_{\mathcal{H}}(\mathbf{T})=\mathbf{T} \) and \( \mathsf{CC}_{\mathcal{H}}(\mathbf{T})=\mathbf{T} \).
\end{definition} 

A UC-territory is minimal if none of its proper subsets containing $Y$ is a UC-territory. A minimal UC-territory, denoted by $\mathsf{MUCT}(\mathcal{G},Y)$, can be constructed by starting from the set $\{Y\}$ and alternately extending the current set by its c-component and its descendants until the set no longer changes.

\begin{definition} \textbf{(Interventional Border \cite{lee2018structural})}
Let \( \mathbf{T} \) be a minimal UC-territory on \( \mathcal{G} \) with respect to \( Y \). Then, \( \mathbf{X}=\mathsf{Pa}(\mathbf{T})\setminus\mathbf{T} \) is called the \textit{interventional border} of \( \mathcal{G} \) with respect to \( Y \), denoted by $\mathsf{IB}(\mathcal{G},Y)$.
\end{definition}

\begin{lemma} \cite{lee2018structural} For a graph $\mathcal{G}$ with reward node $Y$, $\mathsf{IB}(\mathcal{G}_{\overline{\mathbf{W}}},Y)$ is a possibly optimal minimal intervention set (POMIS) for any \( \mathbf{W}\subseteq\mathbf{V}\setminus\{Y\} \).
\label{lemm_pomis_o}
\end{lemma}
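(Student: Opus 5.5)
The plan is to prove the two defining properties of a POMIS for $\mathbf{X}=\mathsf{IB}(\mathcal{G}_{\overline{\mathbf{W}}},Y)$: \emph{minimality} (no variable in $\mathbf{X}$ is redundant for some SCM compatible with $\mathcal{G}$) and \emph{possible optimality} (there is an SCM compatible with $\mathcal{G}$ for which $\mathrm{do}(\mathbf{X}=\mathbf{x}^*)$ strictly beats every intervention not equivalent to it). Write $\mathcal{G}'=\mathcal{G}_{\overline{\mathbf{W}}}$. Let $\mathbf{T}=\mathsf{MUCT}(\mathcal{G}',Y)$, built by the alternating closure described after the UC-territory definition, so that $\mathbf{X}=\mathsf{Pa}_{\mathcal{G}'}(\mathbf{T})\setminus\mathbf{T}$.

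The first step is a structural observation. In $\mathcal{G}'_{\overline{\mathbf{X}}}$, the set $\mathbf{T}$ is closed under descendants within $\mathsf{An}(Y)$ and under bidirected edges. Every directed edge into $\mathbf{T}$ from outside comes from $\mathbf{X}$. Hence, by the truncated factorization and the do-calculus (rule 3 applied to the variables outside $\mathbf{T}\cup\mathbf{X}$), we get $P(Y\mid \mathrm{do}(\mathbf{x}))=P(Y\mid\mathrm{do}(\mathbf{x},\mathbf{z}))$ for any $\mathbf{z}$ on $\mathbf{V}\setminus(\mathbf{T}\cup\mathbf{X})$. This shows that intervening on $\mathbf{X}$ "cuts off" everything upstream of the territory.

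Minimality comes next. For each $X\in\mathbf{X}$ there is a child $T\in\mathbf{T}$ with $X\to T$ in $\mathcal{G}'$. Because $\mathbf{T}$ is built from $Y$ by c-component and descendant closure, there is a path from $T$ to $Y$ inside $\mathbf{T}$ using directed edges and bidirected edges. I would construct an SCM in which each variable copies (XORs) the relevant parents along this path, with shared latent bits for the bidirected edges. In this SCM, $Y$ genuinely depends on $\mathrm{do}(X)$ even when the rest of $\mathbf{X}$ is fixed, so dropping $X$ changes the interventional reward.

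Possible optimality is the main obstacle. I would construct a parity SCM. Each latent confounder $U$ on a bidirected edge inside $\mathbf{T}$ is a uniform bit. Each $V\in\mathbf{T}$ is the XOR of its parents in $\mathcal{G}'$ and its incident latents. Variables outside $\mathbf{T}$ are independent coins. $Y$ is the XOR over $\mathbf{T}$ of all these contributions, arranged so that $Y=1$ deterministically iff $\mathbf{X}=\mathbf{x}^*$. Any competing intervention $\mathbf{Z}$ has two ways to fail to match this:
\begin{itemize}
\item if $\mathbf{Z}$ omits some $X\in\mathbf{X}$, that input remains random, and $\mathbb{E}[Y]$ drops to at most $1/2$;
\item if $\mathbf{Z}$ intervenes on some node of $\mathbf{T}$, it severs an incoming confounder, so a latent bit that previously cancelled in the parity now appears only once, randomizing $Y$.
\end{itemize}
The delicate part is choosing the parity so that the latents cancel exactly when $\mathbf{T}$ is left unintervened. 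This is where the c-component closure $\mathsf{CC}(\mathbf{T})=\mathbf{T}$ is used: each latent hits $\mathbf{T}$ at least twice, so it can be cancelled. Descendant closure is used so that the effects of intervening inside $\mathbf{T}$ propagate to $Y$. Finally, the arrows into $\mathbf{W}$ deleted in $\mathcal{G}'$ are irrelevant, since $\mathbf{W}$-nodes enter $\mathbf{T}$ only through $\mathbf{X}$ or lie outside $\mathsf{An}(Y)$ in $\mathcal{G}'$. The resulting SCM is therefore compatible with $\mathcal{G}$, which completes the argument.
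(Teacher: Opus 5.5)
The paper gives no proof of this lemma; it is quoted from \cite{lee2018structural}, so your proposal has to stand on its own. Your structural observation and your split of competitors into ``omits some $X$'' versus ``meets $\mathbf{T}$'' are fine. The minimality half also essentially works, once you replace ``a path using directed and bidirected edges'' by a \emph{directed} path $X\to T\to\cdots\to Y$. Such a path exists because $T\in\mathsf{An}(Y)$, and it stays inside $\mathbf{T}$ by descendant closure; a bidirected edge cannot carry the effect of $\mathrm{do}(X)$.

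The genuine gap is in possible optimality, which is the whole content of the lemma. You identify the delicate step: latents must cancel exactly when $\mathbf{T}$ is untouched, and fail to cancel under every competing intervention that touches $\mathbf{T}$. But you never carry it out, and the concrete rule you give is wrong. If every $V\in\mathbf{T}$ XORs all its parents and incident latents, an exogenous bit enters $Y$ once per directed route to $Y$. So ``each latent hits $\mathbf{T}$ at least twice'' does not yield cancellation. Take $A\to B$, $A\to C$, $B\to Y$, $C\to Y$, $A\leftrightarrow Y$ (latent $U$), and $\mathbf{W}=\emptyset$. Then $\mathbf{T}=\{A,B,C,Y\}$ and $\mathbf{X}=\emptyset$. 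Your rule gives $A=B=C=U$ and $Y=B\oplus C\oplus U=U$, so $\mu_\emptyset=1/2$, while $\mathrm{do}(B=1)$ gives $Y\equiv 1$. Your second bullet also misstates why intervening on $B$ or $C$ should hurt. Neither is incident to a confounder; they matter only because they cut a directed route along which $U$ reaches $Y$.

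No parity repairs this example. Suppose all mechanisms are affine over $\mathrm{GF}(2)$ and the exogenous bits are independent and uniform. Then every interventional mean is $0$, $1/2$ or $1$. Strict optimality of $\emptyset$ therefore forces $Y\equiv 1$ observationally and an uncancelled bit under each of $\mathrm{do}(A)$, $\mathrm{do}(B)$, $\mathrm{do}(C)$. The private noises of $B$, $C$, $Y$ reach $Y$ along a single route, so they must be inactive.

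Write $A=\lambda U\oplus\nu N_A\oplus\cdots$, $B=\beta A\oplus\cdots$, $C=\kappa A\oplus\cdots$, and $Y=\gamma_B B\oplus\gamma_C C\oplus\gamma_U U\oplus\cdots$.
\begin{itemize}
\item Under $\mathrm{do}(A=a)$ only $U$ can be live, so $\gamma_U=1$.
\item Observational cancellation of $U$ and $N_A$ then forces $\lambda(\gamma_B\beta\oplus\gamma_C\kappa)=1$ and $\nu=0$.
\item Under $\mathrm{do}(B=b)$ and $\mathrm{do}(C=c)$, $U$ is again the only candidate. It survives only if $\gamma_C\kappa\lambda=0$ and $\gamma_B\beta\lambda=0$, which contradicts the previous identity.
\end{itemize}

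Yet $\emptyset$ is a POMIS here. The SCM $A=U$, $B=C=A$, $Y=\mathbbm{1}\{B=U,\,C=U\}$ gives $\mu_\emptyset=1$ and at most $1/2$ for every other MIS.

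So two things are missing. First, a nonlinear witness: for example, copies of each latent forwarded along directed routes inside $\mathbf{T}$, with $Y$ checking that all copies agree and that $\mathbf{X}=\mathbf{x}^*$, over enlarged domains if necessary. Second, an actual combinatorial argument, using the alternating c-component/descendant construction of the MUCT, that every competing MIS meeting $\mathbf{T}$ replaces some copy of some latent by a constant while another copy still reaches $Y$ intact. Relatedly, ``$Y=1$ iff $\mathbf{X}=\mathbf{x}^*$'' cannot be realized by a parity once $|\mathbf{X}|\geq 2$, though you do not need uniqueness of $\mathbf{x}^*$ anyway.
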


\begin{figure}[h]
\centering

\begin{minipage}{0.23\linewidth}
\centering
\begin{tikzpicture}[scale=0.75, transform shape=false]
    \node (1) {$X$};
    \node (2) [right=of 1] {$Y$};
    \node[red] (4) [above right=of 1,xshift=-0.6cm,yshift=-0.1cm] {$Z$};

    \path (1) edge (2);
    \path (4) edge (1);
    \path (4) edge (2);

    \node at (0.8,-0.9) {\normalsize (a) $\mathcal{G}_1$};
\end{tikzpicture}
\end{minipage}
\hfill
\begin{minipage}{0.23\linewidth}
\centering
\begin{tikzpicture}[scale=0.75, transform shape=false]
    \node (1) {$X$};
    \node (2) [right=of 1] {$Y$};

    \path (1) edge (2);
    \path[bidirected] (1) edge[bend left=70] (2);

    \node at (0.8,-0.9) {\normalsize (b) $\mathcal{H}_1$};
\end{tikzpicture}
\end{minipage}
\hfill
\begin{minipage}{0.25\linewidth}
\centering
\begin{tikzpicture}[scale=0.70, transform shape=false]
    \node (1) {$X$};
    \node (2) [right=of 1] {$W$};
    \node (3) [right=of 2] {$Y$};

    \node[red] (5) [above right=of 1,xshift=-0.6cm,yshift=-0.1cm] {$Z_1$};
    \node[red] (6) [right=of 5] {$Z_2$};

    \path (1) edge (2);
    \path (2) edge (3);
    \path (5) edge (1);
    \path (5) edge (6);
    \path (6) edge (3);

    \node at (1.6,-0.9) {\normalsize (c) $\mathcal{G}_2$};
\end{tikzpicture}
\end{minipage}
\hfill
\begin{minipage}{0.25\linewidth}
\centering
\begin{tikzpicture}[scale=0.70, transform shape=false]
    \node (1) {$X$};
    \node (2) [right=of 1] {$W$};
    \node (3) [right=of 2] {$Y$};

    \path (1) edge (2);
    \path (2) edge (3);
    \path[bidirected] (1) edge[bend left=70] (3);

    \node at (1.6,-0.9) {\normalsize (d) $\mathcal{H}_2$};
\end{tikzpicture}
\end{minipage}

\caption{Original causal graphs $\mathcal{G}_1$ and $\mathcal{G}_2$, along with their projections $\mathcal{H}_1$ and $\mathcal{H}_2$.}
\label{fig1}
\end{figure}

For a causal graph $\mathcal{G}$ in which all nodes except the reward node \(Y\) are manipulable, we use the notation \( \mathbb{P}_{\mathcal{G},Y} \) to denote the collection of all POMISs. In the more general setting where the variables in \( \mathbf{N} \) cannot be intervened upon, we use \( \mathbb{P}^{\mathbf{N}}_{\mathcal{G},Y} \) to denote the corresponding collection of possibly optimal minimal intervention sets. Following \cite{lee2019structural}, these intervention sets can be obtained through a projection step. We initialize a graph \( \mathcal{H}=\langle\mathbf{V}\setminus\mathbf{N},\emptyset\rangle \) and add a directed edge \(V_i\to V_j\) if \(V_i\to V_j\) is present in \(\mathcal{G}\), or if there exists a directed path from \(V_i\) to \(V_j\) whose non-endpoint vertices all belong to \(\mathbf{N}\). We add a bidirected edge \(V_i\leftrightarrow V_j\) if this edge is present in \(\mathcal{G}\), or if the projection of paths passing through variables in \(\mathbf{N}\) induces latent confounding between \(V_i\) and \(V_j\). The results of \cite{lee2019structural} show that the POMISs of the original causal bandit problem under the manipulability constraints can be obtained by enumerating the POMISs of the projected graph \(\mathcal{H}\) using Lemma~\ref{lemm_pomis_o}. Thus, \( \mathbb{P}^{\mathbf{N}}_{\mathcal{G},Y}=\mathbb{P}_{\mathcal{H},Y} \).

Consider the causal graph $\mathcal{G}_1$ with reward node $Y$ and non-manipulable set $\mathbf{N}=\{Z\}$, and let $\mathcal{H}_1$ be the graph resulting from the projection step, as shown in Figure~\ref{fig1}. The possibly optimal intervention sets for the original causal graph can be obtained from $\mathcal{H}_1$ using Lemma~\ref{lemm_pomis_o}. In particular, \( \mathbb{P}_{\mathcal{H}_1,Y}=\{\emptyset,\{X\}\}=\mathbb{P}^{\mathbf{N}}_{\mathcal{G}_1,Y} \). Therefore, the candidate optimal arms are \(do()\), \(do(X=0)\), and \(do(X=1)\). Similarly, for the causal graph $\mathcal{G}_2$ with reward node $Y$ and non-manipulable set $\mathbf{N}=\{Z_1,Z_2\}$, the projected graph $\mathcal{H}_2$ is shown in Figure~\ref{fig1}. In this case, \( \mathbb{P}_{\mathcal{H}_2,Y}=\{\emptyset,\{W\}\}=\mathbb{P}^{\mathbf{N}}_{\mathcal{G}_2,Y} \), and the candidate optimal arms are \(do()\), \(do(W=0)\), and \(do(W=1)\). More generally, we use \( \mathcal{A} \) to denote the set of all interventions generated by all POMISs, that is, \( \mathcal{A}:=\bigcup_{\mathbf{S}\in\mathbb{P}^{\mathbf{N}}_{\mathcal{G},Y}}\{do(\mathbf{S}=\mathbf{s}):\mathbf{s}\in\Omega(\mathbf{S})\} \), where \(do(\emptyset)\) denotes the observational action.

We use the causal graph $\mathcal{G}_1$ to illustrate how samples obtained from different arms can be used to improve reward estimates across interventions. As an illustrative example, let $Z$ represent the age of a patient, let $X$ denote a medication or treatment, and let $Y$ represent the health outcome. The patient’s age $Z$ has a causal effect on the outcome $Y$ but cannot be directly manipulated. The candidate optimal arms are $do()$, $do(X=0)$, and $do(X=1)$. Their expected rewards are given by $P(Y=1\mid do())=\sum_{x,z}P(Y=1\mid x,z)P(x\mid z)P(z)$, $P(Y=1\mid do(X=0))=\sum_zP(Y=1\mid X=0,z)P(z)$, and $P(Y=1\mid do(X=1))=\sum_zP(Y=1\mid X=1,z)P(z)$. We assume that the causal graph is known, but the observational distribution is unknown. Since all non-intervened variables in the causal graph are observed after each interaction, samples collected under the different interventions can be combined to estimate the conditional distributions of variables given their parents. Because these conditional distributions are shared across the reward expressions of different interventions, observations obtained under one arm can improve the estimated rewards of the other arms. In the next section, we formalize this information sharing through a Bayesian posterior over the parameters of the observational distribution.

\section{Bayesian Formulation for Causal Bandits}

We consider a general probabilistic (Bayesian) formulation in which uncertain quantities are modeled as random variables. At each time $t$, the agent selects an action $A_t \in \mathcal{A}$ and observes a reward. For each action $a \in \mathcal{A}$ and time $t$, let $Y_{a,t} \in \mathcal{Y}$ denote the potential reward that would be observed at time $t$ if action $a$ were selected. There exists an unknown real-valued parameter vector $\boldsymbol{\theta}$ such that, conditioned on $\boldsymbol{\theta}$, the rewards are independent across time and satisfy $\mathbb{E}[Y_{A_t,t} \mid \boldsymbol{\theta}] = \mu_{\boldsymbol{\theta}}(A_t)$. The parameter $\boldsymbol{\theta}$ is random under a prior distribution, capturing uncertainty about the underlying reward-generating mechanism. Uncertainty in $\boldsymbol{\theta}$ induces uncertainty about the optimal action $A^\star \in \arg\max_{a \in \mathcal{A}} \mu_{\boldsymbol{\theta}}(a)$, where ties are broken using a fixed deterministic rule. The mean cumulative regret of a policy is defined as 
$\mathbb{E}[\mathrm{Reg}_T] = \mathbb{E}\!\left[\sum_{t=1}^T \left(Y_{A^\star,t} - Y_{A_t,t}\right)\right]$, 
where the expectation is taken over the randomness in the actions $A_t$, the rewards, and the prior distribution over $\boldsymbol{\theta}$. This performance metric is commonly referred to as Bayesian regret or Bayesian risk.

Action $A_t$ is chosen based on the history $\mathcal{F}_t=\left(\mathbf{C}_1,A_1,\mathbf{V}_{A_1,1},Y_{A_1,1},\ldots,\mathbf{C}_{t-1},A_{t-1},\mathbf{V}_{A_{t-1},t-1},Y_{A_{t-1},t-1}\right)$ available before observing the context at time $t$. Formally, a randomized policy $\pi=\left(\pi_t\right)_{t \in \mathbb{N}}$ is a sequence of deterministic functions, where $\pi_t\left(\mathcal{F}_t,\mathbf{C}_t\right)$ specifies a probability distribution over the action set $\mathcal{A}$. Let $\mathcal{D}(\mathcal{A})$ denote the set of probability distributions over $\mathcal{A}$. After observing $\mathbf{C}_t$, the action $A_t$ is selected by sampling from $\pi_t\left(\mathcal{F}_t,\mathbf{C}_t\right)$. With some abuse of notation, we typically denote this distribution by $\pi_t$, where $\pi_t(a \mid \mathbf{c}_t)=\mathbb{P}\left(A_t=a \mid \mathcal{F}_t,\mathbf{C}_t=\mathbf{c}_t\right)$ denotes the probability assigned to action $a$ under the realized context $\mathbf{c}_t$. As shorthand notation, we use \( P_t(\cdot) \) for \( P(\cdot \mid \mathcal{F}_t) \) and \( \mathbb{E}_t(\cdot) \) for \( \mathbb{E}(\cdot \mid \mathcal{F}_t) \), respectively. After observing $\mathbf{C}_t=\mathbf{c}_t$, we define the context-conditioned posterior by \(P_t^{\mathbf{c}_t}(\cdot):=P(\cdot \mid \mathcal{F}_t,\mathbf{C}_t=\mathbf{c}_t)\) and the corresponding conditional expectation by \(\mathbb{E}_t^{\mathbf{c}_t}(\cdot)\).

In the contextual setting, at each time $t$, a context $\mathbf{C}_t \in \mathcal{C}$ is observed prior to selecting an action. The expected reward depends on both the action and the context and satisfies $\mathbb{E}[Y_{A_t,t} \mid \mathbf{C}_t=\mathbf{c},\boldsymbol{\theta}] = \mu_{\boldsymbol{\theta}}(A_t,\mathbf{c})$. Accordingly, we define the oracle mapping $\Pi^\star:\mathcal{C}\to\mathcal{A}$ by $\Pi^\star(\mathbf{c}):=\arg\max_{a \in \mathcal{A}}\mu_{\boldsymbol{\theta}}(a,\mathbf{c})$, where ties are broken using a fixed deterministic rule. The mean cumulative Bayesian regret relative to this oracle mapping, conditional on the realized contexts $\{\mathbf{C}_t\}_{t=1}^T$, is given by
\begin{equation}
 \mathbb{E}\left[\sum_{t=1}^T \left(Y_{\Pi^\star(\mathbf{C}_t),t} - Y_{A_t,t}\right) \,\middle|\, \{\mathbf{C}_t\}_{t=1}^T \right].
 \label{eq:contextual-bayesian-regret}
\end{equation}
One remaining task is to specify the parameter $\boldsymbol{\theta}$ in the causal bandit setting. We assume that the causal graph contains no unobserved confounders. Under this assumption, any interventional distribution, including the conditional interventional distributions arising in the contextual setting, is identifiable from the observational distribution and the graph structure through the truncated factorization formula \cite{pearl2009causality}. For instance, $P(\mathbf{v}_{\mathbf{V}\setminus\mathbf{X}} \mid do(\mathbf{x})) = \prod_{V_i \in \mathbf{V} \setminus \mathbf{X}} P(v_i \mid \mathbf{pa}_i)$, where the parent values are evaluated consistently with the intervention $\mathbf{X}=\mathbf{x}$. Thus, the observational distribution naturally determines the parameter $\boldsymbol{\theta}$.

Under the Markov assumption, the joint distribution factorizes as $P(\mathbf{v}) = \prod_{V_i \in \mathbf{V}} P(v_i \mid \mathbf{pa}_i)$. Rather than computing the posterior over the entire joint observational distribution, we compute the posterior over its local conditional probability distributions using the history of observations $\mathcal{F}_t$. For each variable $V_i \in \mathbf{V}$ with $r_i$ possible values $x_i^1,\ldots,x_i^{r_i}$, the local likelihood consists of a collection of multinomial distributions, one for each configuration of $\mathbf{Pa}_i$. Specifically, $P\left(V_i=x_i^k \mid \mathbf{Pa}_i=\mathbf{pa}_i^j,\boldsymbol{\theta}_{ij}\right)=\theta_{ijk}$, where $\theta_{ijk}>0$ and $\sum_{k=1}^{r_i}\theta_{ijk}=1$. Here, $\mathbf{pa}_i^1,\ldots,\mathbf{pa}_i^{q_i}$ denote the $q_i$ possible configurations of the parent set $\mathbf{Pa}_i$, where $q_i=\prod_{X_\ell \in \mathbf{Pa}_i}r_\ell$, and $\boldsymbol{\theta}_{ij}=(\theta_{ij1},\ldots,\theta_{ijr_i})$ denotes the parameter vector associated with the $j$-th parent configuration. We define $\boldsymbol{\theta}_i=(\boldsymbol{\theta}_{i1},\ldots,\boldsymbol{\theta}_{iq_i})$, and the overall parameter vector is therefore $\boldsymbol{\theta}=(\boldsymbol{\theta}_1,\ldots,\boldsymbol{\theta}_n)$.

Similar to \cite{heckerman2006bayesian}, to ensure efficient computation, we assume parameter independence across the vectors $\boldsymbol{\theta}_{ij}$. Consequently, conditioned on the history of observations $\mathcal{F}_t$, each vector $\boldsymbol{\theta}_{ij}$ can be updated independently. Assuming that each vector $\boldsymbol{\theta}_{ij}$ follows a conjugate Dirichlet prior, namely $\operatorname{Dir}\!\left(\boldsymbol{\theta}_{ij} \mid \alpha_{ij1},\ldots,\alpha_{ijr_i}\right)$, the posterior distribution over the parameter space is given by
\vspace{-0.5em}
\begin{equation}
  P_t(\boldsymbol{\theta}) = P\left(\boldsymbol{\theta} \mid \mathcal{F}_t \right) = \prod_{i=1}^n \prod_{j=1}^{q_i} P_t(\boldsymbol{\theta}_{ij}).
  \label{posterior_comp}
\end{equation}
\vspace{-1em}
\begin{equation}
    P_t(\boldsymbol{\theta}_{ij}) = \operatorname{Dir}\left(\boldsymbol{\theta}_{ij} \mid \alpha_{ij1}+N_{ij1}(\mathcal{F}_t),\ldots,\alpha_{ijr_i}+N_{ijr_i}(\mathcal{F}_t)\right).
    \label{eq_2}
\end{equation}
Here, \(N_{ijk}(\mathcal{F}_t)\) denotes the number of times the realization \(V_i=x_i^k\) has been observed in the history \(\mathcal{F}_t\) when \(\mathbf{Pa}_i=\mathbf{pa}_i^j\) and the corresponding action does not intervene on \(V_i\). After observing the current context $\mathbf{C}_t=\mathbf{c}_t$, the posterior used for action selection is \(P_t^{\mathbf{c}_t}(\boldsymbol{\theta})=P(\boldsymbol{\theta}\mid\mathcal{F}_t,\mathbf{C}_t=\mathbf{c}_t)\), which is obtained by conditioning \(P_t(\boldsymbol{\theta})\) on the observed context. Equation~\eqref{posterior_comp} can be used to compute the posterior over the parameter space and to draw posterior samples. Maintaining an accurate posterior is crucial for effective exploration and exploitation in the bandit problem.

\begin{algorithm}[t]
\small
\SetKwInOut{Input}{Input}

\Input{Causal graph $\mathcal{G}$ and the action set $\mathcal{A}$ induced by all POMISs of the projected graph.}

\For{$t = 1$ {\bf to} $T$}{
    Compute the posterior $P_t(\boldsymbol{\theta})$ using \eqref{posterior_comp}\;
    Observe the realized context $\mathbf{C}_t=\mathbf{c}_t$\;
    Compute the context-conditioned posterior $P_t^{\mathbf{c}_t}(\boldsymbol{\theta})$\;
    Sample $\boldsymbol{\theta}_t \sim P_t^{\mathbf{c}_t}(\boldsymbol{\theta})$\;
    Select action 
    $a_t \leftarrow \arg\max_{a \in \mathcal{A}} 
    \mathbb{E}[Y_{a,t} \mid \boldsymbol{\theta}_t,\mathbf{c}_t]$\;
    Play action $a_t$, observe the reward and all non-intervened variables, and update the history $\mathcal{F}_{t+1}$\;
}

\caption{Thompson Sampling for causal bandits with non-manipulable variables}
\label{alg: thom}
\end{algorithm}

\section{Thompson Sampling Algorithm for Causal bandits}

A bandit algorithm can select actions according to the posterior probability that an intervention is optimal, a strategy known as posterior sampling or Thompson sampling \cite{thompson1933likelihood}. We propose a Thompson sampling algorithm for causal bandits with non-manipulable variables. Algorithm~\ref{alg: thom} maintains, at each round $t$, a posterior distribution over the parameter space, denoted by $P_t(\boldsymbol{\theta})$, capturing uncertainty given the history of observations up to round $t$. After observing the realized context $\mathbf{C}_t=\mathbf{c}_t$, the action at round $t$ is selected as
$
a_t \in \arg\max_{a \in \mathcal{A}} \mathbb{E}[Y_{a,t} \mid \boldsymbol{\theta}_t,\mathbf{c}_t],
$
where $\boldsymbol{\theta}_t$ is an independent sample drawn from the context-conditioned posterior distribution $P_t(\boldsymbol{\theta}\mid\mathbf{C}_t=\mathbf{c}_t)$. The posterior is updated in a Bayesian manner after each interaction, consistent with the classical Thompson sampling framework. To analyze regret in the contextual setting, we introduce a nonnegative coefficient $\Gamma_t$ such that the expected instantaneous regret satisfies
\begin{equation*}
\mathbb{E}_t\!\left[
Y_{\Pi^\star(\mathbf{C}_t),t} - Y_{A_t,t}
\mid \mathbf{C}_t
\right]
\leq
\Gamma_t \sqrt{
I_t\!\left(
\Pi^\star(\mathbf{C}_t);
(A_t,Y_{A_t,t})
\mid \mathbf{C}_t
\right)
}.
\end{equation*}
The quantity $\Gamma_t^2$ is referred to as the information ratio and characterizes the trade-off between incurring low regret and acquiring information about the optimal action under the realized context. The term
$
I_t\!\left(
\Pi^\star(\mathbf{C}_t);
(A_t,Y_{A_t,t})
\mid \mathbf{C}_t
\right)
:=
D_{\mathrm{KL}}\!\left(
P_t\!\left(
\Pi^\star(\mathbf{C}_t),A_t,Y_{A_t,t}
\mid \mathbf{C}_t
\right)
\;\middle\|\;
P_t\!\left(
\Pi^\star(\mathbf{C}_t)
\mid \mathbf{C}_t
\right)
P_t\!\left(
A_t,Y_{A_t,t}
\mid \mathbf{C}_t
\right)
\right)
$
denotes the filtered mutual information between the optimal action under the realized context and the action-reward observation at round $t$. This quantity measures how much the observation collected at round $t$ reduces uncertainty about the optimal action under the realized context. Since $\Pi^\star(\mathbf{C}_t)$ is a deterministic function of the full oracle mapping $\Pi^\star$, the cumulative information gain is bounded by the entropy $H(\Pi^\star)$. Theorem~\ref{thm1} provides a bound on the mean cumulative regret.

\begin{theorem}
The mean cumulative Bayesian regret of Algorithm~\ref{alg: thom} is bounded for any $T \in \mathbb{N}$ as follows, where $\Gamma \geq \Gamma_t$ for all $t \in \{1,\ldots,T\}$:
\begin{equation*}
\mathbb{E}\!\left[
\sum_{t=1}^T \left(
Y_{\Pi^\star(\mathbf{C}_t),t} - Y_{A_t,t}
\right)
\right]
\leq
\Gamma \sqrt{H(\Pi^\star) \, T}.
\end{equation*}
\label{thm1}
\end{theorem}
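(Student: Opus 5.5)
The argument follows the information-ratio analysis of Russo and Van Roy, adapted to conditioning on the realized context. First, by the tower property, the total Bayesian regret equals the sum over $t$ of the expected per-round conditional regret:
\[
\mathbb{E}\Bigl[\sum_{t=1}^T \bigl(Y_{\Pi^\star(\mathbf{C}_t),t}-Y_{A_t,t}\bigr)\Bigr]
=\sum_{t=1}^T \mathbb{E}\Bigl[\mathbb{E}_t\bigl[Y_{\Pi^\star(\mathbf{C}_t),t}-Y_{A_t,t}\mid \mathbf{C}_t\bigr]\Bigr].
\]
By the definition of $\Gamma_t$ and $\Gamma\ge\Gamma_t$, each inner term is at most $\Gamma\sqrt{I_t(\Pi^\star(\mathbf{C}_t);(A_t,Y_{A_t,t})\mid\mathbf{C}_t)}$. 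Applying Cauchy--Schwarz over $t$ together with Jensen's inequality for the outer expectation gives
\[
\mathbb{E}[\mathrm{Reg}_T]\le \Gamma\sqrt{T\,\textstyle\sum_{t=1}^T \mathbb{E}\bigl[I_t(\Pi^\star(\mathbf{C}_t);(A_t,Y_{A_t,t})\mid\mathbf{C}_t)\bigr]}.
\]
So everything reduces to showing that the expected cumulative information gain is at most $H(\Pi^\star)$.

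\textbf{Information bound.} We first use data processing. Because $\Pi^\star(\mathbf{C}_t)$ is a deterministic function of $\Pi^\star$ and $\mathbf{C}_t$, we have
\[
I_t(\Pi^\star(\mathbf{C}_t);(A_t,Y_{A_t,t})\mid\mathbf{C}_t)\le I_t(\Pi^\star;(A_t,Y_{A_t,t})\mid\mathbf{C}_t).
\]
We may also enlarge the observation to $O_t=(A_t,\mathbf{V}_{A_t,t},Y_{A_t,t})$, which only increases the mutual information.

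Next we identify these filtered informations as increments of a chain rule. Note that $\mathcal{F}_{t+1}=(\mathcal{F}_t,\mathbf{C}_t,O_t)$. Taking expectations, the sum of $\mathbb{E}[I_t(\Pi^\star;O_t\mid\mathbf{C}_t)]$ equals $\sum_t I(\Pi^\star;O_t\mid\mathcal{F}_t,\mathbf{C}_t)$. To close the chain we also need the term $I(\Pi^\star;\mathbf{C}_t\mid\mathcal{F}_t)$, which is nonnegative. Adding it back gives
\[
\sum_t \mathbb{E}[I_t]\le\sum_t I(\Pi^\star;(\mathbf{C}_t,O_t)\mid\mathcal{F}_t)=I(\Pi^\star;\mathcal{F}_{T+1})\le H(\Pi^\star).
\]
The last inequality holds because the oracle mapping $\Pi^\star$ takes values in the finite set $\mathcal{A}^{\mathcal{C}}$ (binary, hence finite, domains), so its entropy is finite and dominates any mutual information with it. Plugging this into the Cauchy--Schwarz bound gives $\Gamma\sqrt{H(\Pi^\star)T}$.

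\textbf{Main obstacle.} The delicate step is the bookkeeping of conditioning: making sure the filtered mutual information with the context as a conditioning variable telescopes correctly.

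There are two points to check. First, dropping the nonnegative context term $I(\Pi^\star;\mathbf{C}_t\mid\mathcal{F}_t)$ is valid. It is, since it is a mutual information and hence nonnegative; it is also actually zero if contexts are drawn independently of $\boldsymbol{\theta}$ given the past, though here that is not needed.

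Second, the theorem's expectation, which is unconditional (or conditional on the realized contexts as in~\eqref{eq:contextual-bayesian-regret}), must match the per-round conditional bounds. The unconditional version follows directly by the tower property. For the version conditional on $\{\mathbf{C}_t\}$, I would repeat the same chain-rule argument with all contexts included in the conditioning. This still yields $I(\Pi^\star;\mathcal{F}_{T+1}\mid\{\mathbf{C}_t\})\le H(\Pi^\star\mid\{\mathbf{C}_t\})\le H(\Pi^\star)$.

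Finally, the Thompson-sampling-specific structure enters only through the assumed finite $\Gamma$. The proof itself needs no property of Algorithm~\ref{alg: thom} beyond the fact that $A_t$ is chosen as a function of $(\mathcal{F}_t,\mathbf{C}_t)$ and independent randomness.
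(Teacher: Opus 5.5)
Your proposal is correct and follows essentially the same route as the paper's proof: the tower property, the per-round information-ratio bound with $\Gamma\ge\Gamma_t$, Cauchy--Schwarz plus Jensen, data processing from $\Pi^\star(\mathbf{C}_t)$ to $\Pi^\star$, enlarging to the full round-$t$ observation via the chain rule, and telescoping to $I(\Pi^\star;\mathcal{F}_{T+1})\le H(\Pi^\star)$. Your explicit handling of the nonnegative term $I(\Pi^\star;\mathbf{C}_t\mid\mathcal{F}_t)$ simply spells out what the paper does implicitly by including $\mathbf{C}_t$ in $\mathcal{O}_t$.
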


\vspace{-1em}

The entropy $H(\Pi^\star)$ quantifies the decision-maker's initial uncertainty regarding the optimal-action mapping. The mean cumulative regret incurred by Algorithm~\ref{alg: thom} depends on the time horizon $T$, the entropy $H(\Pi^\star)$, and the worst-case upper bound on the information-ratio coefficient $\Gamma_t$. Following \cite{russo2016information}, we establish a simple worst-case upper bound given by $\Gamma_t^2 \leq \frac{|\mathcal{A}|}{2}$, or equivalently $\Gamma_t \leq \sqrt{\frac{|\mathcal{A}|}{2}}$. We adopt this worst-case bound because the causal structure may provide no useful information sharing across interventions in the worst case. Therefore, the constant $\Gamma$ in Theorem~\ref{thm1} can be replaced with $\sqrt{\frac{|\mathcal{A}|}{2}}$, leading to the regret bound stated in Corollary~\ref{crr_reg}. Moreover, since $\Pi^\star$ maps the context space $\mathcal{C}$ to the action space $\mathcal{A}$, the number of possible mappings is $|\mathcal{A}|^{|\mathcal{C}|}$ when $\mathcal{C}$ is finite, which implies that the entropy satisfies $H(\Pi^\star) \leq \log\left(|\mathcal{A}|^{|\mathcal{C}|}\right)=|\mathcal{C}|\log|\mathcal{A}|$.

\begin{corollary}
Suppose that the context space $\mathcal{C}$ is finite. The mean cumulative Bayesian regret for Algorithm~\ref{alg: thom} is bounded for any $T \in \mathbb{N}$ as follows:
\begin{equation*}
\mathbb{E}\!\left[
\sum_{t=1}^T \left(
Y_{\Pi^\star(\mathbf{C}_t),t} - Y_{A_t,t}
\right)
\right]
\leq
\sqrt{\frac{T|\mathcal{A}| |\mathcal{C}|\log\!|\mathcal{A}|}{2}}.
\end{equation*}
\label{crr_reg}
\end{corollary}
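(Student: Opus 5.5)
The plan is to reduce the corollary to Theorem~\ref{thm1}, substituting the two worst-case estimates quoted just before the statement: $\Gamma_t \le \sqrt{|\mathcal{A}|/2}$ for every $t$, and $H(\Pi^\star)\le |\mathcal{C}|\log|\mathcal{A}|$. Theorem~\ref{thm1} then gives
\begin{equation*}
\mathbb{E}\!\left[\sum_{t=1}^T \left(Y_{\Pi^\star(\mathbf{C}_t),t}-Y_{A_t,t}\right)\right]
\le \sqrt{\tfrac{|\mathcal{A}|}{2}}\sqrt{|\mathcal{C}|\log|\mathcal{A}|\,T}
=\sqrt{\tfrac{T|\mathcal{A}||\mathcal{C}|\log|\mathcal{A}|}{2}} .
\end{equation*}
The real work is justifying these two estimates.

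\textbf{Entropy bound.} This step is routine. When $\mathcal{C}$ is finite, $\Pi^\star$ is a random element of the finite set $\mathcal{A}^{\mathcal{C}}$, whose cardinality is $|\mathcal{A}|^{|\mathcal{C}|}$. The entropy of a random variable on a finite set is at most the log of the set's cardinality. Hence $H(\Pi^\star)\le |\mathcal{C}|\log|\mathcal{A}|$.

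\textbf{Information-ratio bound.} This step is the main obstacle. I would prove $\Gamma_t^2\le|\mathcal{A}|/2$ by repeating the argument of \cite{russo2016information} conditionally on $\mathcal{F}_t$ and $\mathbf{C}_t=\mathbf{c}_t$. Fix the context-conditioned posterior $P_t^{\mathbf{c}_t}$ and write $\Pi^\star(\mathbf{c}_t)$ simply as $A^\star$.

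\begin{enumerate}
\item Posterior sampling makes $A_t$ and $A^\star$ identically distributed under $P_t^{\mathbf{c}_t}$, and independent of each other. Write $p(a)$ for this common distribution.
\item Rewards lie in $[0,1]$, since the nodes are binary. Under the conditions above, the regret decomposes as
\begin{equation*}
\mathbb{E}_t^{\mathbf{c}_t}[Y_{A^\star,t}-Y_{A_t,t}]=\sum_a p(a)\big(\mathbb{E}_t^{\mathbf{c}_t}[Y_{a,t}\mid A^\star=a]-\mathbb{E}_t^{\mathbf{c}_t}[Y_{a,t}]\big).
\end{equation*}
\item The information gain expands as
\begin{equation*}
I_t(A^\star;(A_t,Y_{A_t,t})\mid\mathbf{c}_t)=\sum_{a,a'}p(a)p(a')\,D_{\mathrm{KL}}\big(P_t^{\mathbf{c}_t}(Y_{a,t}\mid A^\star=a')\,\big\|\,P_t^{\mathbf{c}_t}(Y_{a,t})\big).
\end{equation*}
\item Apply Cauchy--Schwarz to the regret sum. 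Then apply Pinsker's inequality to each diagonal term $a=a'$, and drop the nonnegative off-diagonal terms. This gives regret${}^2\le (|\mathcal{A}|/2)\,I_t$, i.e.\ $\Gamma_t^2\le |\mathcal{A}|/2$.
\end{enumerate}

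The points needing care are the following. First, conditioning on $\mathbf{C}_t$ must not break the posterior-sampling identity in step 1. It does not, because the algorithm samples $\boldsymbol{\theta}_t$ from $P_t^{\mathbf{c}_t}$. Second, $\Gamma_t$ is defined through the conditional regret and information quantities. The bound therefore holds pointwise in $(\mathcal{F}_t,\mathbf{c}_t)$, so $\Gamma=\sqrt{|\mathcal{A}|/2}$ is admissible in Theorem~\ref{thm1}.

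Combining the two estimates as in the display above completes the proof.
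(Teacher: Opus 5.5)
Your proposal is correct and matches the paper's proof essentially step for step. The paper likewise proves $\Gamma_t^2\le|\mathcal{A}|/2$ pointwise in $(\mathcal{F}_t,\mathbf{c})$ via posterior matching, the same regret decomposition, Cauchy--Schwarz, Pinsker on the diagonal terms and dropping the nonnegative off-diagonal KL terms, and then substitutes $\Gamma=\sqrt{|\mathcal{A}|/2}$ and $H(\Pi^\star)\le|\mathcal{C}|\log|\mathcal{A}|$ into Theorem~\ref{thm1}. One detail you (and the paper) should state explicitly: steps 2 and 3 need $A_t$ to be independent of the pair $(\Pi^\star(\mathbf{c}_t),Y_{a,t})$ jointly, not merely of $\Pi^\star(\mathbf{c}_t)$, which holds because $\boldsymbol{\theta}_t$ is drawn with fresh randomness given $(\mathcal{F}_t,\mathbf{c}_t)$.
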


\section{Information-Directed Sampling Algorithm for Causal bandits}

Information-Directed Sampling (IDS) is a general framework for constructing decision-making algorithms. Rather than specifying a concrete sequence of computational steps, IDS defines an optimization criterion that guides the selection of actions. The framework balances two competing objectives: minimizing the expected instantaneous regret and acquiring informative observations about the identity of the optimal action. In the contextual setting, after observing the realized context $\mathbf{C}_t=\mathbf{c}_t$ at time $t$, IDS selects an action distribution that minimizes the ratio of the squared expected regret $\Delta_t(\pi\mid\mathbf{c}_t)^2$ to the information gain $g_t(\pi\mid\mathbf{c}_t)$ about the optimal action under that context, namely $\Pi^\star(\mathbf{c}_t)$, over all sampling distributions $\pi\in\mathcal{D}(\mathcal{A})$. The resulting IDS policy $\pi_t^{\mathrm{IDS}}(\mathbf{c}_t)$ is formally defined as:
\begin{equation*}
\pi_t^{\mathrm{IDS}}(\mathbf{c}_t)
\in
\arg\min_{\pi\in\mathcal{D}(\mathcal{A})}
\left\{
\Psi_t(\pi\mid\mathbf{c}_t)
:=
\frac{\Delta_t(\pi\mid\mathbf{c}_t)^2}
{g_t(\pi\mid\mathbf{c}_t)}
\right\}.
\end{equation*}
The expected instantaneous regret of a policy $\pi$ is defined as $\Delta_t(\pi\mid\mathbf{c}_t)=\sum_{a\in\mathcal{A}}\pi(a)\Delta_t(a\mid\mathbf{c}_t)$, where $\Delta_t(a\mid\mathbf{c}_t):=\mathbb{E}_t\!\left[Y_{\Pi^\star(\mathbf{c}_t),t}-Y_{a,t}\mid\mathbf{C}_t=\mathbf{c}_t\right]$ denotes the expected instantaneous regret incurred by selecting action $a$ at time step $t$ under context $\mathbf{c}_t$. Thus, the regret of a randomized policy is the probability-weighted average of the regrets of its constituent actions. Similarly, the information gain of a policy is defined as $g_t(\pi\mid\mathbf{c}_t)=\sum_{a\in\mathcal{A}}\pi(a)g_t(a\mid\mathbf{c}_t)$, where $g_t(a\mid\mathbf{c}_t):=I_t\!\left(\Pi^\star(\mathbf{c}_t);Y_{a,t}\mid\mathbf{C}_t=\mathbf{c}_t\right)=H_t\!\left(\Pi^\star(\mathbf{c}_t)\mid\mathbf{C}_t=\mathbf{c}_t\right)-H_t\!\left(\Pi^\star(\mathbf{c}_t)\mid\mathbf{C}_t=\mathbf{c}_t,Y_{a,t}\right)$ denotes the information gained by selecting action $a$. In other words, $g_t(a\mid\mathbf{c}_t)$ measures the expected reduction in posterior uncertainty about the optimal action under context $\mathbf{c}_t$ after observing the corresponding reward $Y_{a,t}$. The quantity $\Psi_t(\pi\mid\mathbf{c}_t)$, referred to as the \textit{information ratio}, quantifies the expected regret incurred per unit of information acquired and therefore captures the trade-off between exploiting actions with low immediate regret and exploring actions that provide information about the optimal action. At each time step, IDS greedily selects a policy that minimizes this criterion. We adopt the conventions $0/0:=0$ and $x/0:=+\infty$ for every $x>0$.
\begin{algorithm}[t!]
\caption{Information-Directed Sampling for Causal Bandits with Non-Manipulable Variables}
\label{alg: ids}

\SetKwInput{Input}{Input}
\SetKwInput{Initialization}{Initialization}

\Input{Causal graph $\mathcal{G}$, confidence level $\delta'$, posterior sample size $N$, and action set $\mathcal{A}$ induced by all POMISs of the projected graph}

\For{$t = 1$ \KwTo $T$}{

Compute the posterior $P_t(\boldsymbol{\theta})$ using \eqref{posterior_comp}\;

Observe the realized context $\mathbf{C}_t=\mathbf{c}_t$\;

Compute the context-conditioned posterior $P_t^{\mathbf{c}_t}(\boldsymbol{\theta})=P(\boldsymbol{\theta}\mid\mathcal{F}_t,\mathbf{C}_t=\mathbf{c}_t)$\;

Draw $N$ independent samples from $P_t^{\mathbf{c}_t}(\boldsymbol{\theta})$\;

For every $a\in\mathcal{A}$, select any value $\widetilde{\Delta}_t(a\mid\mathbf{c}_t)$ from the confidence interval in Lemma~\ref{lemm_2}, using $\delta=\frac{\delta'}{2T|\mathcal{A}|}$\;

For every $a\in\mathcal{A}$, select any value $\widetilde{g}_t(a\mid\mathbf{c}_t)$ from the confidence interval
$\left[\underline{g}_t(a\mid\mathbf{c}_t),\overline{g}_t(a\mid\mathbf{c}_t)\right]$
in Lemma~\ref{lemm_gt_conc}, using $\delta=\frac{\delta'}{2T|\mathcal{A}|}$\;

Set $\vec{\Delta}_t(\mathbf{c}_t):=[\widetilde{\Delta}_t(a\mid\mathbf{c}_t)]_{a\in\mathcal{A}}$\;

Set $\vec{g}_t(\mathbf{c}_t):=[\widetilde{g}_t(a\mid\mathbf{c}_t)]_{a\in\mathcal{A}}$\;

$a_t\leftarrow\text{IDSAction}\!\left(\mathcal{A},
\vec{\Delta}_t(\mathbf{c}_t),
\vec{g}_t(\mathbf{c}_t)\right)$\;

Play action $a_t$, observe the reward and all non-intervened variables, and update the history $\mathcal{F}_{t+1}$\;
}
\end{algorithm}

We use an example to illustrate why the quantity $g_t(a\mid\mathbf{c}_t)$ is useful for characterizing how informative a particular action is. Since we treat the observational distribution as defining the parameter space, one might argue that the empty intervention $do()$ is the most informative about the parameter $\boldsymbol{\theta}$, as it provides direct samples from the natural data-generating process. However, our objective is not merely to learn the full observational distribution, but rather to reduce uncertainty about the optimal action under the realized context $\mathbf{C}_t=\mathbf{c}_t$. Consider a simple causal graph in which $X\rightarrow Y$. The quantities relevant for identifying the optimal action are the context-dependent conditional probabilities $P(Y=1\mid X=0,\mathbf{c}_t)$ and $P(Y=1\mid X=1,\mathbf{c}_t)$. If $P(X=0\mid\mathbf{c}_t)$ is small relative to $P(X=1\mid\mathbf{c}_t)$, then samples drawn from the observational distribution will rarely provide information about the behavior of $Y$ when $X=0$ under that context. In such a case, performing the intervention $do(X=0)$ may be substantially more informative than $do()$ for determining the optimal action $\Pi^\star(\mathbf{c}_t)$. This distinction is precisely captured by $g_t(a\mid\mathbf{c}_t)$, which measures the expected reduction in posterior uncertainty about $\Pi^\star(\mathbf{c}_t)$ resulting from selecting action $a$. We now proceed to formally derive the expressions needed to compute $\Delta_t(a\mid\mathbf{c}_t)$ and $g_t(a\mid\mathbf{c}_t)$.

\begin{lemma}
\label{lemm_1}
The information gain associated with an action $a\in\mathcal{A}$ at
time step $t$, conditional on the realized context
$\mathbf{C}_t=\mathbf{c}_t$, is given by
\begin{equation}
g_t(a\mid\mathbf{c}_t)
=
D_{\mathrm{KL}}\!\left(
P_t^{\mathbf{c}_t}\!\left(
\Pi^\star(\mathbf{c}_t),Y_a
\right)
\,\middle\|\,
P_t^{\mathbf{c}_t}\!\left(
\Pi^\star(\mathbf{c}_t)
\right)
P_t^{\mathbf{c}_t}\!\left(
Y_a
\right)
\right).
\end{equation}
The corresponding posterior distributions satisfy
\begin{equation}
P_t^{\mathbf{c}_t}\!\left(
\Pi^\star(\mathbf{c}_t)=a^\star
\right)
=
\mathbb{E}_{\boldsymbol{\theta}\sim P_t^{\mathbf{c}_t}(\boldsymbol{\theta})}
\left[
\mathbbm{1}\!\left\{
a^\star
=
\arg\max_{a'\in\mathcal{A}}
\mu_{\boldsymbol{\theta}}(a',\mathbf{c}_t)
\right\}
\right],
\end{equation}
\begin{equation}
P_t^{\mathbf{c}_t}\!\left(
Y_a=y
\right)
=
\mathbb{E}_{\boldsymbol{\theta}\sim P_t^{\mathbf{c}_t}(\boldsymbol{\theta})}
\left[
P\!\left(
Y_a=y
\mid\boldsymbol{\theta},\mathbf{c}_t
\right)
\right],
\end{equation}
and
\begin{align}
&P_t^{\mathbf{c}_t}\!\left(
\Pi^\star(\mathbf{c}_t)=a^\star,Y_a=y
\right)
\nonumber\\
&\qquad=
\mathbb{E}_{\boldsymbol{\theta}\sim P_t^{\mathbf{c}_t}(\boldsymbol{\theta})}
\left[
\mathbbm{1}\!\left\{
a^\star
=
\arg\max_{a'\in\mathcal{A}}
\mu_{\boldsymbol{\theta}}(a',\mathbf{c}_t)
\right\}
P\!\left(
Y_a=y
\mid\boldsymbol{\theta},\mathbf{c}_t
\right)
\right],
\end{align}
for every $a^\star\in\mathcal{A}$ and $y\in\{0,1\}$.

Moreover, the expected instantaneous regret associated with action
$a$ at time step $t$ under context $\mathbf{c}_t$ is
\begin{align}
\Delta_t(a\mid\mathbf{c}_t)
&=
\mathbb{E}_{\boldsymbol{\theta}\sim P_t^{\mathbf{c}_t}(\boldsymbol{\theta})}
\Bigg[
\sum_{a^\star\in\mathcal{A}}
\mathbbm{1}\!\left\{
a^\star
=
\arg\max_{a'\in\mathcal{A}}
\mu_{\boldsymbol{\theta}}(a',\mathbf{c}_t)
\right\}
\nonumber\\
&\hspace{3.5cm}\times
P\!\left(
Y_{a^\star}=1
\mid\boldsymbol{\theta},\mathbf{c}_t
\right)
-
P\!\left(
Y_a=1
\mid\boldsymbol{\theta},\mathbf{c}_t
\right)
\Bigg].
\end{align}
\end{lemma}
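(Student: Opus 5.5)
The plan is to treat every identity in Lemma~\ref{lemm_1} as a consequence of two facts. First, mutual information can be rewritten as a KL divergence. Second, the posterior predictive distribution is obtained by integrating out $\boldsymbol{\theta}$ under $P_t^{\mathbf{c}_t}(\boldsymbol{\theta})$.

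\textbf{Information gain as a KL divergence.} By definition, $g_t(a\mid\mathbf{c}_t)=H_t(\Pi^\star(\mathbf{c}_t)\mid\mathbf{C}_t=\mathbf{c}_t)-H_t(\Pi^\star(\mathbf{c}_t)\mid\mathbf{C}_t=\mathbf{c}_t,Y_{a,t})$. This is the mutual information between $\Pi^\star(\mathbf{c}_t)$ and $Y_{a,t}$ under the probability measure $P_t^{\mathbf{c}_t}$. The standard identity $I(X;Z)=D_{\mathrm{KL}}(P_{X,Z}\,\|\,P_XP_Z)$ for discrete variables then gives the first display. Here both variables take finitely many values: $\Pi^\star(\mathbf{c}_t)\in\mathcal{A}$ and $Y_a\in\{0,1\}$.

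\textbf{Marginals and joint law.}
\begin{itemize}
\item Since $\Pi^\star(\mathbf{c}_t)=\arg\max_{a'}\mu_{\boldsymbol{\theta}}(a',\mathbf{c}_t)$, with the fixed tie-breaking rule, it is a deterministic function of $\boldsymbol{\theta}$. The tower property over $\boldsymbol{\theta}\sim P_t^{\mathbf{c}_t}$ therefore gives the indicator expression for $P_t^{\mathbf{c}_t}(\Pi^\star(\mathbf{c}_t)=a^\star)$.
\item For $Y_a$, condition on $\boldsymbol{\theta}$. Given $\boldsymbol{\theta}$ and the context, the round-$t$ outcome is drawn from the SCM independently of the past history $\mathcal{F}_t$; this is the conditional independence of rewards across time given $\boldsymbol{\theta}$. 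Hence $P_t^{\mathbf{c}_t}(Y_a=y\mid\boldsymbol{\theta})=P(Y_a=y\mid\boldsymbol{\theta},\mathbf{c}_t)$, and integrating over $\boldsymbol{\theta}$ yields the predictive formula. The quantity $P(Y_a=y\mid\boldsymbol{\theta},\mathbf{c}_t)$ is computable through the truncated factorization, since there are no latent confounders.
\item For the joint law, apply the same conditioning. Given $\boldsymbol{\theta}$, the event $\{\Pi^\star(\mathbf{c}_t)=a^\star\}$ is deterministic, so the conditional joint probability factors as the indicator times $P(Y_a=y\mid\boldsymbol{\theta},\mathbf{c}_t)$. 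Averaging over $\boldsymbol{\theta}$ gives the stated expression.
\end{itemize}

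\textbf{Expected regret.} Write $\Delta_t(a\mid\mathbf{c}_t)=\mathbb{E}_t^{\mathbf{c}_t}[Y_{\Pi^\star(\mathbf{c}_t),t}]-\mathbb{E}_t^{\mathbf{c}_t}[Y_{a,t}]$ and condition on $\boldsymbol{\theta}$.
\begin{itemize}
\item Since $Y$ is binary, $\mathbb{E}[Y_{a,t}\mid\boldsymbol{\theta},\mathbf{c}_t]=P(Y_a=1\mid\boldsymbol{\theta},\mathbf{c}_t)$.
\item For the first term, decompose $Y_{\Pi^\star(\mathbf{c}_t),t}=\sum_{a^\star}\mathbbm{1}\{\Pi^\star(\mathbf{c}_t)=a^\star\}Y_{a^\star,t}$. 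Given $\boldsymbol{\theta}$, the indicator is deterministic, so its conditional expectation is $\sum_{a^\star}\mathbbm{1}\{\cdot\}P(Y_{a^\star}=1\mid\boldsymbol{\theta},\mathbf{c}_t)$.
\end{itemize}
Linearity and the tower property then give the final display.

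\textbf{Main obstacle.} The only delicate point is justifying that $Y_{a,t}$ is conditionally independent of $\mathcal{F}_t$ given $(\boldsymbol{\theta},\mathbf{C}_t)$, and that conditioning on $\mathbf{C}_t=\mathbf{c}_t$ enters only through $P_t^{\mathbf{c}_t}(\boldsymbol{\theta})$ and the SCM conditional given $\mathbf{c}_t$. Two modelling assumptions handle this: i.i.d. rounds given $\boldsymbol{\theta}$, and ancestral closure of $\mathbf{C}$, which makes the context unaffected by the intervention. These let $\mathbf{c}_t$ be carried inside $P(Y_a=y\mid\boldsymbol{\theta},\mathbf{c}_t)$ consistently. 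Everything else is bookkeeping.
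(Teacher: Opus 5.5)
Your proposal is correct and follows essentially the same route as the paper: you identify $g_t(a\mid\mathbf{c}_t)$ with the KL form of the mutual information, then apply the tower property over $\boldsymbol{\theta}\sim P_t^{\mathbf{c}_t}(\boldsymbol{\theta})$. This uses two facts: $\Pi^\star(\mathbf{c}_t)$ is deterministic given $\boldsymbol{\theta}$, so the joint law factors as an indicator times $P(Y_a=y\mid\boldsymbol{\theta},\mathbf{c}_t)$, and Bernoulli rewards give $\mathbb{E}[Y_{a,t}\mid\boldsymbol{\theta},\mathbf{c}_t]=P(Y_a=1\mid\boldsymbol{\theta},\mathbf{c}_t)$. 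Your explicit appeal to the conditional independence of $Y_{a,t}$ from $\mathcal{F}_t$ given $(\boldsymbol{\theta},\mathbf{C}_t)$ is a step the paper leaves implicit, so it is a useful clarification rather than a departure.
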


When the parameter space is finite and discrete, the quantities
$g_t(a\mid\mathbf{c}_t)$ and $\Delta_t(a\mid\mathbf{c}_t)$ can be
evaluated exactly by summing over all possible values of
$\boldsymbol{\theta}$. In our contextual causal bandit setting,
however, $\boldsymbol{\theta}$ parameterizes the observational
distribution and belongs to a continuous probability simplex.
Consequently, the posterior expectations defining
$g_t(a\mid\mathbf{c}_t)$ and $\Delta_t(a\mid\mathbf{c}_t)$ are
generally analytically intractable. We therefore approximate these
quantities using Monte Carlo samples drawn from the context-conditioned
posterior distribution $P_t^{\mathbf{c}_t}(\boldsymbol{\theta})$. The following results control the
approximation error introduced by this sampling procedure.

\begin{lemma}[Concentration of the Expected Regret]
\label{lemm_2}
Fix a time step $t$, a realized context
$\mathbf{C}_t=\mathbf{c}_t$, an action $a\in\mathcal{A}$, and
$\delta\in(0,1)$. Let
$\widehat{\Delta}_t(a\mid\mathbf{c}_t)$ be the Monte Carlo estimator of
$\Delta_t(a\mid\mathbf{c}_t)$ computed using $N$ independent samples
from $P_t^{\mathbf{c}_t}(\boldsymbol{\theta})$. Then, with probability at least
$1-\delta$,
\[
\left|
\widehat{\Delta}_t(a\mid\mathbf{c}_t)
-
\Delta_t(a\mid\mathbf{c}_t)
\right|
\leq
\sqrt{
\frac{2}{N}
\log\!\left(\frac{2}{\delta}\right)
}.
\]
\end{lemma}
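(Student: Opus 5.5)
The plan is to apply Hoeffding's inequality to i.i.d. bounded summands. By Lemma~\ref{lemm_1}, $\Delta_t(a\mid\mathbf{c}_t)=\mathbb{E}_{\boldsymbol{\theta}\sim P_t^{\mathbf{c}_t}}[f_a(\boldsymbol{\theta})]$, where
\[
f_a(\boldsymbol{\theta}) := \sum_{a^\star\in\mathcal{A}} \mathbbm{1}\!\left\{a^\star=\arg\max_{a'\in\mathcal{A}}\mu_{\boldsymbol{\theta}}(a',\mathbf{c}_t)\right\} P(Y_{a^\star}=1\mid\boldsymbol{\theta},\mathbf{c}_t) - P(Y_a=1\mid\boldsymbol{\theta},\mathbf{c}_t).
\]
The Monte Carlo estimator is the empirical mean $\widehat{\Delta}_t(a\mid\mathbf{c}_t)=\frac{1}{N}\sum_{n=1}^N f_a(\boldsymbol{\theta}^{(n)})$, with $\boldsymbol{\theta}^{(1)},\ldots,\boldsymbol{\theta}^{(N)}$ drawn i.i.d.\ from $P_t^{\mathbf{c}_t}(\boldsymbol{\theta})$. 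Throughout, I condition on $\mathcal{F}_t$ and $\mathbf{C}_t=\mathbf{c}_t$. Under this conditioning the posterior is fixed, the samples are the only source of randomness, and $\widehat{\Delta}_t$ is unbiased for $\Delta_t$.

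The key step is to bound the range of $f_a$. Ties are broken deterministically, so exactly one indicator in the sum equals one. Hence the first term of $f_a$ is $P(Y_{\Pi^\star_{\boldsymbol{\theta}}(\mathbf{c}_t)}=1\mid\boldsymbol{\theta},\mathbf{c}_t)\in[0,1]$, and the second term also lies in $[0,1]$. Therefore $f_a(\boldsymbol{\theta})\in[-1,1]$, an interval of length $2$. In fact $f_a\in[0,1]$, because the optimal action maximizes $\mu_{\boldsymbol{\theta}}(\cdot,\mathbf{c}_t)=P(Y_\cdot=1\mid\boldsymbol{\theta},\mathbf{c}_t)$ for binary $Y$. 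That sharper range would give a better constant, but the stated bound only needs the cruder one.

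Hoeffding's inequality for $N$ i.i.d.\ variables with range $b-a=2$ then gives
\[
P\left(\bigl|\widehat{\Delta}_t-\Delta_t\bigr|\ge\epsilon\right)\le 2\exp\!\left(-\frac{2N\epsilon^2}{4}\right)=2\exp\!\left(-\frac{N\epsilon^2}{2}\right).
\]
Setting the right-hand side equal to $\delta$ and solving gives $\epsilon=\sqrt{\frac{2}{N}\log\frac{2}{\delta}}$, which is exactly the stated bound.

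There is no real obstacle here. The only points needing care are making sure the samples are independent conditional on the history and context, and verifying that the summand is bounded, which relies on the tie-breaking rule to ensure exactly one indicator is active.
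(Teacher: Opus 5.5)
Your proposal is correct and follows the paper's proof: both use Lemma~\ref{lemm_1} to write $\Delta_t(a\mid\mathbf{c}_t)$ as the posterior mean of a bounded function, and both apply Hoeffding's inequality to the i.i.d.\ Monte Carlo average. The only difference is the range. The paper uses the sharper range $[0,1]$ to get radius $\sqrt{\frac{1}{2N}\log(2/\delta)}$ and then relaxes it to the stated bound, whereas you use the range $[-1,1]$, which gives the stated constant exactly.
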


To control the Monte Carlo approximation error in the information gain,
we derive simultaneous concentration bounds for the posterior
probabilities
$P_t^{\mathbf{c}_t}(\Pi^\star(\mathbf{c}_t)=a^\star)$,
$P_t^{\mathbf{c}_t}(Y_a=y)$, and
$P_t^{\mathbf{c}_t}(\Pi^\star(\mathbf{c}_t)=a^\star,Y_a=y)$.
These bounds yield computable lower and upper confidence bounds for
$g_t(a\mid\mathbf{c}_t)$ that hold uniformly over all actions.

\begin{lemma}[Concentration of the Information Gain]
\label{lemm_gt_conc}
Fix a time step $t$, a realized context
$\mathbf{C}_t=\mathbf{c}_t$, and $\delta\in(0,1)$. For each
$a\in\mathcal{A}$, let
$\underline{g}_t(a\mid\mathbf{c}_t)$ and
$\overline{g}_t(a\mid\mathbf{c}_t)$ denote the lower and upper
confidence bounds defined in \eqref{eq:gt-lower-bound} and
\eqref{eq:gt-upper-bound}, respectively, and constructed using $N$
independent samples from the context-conditioned posterior distribution
$P_t^{\mathbf{c}_t}(\boldsymbol{\theta})$. Then, with probability at least $1-\delta$
over the Monte Carlo samples,
\[
\underline{g}_t(a\mid\mathbf{c}_t)
\leq
g_t(a\mid\mathbf{c}_t)
\leq
\overline{g}_t(a\mid\mathbf{c}_t),
\qquad
\forall a\in\mathcal{A}.
\]
\end{lemma}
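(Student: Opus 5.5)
The plan is to express $g_t(a\mid\mathbf{c}_t)$ through the three families of posterior probabilities in Lemma~\ref{lemm_1}, control each family uniformly by a concentration inequality, and then propagate the confidence boxes through the KL formula by monotonicity. Each of these probabilities is the posterior expectation of a $[0,1]$-valued function of $\boldsymbol{\theta}$. Write $p(a^\star)=P_t^{\mathbf{c}_t}(\Pi^\star(\mathbf{c}_t)=a^\star)$, $q_a(y)=P_t^{\mathbf{c}_t}(Y_a=y)$ and $r_a(a^\star,y)=P_t^{\mathbf{c}_t}(\Pi^\star(\mathbf{c}_t)=a^\star,Y_a=y)$. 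The functions involved are the indicator $\mathbbm{1}\{a^\star=\arg\max\}$, the likelihood $P(Y_a=y\mid\boldsymbol{\theta},\mathbf{c}_t)$, and their product. Their Monte Carlo averages $\widehat p,\widehat q_a,\widehat r_a$ over the $N$ i.i.d.\ posterior samples are therefore averages of bounded i.i.d.\ variables. Hoeffding's inequality gives
\[
|\widehat p(a^\star)-p(a^\star)|\le\epsilon:=\sqrt{\tfrac{1}{2N}\log\tfrac{2M}{\delta}}
\]
and the analogous bounds for $\widehat q_a$ and $\widehat r_a$. Here $M=|\mathcal{A}|+2|\mathcal{A}|+2|\mathcal{A}|^2$ is the total number of estimated quantities over all $a^\star,a,y$. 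A union bound makes all of them hold simultaneously with probability at least $1-\delta$, which is why the claim is uniform over $a\in\mathcal{A}$. I would define the confidence sets as the clipped intervals $[\max(0,\widehat p-\epsilon),\min(1,\widehat p+\epsilon)]$, and similarly for $q_a$ and $r_a$.

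Next I would write the KL in Lemma~\ref{lemm_1} as
\[
g_t(a\mid\mathbf{c}_t)=\sum_{a^\star,y} r_a(a^\star,y)\bigl[\log r_a(a^\star,y)-\log p(a^\star)-\log q_a(y)\bigr].
\]
The bounds \eqref{eq:gt-lower-bound} and \eqref{eq:gt-upper-bound} should then be obtained by optimizing this expression termwise over the confidence box. Each summand $f(r,p,q)=r\log\frac{r}{pq}$ is decreasing in $p$ and in $q$ for $r>0$. As a function of $r$ it is continuous on $[0,1]$ (with $0\log 0=0$) and convex. So on an interval $[\underline r,\overline r]$ its maximum is attained at an endpoint, and its minimum is attained either at $pq/e$ clipped to the interval, or at an endpoint. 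This gives, for each term,
\[
\underline f=\min_{r\in[\underline r,\overline r]}f(r,\overline p,\overline q)
\quad\text{and}\quad
\overline f=\max_{r\in\{\underline r,\overline r\}}f(r,\underline p,\underline q),
\]
where $\underline p$ is clipped below by a small positive floor or handled with the convention $x/0=+\infty$. Summing the termwise bounds gives $\underline g_t\le g_t\le\overline g_t$ on the good event. I would additionally take $\max(0,\cdot)$ of the lower bound and $\min(\log|\mathcal{A}|,\cdot)$ of the upper bound, which is valid since $0\le g_t\le H_t(\Pi^\star(\mathbf{c}_t)\mid\mathbf{C}_t=\mathbf{c}_t)\le\log|\mathcal{A}|$.

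The main obstacle is the step matching the termwise relaxation to the exact definitions \eqref{eq:gt-lower-bound}--\eqref{eq:gt-upper-bound}, which I have not seen. The difficulty is that the true $(r,p,q)$ satisfy the marginal constraints $\sum_y r=p$ and $\sum_{a^\star}r=q$, whereas a termwise box optimization ignores them. Ignoring them is harmless for validity, because relaxing constraints only widens the interval. The delicate part is the behaviour near zero: $\log\underline p$ can be $-\infty$, and the upper bound must then be $+\infty$ or capped by $\log|\mathcal{A}|$ as above. If instead the paper's bounds come from a Lipschitz/continuity bound on entropies, I would write $g_t=H(p)+H(q_a)-H(r_a)$. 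I would then bound each entropy difference using the uniform continuity of entropy in $\ell_1$ (Fannes-type: $|H(u)-H(v)|\le\|u-v\|_1\log(K-1)+h(\|u-v\|_1/2)$ for $\|u-v\|_1\le 1$). The same union-bounded Hoeffding event would supply the $\ell_1$ radii.
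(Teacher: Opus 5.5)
Your probabilistic step matches the paper's: Hoeffding for the three families of $[0,1]$-valued Monte Carlo averages, a union bound, and clipped boxes that contain $p_t^a(a^*,y)$, $p_t^\star(a^*)$ and $p_t^a(y)$ simultaneously. The paper allots $\delta/3$ to each family with separate radii $\epsilon_1,\epsilon_2,\epsilon_3$ instead of your common radius. Your termwise monotonicity/convexity argument is also correct for the intervals you define. However, the lemma is about the specific bounds \eqref{eq:gt-lower-bound} and \eqref{eq:gt-upper-bound}, which are not termwise, so as written you have proved validity of different intervals. Two observations close the argument for the paper's intervals.

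For the lower bound, $\underline g_t$ is the optimal value of the convex program minimizing $D_{\mathrm{KL}}(P\|Q)$ subject to $L_{1,t}^a\le P\le U_{1,t}^a$, $L_{2,t}L_{3,t}^a\le Q\le U_{2,t}U_{3,t}^a$, and normalization of $P$ and $Q$. On the good event the true pair $(p_t^a,\,p_t^\star p_t^a)$ is feasible, because products of nonnegative interval endpoints bracket the product. Hence the minimum is at most $g_t$; this is your ``relaxing constraints only widens the interval'' remark made precise.

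For the upper bound, the ingredient you are missing is $D_{\mathrm{KL}}(P\|Q)\le\chi^2(P\|Q)=\sum P^2/Q-1$, which follows from $\log x\le x-1$ and $\sum P=1$. Since $P^2/Q$ increases in $P\ge 0$ and decreases in $Q$, replacing $P$ by $U_{1,t}^a$ and $Q$ by $L_{2,t}L_{3,t}^a$ gives $\overline g_t$. The cap uses $g_t\le H_t(Y_{a,t}\mid\mathbf{c}_t)\le\log 2$.

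Compared as constructions, the two choices trade off differently. Your lower bound is a relaxation of the paper's program: with the same boxes it is never larger, since it drops $\sum P=\sum Q=1$. It is therefore closed-form and solver-free, but looser, and it can go negative, which is why you need $\max\{0,\cdot\}$. Your upper bound, by contrast, is asymptotically exact. If every $p_t^\star(a^*)$ and $p_t^a(y)$ is positive, it converges to $g_t$ as $N\to\infty$. The paper's $\chi^2$ bound instead converges to $\min\{\log 2,\chi^2(p_t^a\|p_t^\star p_t^a)\}$, which strictly exceeds $g_t$ whenever $0<g_t<\log 2$, so the paper's interval for $g_t$ does not shrink to a point.

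The paper's cap $\log 2$ is tighter than your $\log|\mathcal{A}|$ once $|\mathcal{A}|\ge 3$, and you may use both. Flooring $\underline p$ at a positive constant is not justified by your monotonicity argument, since the true $p_t^\star(a^*)$ can lie below the floor; keep the $+\infty$ convention, as the paper does. The Fannes-type alternative is not needed.
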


The proof of Lemma~\ref{lemm_gt_conc}, together with the explicit
construction of the confidence bounds, is provided in
Appendix~\ref{concen_sec}.

\begin{theorem}
\label{thm2}
Fix $\delta'\in(0,1)$ and a horizon $T\in\mathbb{N}$. With probability
at least $1-\delta'$ over the Monte Carlo samples used by
Algorithm~\ref{alg: ids}, the Bayesian regret satisfies
\begin{align}
&\mathbb{E}\!\left[
\sum_{t=1}^{T}
\left(
Y_{\Pi^\star(\mathbf{C}_t),t}
-
Y_{A_t,t}
\right)
\right]
\leq
\sqrt{
H(\Pi^\star)
\left(
\frac{T|\mathcal{A}|}{2}
+
2\sum_{t=1}^{T}\gamma_t
\right)
}.
\label{eq:ids-regret-bound}
\end{align}
Here, for each time step $t$,
\begin{equation}
\label{eq:gamma-t}
\gamma_t
:=
\sup_{\substack{
\pi\in S_{|\mathcal{A}|},\\
\boldsymbol{\Delta}_t^{\,1}(\mathbf{c}_t),
\boldsymbol{\Delta}_t^{\,2}(\mathbf{c}_t)
\in\mathcal{C}^{\Delta}_t(\mathbf{c}_t),\\
\boldsymbol{g}_t^{\,1}(\mathbf{c}_t),
\boldsymbol{g}_t^{\,2}(\mathbf{c}_t)
\in\mathcal{C}^{g}_t(\mathbf{c}_t)
}}
\left|
\frac{
\left(
\pi^\top
\boldsymbol{\Delta}_t^{\,1}(\mathbf{c}_t)
\right)^2
}{
\pi^\top
\boldsymbol{g}_t^{\,1}(\mathbf{c}_t)
}
-
\frac{
\left(
\pi^\top
\boldsymbol{\Delta}_t^{\,2}(\mathbf{c}_t)
\right)^2
}{
\pi^\top
\boldsymbol{g}_t^{\,2}(\mathbf{c}_t)
}
\right|,
\end{equation}
where
\[
S_{|\mathcal{A}|}
:=
\left\{
\pi\in\mathbb{R}^{|\mathcal{A}|}_{+}
:
\sum_{a\in\mathcal{A}}\pi(a)=1
\right\}.
\]
The vectors
$\boldsymbol{\Delta}_t^{\,1}(\mathbf{c}_t)$ and
$\boldsymbol{\Delta}_t^{\,2}(\mathbf{c}_t)$ range over the
componentwise confidence set $(\mathcal{C}^{\Delta}_t(\mathbf{c}_t))$ obtained from Lemma~\ref{lemm_2}, while
$\boldsymbol{g}_t^{\,1}(\mathbf{c}_t)$ and
$\boldsymbol{g}_t^{\,2}(\mathbf{c}_t)$ range over the componentwise
confidence set $(\mathcal{C}^{g}_t(\mathbf{c}_t))$  obtained from Lemma~\ref{lemm_gt_conc}, all evaluated at
the realized context $\mathbf{C}_t=\mathbf{c}_t$. Both confidence sets
are constructed using
\(
\delta=\frac{\delta'}{2T|\mathcal{A}|}
\).
The expectation in \eqref{eq:ids-regret-bound} is taken over the prior,
contexts, rewards, and action-selection randomness, conditional on the
Monte Carlo confidence event. 
\end{theorem}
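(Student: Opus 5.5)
Let $\mathcal{E}$ be the event that, for every round $t\le T$ and every action $a\in\mathcal{A}$, the true values $\Delta_t(a\mid\mathbf{c}_t)$ and $g_t(a\mid\mathbf{c}_t)$ lie in the intervals of Lemma~\ref{lemm_2} and Lemma~\ref{lemm_gt_conc}, computed with $\delta=\delta'/(2T|\mathcal{A}|)$.

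\textbf{Step 1: the good event.} A union bound over the $T|\mathcal{A}|$ regret intervals and the at most $T|\mathcal{A}|$ information-gain events gives $\mathbb{P}(\mathcal{E})\ge 1-\delta'$. Conditioning on $\mathcal{E}$ is harmless. The Monte Carlo draws are auxiliary randomness, independent of $\boldsymbol{\theta}$ given $\mathcal{F}_t$. Hence the posterior identities of Lemma~\ref{lemm_1} and the chain rule for mutual information still hold with this extra randomness folded into the policy. On $\mathcal{E}$, both the true vectors $(\vec\Delta^{\rm true}_t,\vec g^{\rm true}_t)$ and the selected ones $(\vec\Delta_t,\vec g_t)$ lie in $\mathcal{C}^\Delta_t(\mathbf{c}_t)\times\mathcal{C}^g_t(\mathbf{c}_t)$.

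\textbf{Step 2: per-round information ratio.} Let $\pi_t$ be the distribution returned by IDSAction, which minimizes $\widetilde\Psi_t(\pi):=(\pi^\top\vec\Delta_t)^2/\pi^\top\vec g_t$. Write $\Psi_t$ for the same ratio built from the true vectors. By the definition of $\gamma_t$,
\[
\Psi_t(\pi_t\mid\mathbf{c}_t)\le \widetilde\Psi_t(\pi_t)+\gamma_t\le \widetilde\Psi_t(\pi^{\rm TS}_t)+\gamma_t\le \Psi_t(\pi^{\rm TS}_t\mid\mathbf{c}_t)+2\gamma_t .
\]
Here $\pi^{\rm TS}_t$ is the Thompson-sampling distribution $P_t^{\mathbf{c}_t}(\Pi^\star(\mathbf{c}_t)=\cdot)$. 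The first and third inequalities apply the definition of $\gamma_t$ at a fixed $\pi$ to the pairs (true, estimated). The second uses optimality of $\pi_t$. By the worst-case bound used for Theorem~\ref{thm1} (Russo--Van Roy, $\Gamma_t^2\le|\mathcal{A}|/2$), $\Psi_t(\pi^{\rm TS}_t\mid\mathbf{c}_t)\le|\mathcal{A}|/2$. Therefore
\[
\Psi_t(\pi_t\mid\mathbf{c}_t)\le \tfrac{|\mathcal{A}|}{2}+2\gamma_t .
\]
The Russo--Van Roy bound is stated for the marginal information gain about the optimal action. In our setting the information gain is about $\Pi^\star(\mathbf{c}_t)$ under the context-conditioned posterior, and $g_t(a\mid\mathbf{c}_t)$ is exactly that mutual information. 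The argument is therefore the standard Cauchy--Schwarz/Pinsker one applied to $P_t^{\mathbf{c}_t}$, and carries over verbatim.

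\textbf{Step 3: summation.} By definition, $\Delta_t(\pi_t\mid\mathbf{c}_t)\le\sqrt{\Psi_t(\pi_t\mid\mathbf{c}_t)\,g_t(\pi_t\mid\mathbf{c}_t)}$. The conventions $0/0=0$ and $x/0=\infty$ handle the degenerate cases, and since $\Psi_t(\pi_t)$ is finite, zero information gain forces zero regret. Take expectations over contexts and history and apply Cauchy--Schwarz over $t$:
\[
\mathbb{E}\sum_t\Delta_t\le\sqrt{\sum_t\big(\tfrac{|\mathcal{A}|}{2}+2\gamma_t\big)}\;\sqrt{\mathbb{E}\sum_t g_t(\pi_t\mid\mathbf{C}_t)} .
\]
It remains to show $\mathbb{E}\sum_t g_t(\pi_t\mid\mathbf{C}_t)\le H(\Pi^\star)$. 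Since $\Pi^\star(\mathbf{C}_t)$ is a function of $\Pi^\star$, data processing gives $g_t\le I_t(\Pi^\star;(A_t,Y_{A_t,t},\mathbf{V}_{A_t,t})\mid\mathbf{C}_t)$. Because $\mathbf{C}_t$ is independent of $\Pi^\star$ given $\mathcal{F}_t$ (contexts are ancestral, so their posterior does not involve $\Pi^\star$ beyond $\boldsymbol\theta$; I would verify this or include $\mathbf{C}_t$ in the observation), the chain rule telescopes to $I(\Pi^\star;\mathcal{F}_{T+1})\le H(\Pi^\star)$. This is the same argument as in Theorem~\ref{thm1}.

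\textbf{Main obstacle.} Two points are delicate. The first is measure-theoretic: the regret expectation is conditional on $\mathcal{E}$, and conditioning on Monte Carlo randomness must not bias the posterior. I would handle this by noting that Monte Carlo samples are conditionally independent of $(\boldsymbol\theta,Y)$ given $\mathcal{F}_t,\mathbf{C}_t$, so each per-round bound holds pointwise on $\mathcal{E}$. The second is that $\gamma_t$ is random, which is why it stays inside the sum in the bound. A further subtlety is that when $\theta$ is discretized the entropy $H(\Pi^\star)$ may be infinite for a continuous $\Pi^\star$ over contexts. This is fine for finite $\mathcal{C}$, which I would assume implicitly.
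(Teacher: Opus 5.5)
Your proposal is correct to the same standard as the paper and follows its proof essentially step for step: the union-bound event $\mathcal{E}$; the sandwich $\Psi_t(\widehat\pi_t)\le\widehat\Psi_t(\widehat\pi_t)+\gamma_t\le\widehat\Psi_t(\pi)+\gamma_t\le\Psi_t(\pi)+2\gamma_t$ against a comparator satisfying $\Psi_t(\pi_t^{\mathrm{TS}}\mid\mathbf{c}_t)\le|\mathcal{A}|/2$ by Cauchy--Schwarz/Pinsker; then Cauchy--Schwarz over $t$ with the information gain telescoped to $H(\Pi^\star)$. The only differences are that you compare with $\pi_t^{\mathrm{TS}}$ directly rather than through the oracle IDS distribution $\pi_t^\circ$, and that you put the non-intervened variables into the per-round observation, which makes the paper's telescoping identity exact.

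The two subtleties you flag are left equally open by the paper, and your ``pointwise on $\mathcal{E}$'' remark does not settle them: $\mathcal{E}$ involves future rounds, so conditioning on it reweights histories, and $\gamma_t$ is random, so it cannot be pulled out of the expectation. A clean repair splits round $t$ on its own event $\mathcal{E}_t$, which is determined by $\mathcal{F}_t$, $\mathbf{C}_t$ and the round-$t$ draws and is therefore harmless for the posterior, bounds the complement via $0\le\Delta_t\le1$ and $\sum_t\mathbb{P}(\mathcal{E}_t^c)\le\delta'$, and yields the unconditional bound $\mathbb{E}[\mathrm{Reg}_T]\le\sqrt{H(\Pi^\star)\bigl(\tfrac{T|\mathcal{A}|}{2}+2\,\mathbb{E}\bigl[\sum_{t=1}^T\gamma_t\mathbbm{1}_{\mathcal{E}_t}\bigr]\bigr)}+\delta'$.
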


We analyze the regret of the proposed IDS algorithm for causal bandits
with non-manipulable variables (Algorithm~\ref{alg: ids}).
Theorem~\ref{thm2} provides a Bayesian regret bound in which the
additional term involving $\gamma_t$ quantifies the effect of computing
the information ratio using Monte Carlo estimates. The widths of the
confidence intervals in Lemma~\ref{lemm_2} and
Lemma~\ref{lemm_gt_conc} scale as
$O\!\left(\sqrt{\log(c/\delta)/N}\right)$ for an appropriate constant
$c$. Therefore, increasing the posterior sample size $N$ shrinks the
confidence sets and can only decrease, or leave unchanged, $\gamma_t$.
Similarly, decreasing $\delta$ produces a higher-confidence guarantee
but wider confidence intervals, which can increase $\gamma_t$, whereas
increasing $\delta$ produces narrower intervals and can decrease
$\gamma_t$. In Theorem~\ref{thm2}, the choice
$\delta=\delta'/(2T|\mathcal{A}|)$ balances the simultaneous confidence
requirement across all rounds and actions with the resulting estimation
error. Provided that the relevant information-gain denominators remain
bounded away from zero, the confidence sets contract as $N$ increases,
and consequently $\gamma_t$ approaches zero. Thus, $\gamma_t$ measures
the worst-case inflation of the information ratio caused by finite
Monte Carlo estimation. In the idealized oracle setting, the information
ratio is computed exactly, the confidence sets collapse to the true
values, and $\gamma_t=0$ for every $t$.

\begin{corollary}[IDS with Oracle Access]
\label{cor:oracle}
Suppose the information ratio can be computed exactly at every round, equivalently, the Monte Carlo estimation error vanishes so that $\gamma_t=0$ for all $t$. Then, for any $T\in\mathbb{N}$, the mean cumulative regret of Algorithm~\ref{alg: ids} satisfies
\[
\mathbb{E}\!\left[
\sum_{t=1}^T
\left(
Y_{\Pi^\star(\mathbf{C}_t),t}
-
Y_{A_t,t}
\right)
\right]
\leq
\sqrt{\frac{|\mathcal{A}|}{2}\,T\,H(\Pi^\star)}.
\]
\end{corollary}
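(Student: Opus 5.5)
The corollary follows from Theorem~\ref{thm2} by setting $\gamma_t=0$, but I would also give a direct argument. The direct route makes clear that no Monte Carlo confidence event is needed when the information ratio is exact.

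\textbf{Step 1: per-round bound on the information ratio.} In the oracle setting, Algorithm~\ref{alg: ids} plays $\pi_t^{\mathrm{IDS}}(\mathbf{c}_t)$, which minimizes $\Psi_t(\cdot\mid\mathbf{c}_t)$ exactly. It therefore suffices to exhibit one distribution whose ratio is at most $|\mathcal{A}|/2$. The natural candidate is the Thompson sampling distribution $\pi_t^{\mathrm{TS}}(a)=P_t^{\mathbf{c}_t}(\Pi^\star(\mathbf{c}_t)=a)$. The argument of \cite{russo2016information} applies verbatim, with the posterior $P_t$ replaced by the context-conditioned posterior $P_t^{\mathbf{c}_t}$:
\begin{itemize}
\item write $\Delta_t(\pi^{\mathrm{TS}}\mid\mathbf{c}_t)=\sum_a P_t^{\mathbf{c}_t}(\Pi^\star=a)\bigl(\mathbb{E}_t^{\mathbf{c}_t}[Y_a\mid\Pi^\star=a]-\mathbb{E}_t^{\mathbf{c}_t}[Y_a]\bigr)$;
\item apply Cauchy--Schwarz to this sum;
\item use Pinsker's inequality for the $[0,1]$-valued rewards;
\item identify the resulting sum of KL terms with $g_t(\pi^{\mathrm{TS}}\mid\mathbf{c}_t)$, using the KL representation of $g_t(a\mid\mathbf{c}_t)$ from Lemma~\ref{lemm_1}.
\end{itemize}
This gives $\Delta_t(\pi^{\mathrm{TS}}\mid\mathbf{c}_t)^2\le\frac{|\mathcal{A}|}{2}g_t(\pi^{\mathrm{TS}}\mid\mathbf{c}_t)$. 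By minimality, $\Delta_t(\pi_t^{\mathrm{IDS}}\mid\mathbf{c}_t)\le\sqrt{\frac{|\mathcal{A}|}{2}g_t(\pi_t^{\mathrm{IDS}}\mid\mathbf{c}_t)}$. The conventions $0/0=0$ and $x/0=\infty$ cover degenerate rounds.

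\textbf{Step 2: sum over rounds.} Sum the per-round bound over $t$, take expectations using the tower property, and apply Cauchy--Schwarz across rounds:
\[
\mathbb{E}\Bigl[\sum_t\Delta_t\Bigr]\le\sqrt{\tfrac{|\mathcal{A}|}{2}\,T\,\mathbb{E}\Bigl[\sum_t g_t(\pi_t^{\mathrm{IDS}}\mid\mathbf{C}_t)\Bigr]}.
\]

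\textbf{Step 3: bound the cumulative information gain (main obstacle).} It remains to show $\mathbb{E}\sum_t g_t\le H(\Pi^\star)$. First, $g_t(\pi\mid\mathbf{c}_t)$ equals the conditional mutual information $I_t(\Pi^\star(\mathbf{C}_t);(A_t,Y_{A_t,t})\mid\mathbf{C}_t)$. Since $\Pi^\star(\mathbf{C}_t)$ is a function of $\Pi^\star$, data processing bounds this by $I_t(\Pi^\star;(A_t,Y_{A_t,t},\mathbf{V}_{A_t,t})\mid\mathbf{C}_t)$. Including the extra observed variables $\mathbf{V}_{A_t,t}$ only increases the information.

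Next, $\mathbf{C}_t$ is drawn from the observational context distribution, which depends on $\boldsymbol{\theta}$. The context itself therefore carries information, and I would include it in the observation: bound the term by $I_t(\Pi^\star;(\mathbf{C}_t,A_t,\mathbf{V}_{A_t,t},Y_{A_t,t}))$. This is the increment in the chain rule over the history $\mathcal{F}_{t+1}$. Summing via the chain rule then telescopes to $I(\Pi^\star;\mathcal{F}_{T+1})\le H(\Pi^\star)$.

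The delicate point is justifying the inequality $I_t(\Pi^\star;\cdot\mid\mathbf{C}_t)\le I_t(\Pi^\star;\mathbf{C}_t,\cdot)$. This follows from the chain rule, since $I_t(\Pi^\star;\mathbf{C}_t)\ge0$. Combining Steps 1--3 yields $\sqrt{\frac{|\mathcal{A}|}{2}TH(\Pi^\star)}$.
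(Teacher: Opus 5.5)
Your proposal is correct, and its skeleton is the paper's: the corollary is Theorem~\ref{thm2}'s proof with $\gamma_t=0$. You compare with the Thompson-sampling distribution to get $\Psi_t\le|\mathcal{A}|/2$, apply Cauchy--Schwarz across rounds, and bound the cumulative information gain by $H(\Pi^\star)$. One wording fix in Step~1: after Cauchy--Schwarz and Pinsker you get the diagonal sum $\sum_a p_a^2 D_{\mathrm{KL}}(\cdot)$. This is not identified with $g_t(\pi^{\mathrm{TS}}\mid\mathbf{c}_t)$; it is bounded by it, after discarding the nonnegative terms $b\neq a$ in $g_t(\pi^{\mathrm{TS}}\mid\mathbf{c}_t)=\sum_{a,b}p_ap_b\,D_{\mathrm{KL}}(\cdot)$.

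The real difference is in Step~3, in how contexts are handled.

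\begin{itemize}
\item \textbf{The paper's route.} Its Theorem~\ref{thm2} proof conditions on the whole realized sequence $\mathbf{C}_{1:T}=\mathbf{c}_{1:T}$, telescopes to $I(\Pi^\star;\mathcal{F}_{T+1}\mid\mathbf{C}_{1:T}=\mathbf{c}_{1:T})$, and then uses $H(\Pi^\star\mid\mathbf{C}_{1:T}=\mathbf{c}_{1:T})\le H(\Pi^\star)$. Like its tower-property step, which swaps conditioning on $(\mathcal{F}_t,\mathbf{C}_{1:T})$ for the algorithm's posterior given $(\mathcal{F}_t,\mathbf{C}_t)$, this tacitly needs the contexts to be uninformative about $\Pi^\star$.
\item \textbf{Why that holds here.} With ancestrally closed contexts, $\mu_{\boldsymbol{\theta}}(a,\mathbf{c})$ depends only on the non-context CPTs. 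Under independent Dirichlet priors these stay independent of the context CPTs. The paper does not state this.
\item \textbf{Your route.} You fold $\mathbf{C}_t$ into the round-$t$ observation and use $I_t(\Pi^\star;(\mathbf{C}_t,\cdot))=I_t(\Pi^\star;\mathbf{C}_t)+I_t(\Pi^\star;\cdot\mid\mathbf{C}_t)$. This is how the paper itself handles contexts in the proof of Theorem~\ref{thm1}.
\end{itemize}

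What each buys: your argument needs no independence structure, bounds the unconditional expectation in the corollary directly, and never touches the Monte Carlo confidence event. The paper's argument, given that independence, yields the slightly stronger bound for every realized context sequence, matching \eqref{eq:contextual-bayesian-regret}.
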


This result follows directly from the standard IDS analysis when the information ratio is computed exactly. To illustrate how causal structure can affect the prior distribution over optimal interventions and thereby tighten the regret bound, consider the causal graph in Figure~\ref{exp_graphs}(a), consisting of the binary nodes $X$, $Y$, and $Z$, with edges $X\to Y$, $Z\to X$, and $Z\to Y$, where $Y$ is the reward node and $Z$ is non-manipulable. There is no context variable in this example. Because $Z$ cannot be manipulated, the feasible intervention set is $\mathcal{A}=\{do(),do(X=1),do(X=0)\}$. Unlike an unstructured bandit model in which the arms are assigned unrelated reward parameters, the expected rewards of these interventions are jointly determined by the same causal mechanisms and shared conditional probability tables. Thus, the causal model induces a joint prior over the intervention rewards and, consequently, a prior distribution over the optimal intervention $\Pi^\star$. Assuming no prior observations, we place independent $\operatorname{Beta}(1,1)$ priors on the Bernoulli parameters in each row of the conditional probability tables and estimate $P(\Pi^\star=a)$ using $N=50{,}000$ Monte Carlo samples. We obtain $P(\Pi^\star=do())\approx0.08$, $P(\Pi^\star=do(X=1))\approx0.46$, and $P(\Pi^\star=do(X=0))\approx0.46$. By Hoeffding's inequality and a union bound over the three interventions, these estimates have a simultaneous error bound of approximately $\pm0.0093$ at overall confidence level $0.999$. The resulting entropy is $H(\Pi^\star)=-\sum_{a\in\mathcal{A}}P(\Pi^\star=a)\log P(\Pi^\star=a)\approx0.9165$, which is strictly smaller than the worst-case value $\log|\mathcal{A}|=\log3\approx1.0986$. Consequently, the oracle IDS regret bound becomes $\mathbb{E}[\mathrm{Reg}_T]\leq\sqrt{3H(\Pi^\star)T/2}\approx1.1725\sqrt{T}$, whereas the worst-case uniform-prior benchmark is $\sqrt{3\log(3)T/2}\approx1.2837\sqrt{T}$. Thus, in this example, the causal model reduces the leading regret coefficient by approximately $8.7\%$ relative to the worst-case entropy bound. Moreover, because the interventions share the same causal mechanisms, observations collected under one intervention can update beliefs about the rewards and information gains of other interventions, although this additional cross-intervention information sharing is not explicitly captured by the worst-case factor $|\mathcal{A}|/2$.

\section{Experiments}

We empirically compare the performance of our proposed algorithms with existing baselines on three tasks of increasing complexity. The corresponding causal graphs are shown in Figure~\ref{exp_graphs}, where red nodes denote non-manipulable variables, $C$ denotes a context variable, and $Y$ denotes the reward variable. For \textbf{Task 1} (Figure~\ref{exp_graphs}(a)), the set of POMISs after the projection step is $\{\emptyset,\{X\}\}$, resulting in the possibly optimal arms $do()$, $do(X=0)$, and $do(X=1)$. For \textbf{Task 2} (Figure~\ref{exp_graphs}(b)), the set of POMISs is $\{\{X_1\},\{X_2\}\}$, yielding the possibly optimal arms $do(X_1=0)$, $do(X_1=1)$, $do(X_2=0)$, and $do(X_2=1)$. For \textbf{Task 3} (Figure~\ref{exp_graphs}(c)), the set of POMISs is $\{\emptyset,\{X_1\},\{X_2\},\{X_1,X_2\}\}$, resulting in a total of nine possibly optimal arms.

We evaluate three baseline methods, including two non-causal algorithms: vanilla UCB and Bernoulli Thompson Sampling (TS), both applied to the set of possibly optimal interventions identified from the POMISs. The third baseline is the algorithm proposed in \cite{lee2019structural}, which uses $z^2$ID to construct multiple estimators of each arm's reward distribution and combines them using minimum-variance convex weights obtained through quadratic programming. For all three tasks, the CPTs are randomly sampled to construct Bayesian networks consistent with the corresponding causal graphs. For each baseline and each proposed algorithm, we run 500 independent trials and plot the average cumulative regret, together with bands corresponding to two standard deviations, in Figure~\ref{exp_res_1}. For tasks containing a context variable, a new context value is sampled independently at each round, and all algorithms select actions conditioned on the observed context. For IDS, the number of posterior samples used to estimate the expected regret and information gain is set to $N=1000$. The results show that both proposed algorithms outperform the baselines by more effectively exploiting the causal structure and the information shared across interventions. In particular, IDS outperforms Thompson Sampling by explicitly balancing expected instantaneous regret against information gain, thereby favoring informative interventions that reduce posterior uncertainty about the context-dependent optimal action.

\begin{figure*}[t!]
\begin{subfigure}[b]{3.8cm}
\centering
    \begin{tikzpicture}
       \node (1) {$X$};
    \node (2) [right =of 1] {$Y$};
    
    \node[red] (4) [above right =of 1,xshift=-0.63cm,yshift=-0.1cm]  {$Z$};

  \path (1) edge (2);
  \path (4) edge  (1);
   \path (4) edge  (2);
    \end{tikzpicture}
    \subcaption{Task $1$}
\end{subfigure}
\hfill
\begin{subfigure}[b]{5.2cm}
    \centering
    \begin{tikzpicture}
       \node (1) {$X_2$};
    \node (2) [right =of 1] {$Y$};
    \node (3) [left =of 1]{$X_1$};
    \node[orange] (5) [above right =of 3,xshift=-0.63cm,yshift=-0.1cm]  {$C$};
    \node[red] (4) [above right =of 1,xshift=-0.63cm,yshift=-0.1cm]  {$Z$};
    
  \path (5) edge (1); 
  \path (5) edge (3); 
  \path (1) edge (2);
  \path (4) edge  (1);
   \path (4) edge  (2);
   \path (3) edge  (1);
    \end{tikzpicture}
    \subcaption{Task $2$}
\end{subfigure} 
\hfill
\begin{subfigure}[b]{5.2cm}
\centering
    \begin{tikzpicture}
        \node (1) {$X_1$};
    \node (2) [right =of 1, xshift = -0.11cm] {$Y$};
    \node (3) [right =of 2, xshift = -0.11cm] {$X_2$};
     \node (5) [right =of 3, xshift = -0.11cm] {$X_3$};
    
    \node[red] (4) [above right =of 1,xshift=-0.63cm,yshift=-0.1cm]  {$Z_1$};
    \node[red] (6) [above  =of 3,yshift = 0.1cm]  {$Z_2$};

        \node[orange] (7) [above  =of 3,yshift = -0.5cm]  {$C$};
  \path (7) edge (2);
   \path (7) edge (5);
  \path (6) edge (2);
  \path (6) edge (5);
  \path (1) edge (2);
  \path (4) edge  (1);
  \path (5) edge  (3);
   \path (4) edge  (2);
   \path (3) edge  (2); 
    \end{tikzpicture}
   \subcaption{Task $3$}
\end{subfigure}
\caption{Causal graphs used in the experiments}
\label{exp_graphs}
\vspace{-0.3em}
\end{figure*}

\begin{figure*}[t!]
\centering
\begin{subfigure}[b]{5.2cm}
    \includegraphics[height = 3.5cm, width=5.2cm]{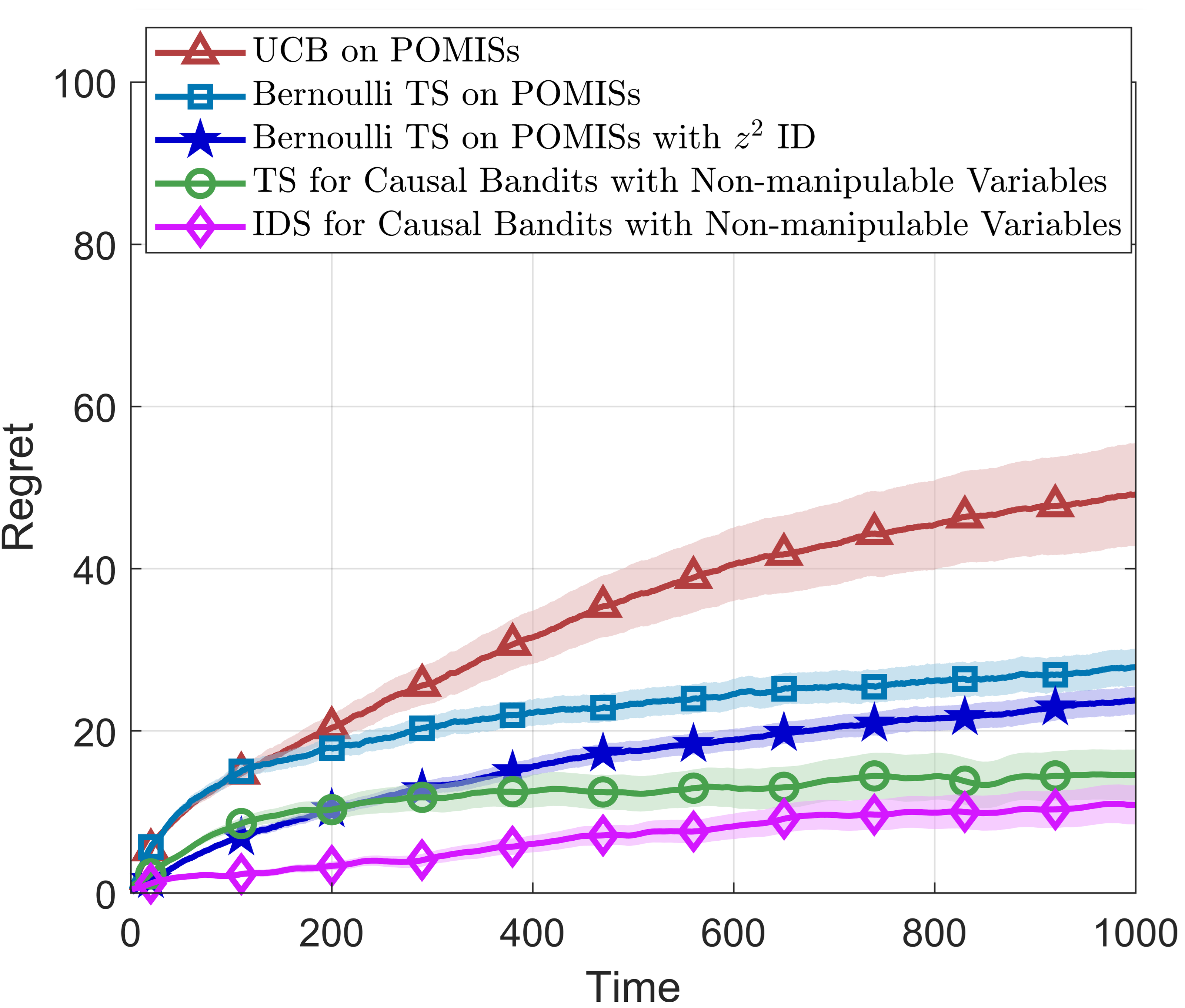}
    \subcaption{Task $1$}
    \end{subfigure}
\enspace
\begin{subfigure}[b]{5.2cm}
    \includegraphics[height = 3.5cm,width=5.2cm]{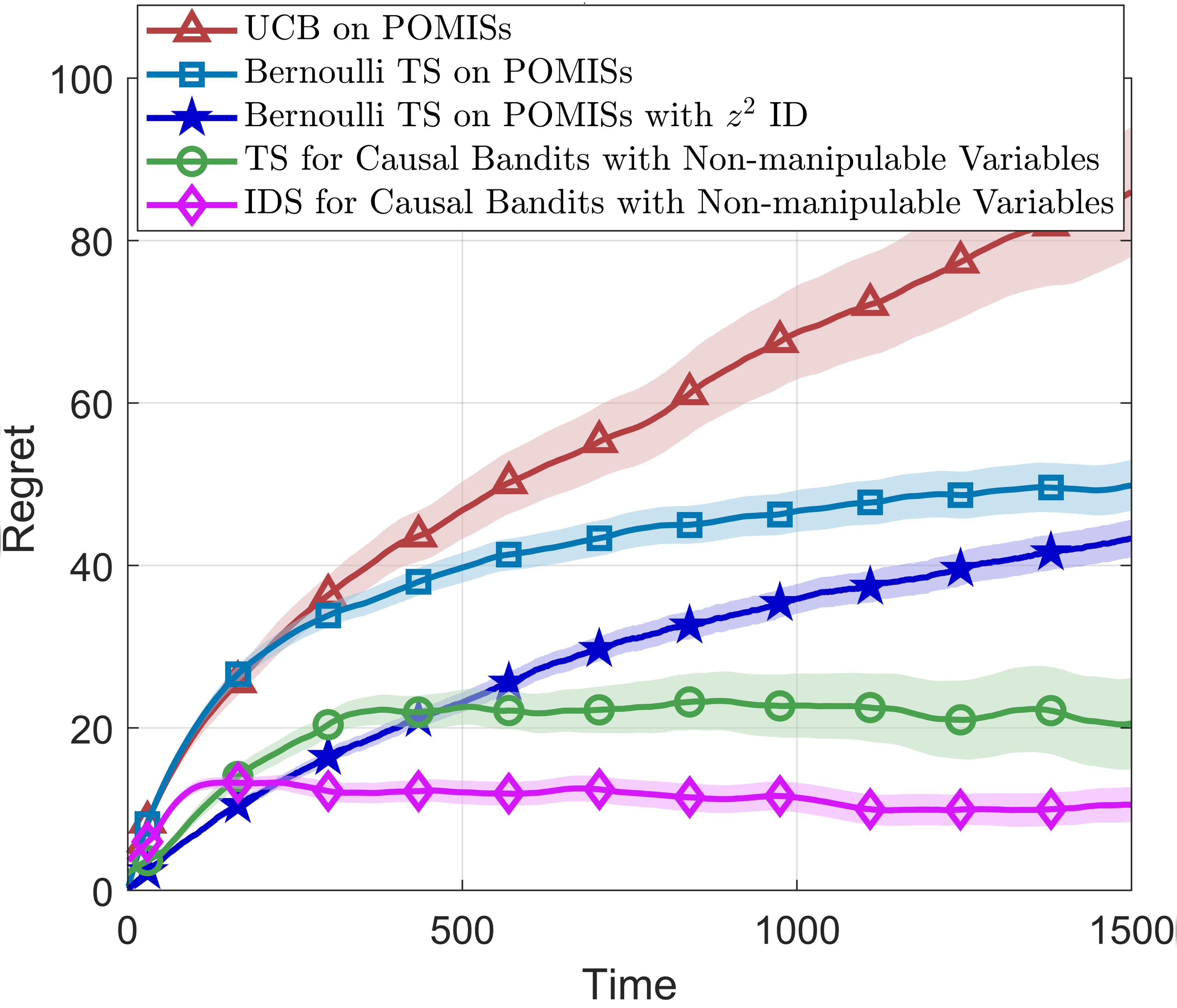}
   \subcaption{Task $2$}
    \end{subfigure}
\enspace
\begin{subfigure}[b]{5.2cm}
    \includegraphics[height = 3.5cm,width=5.2cm]{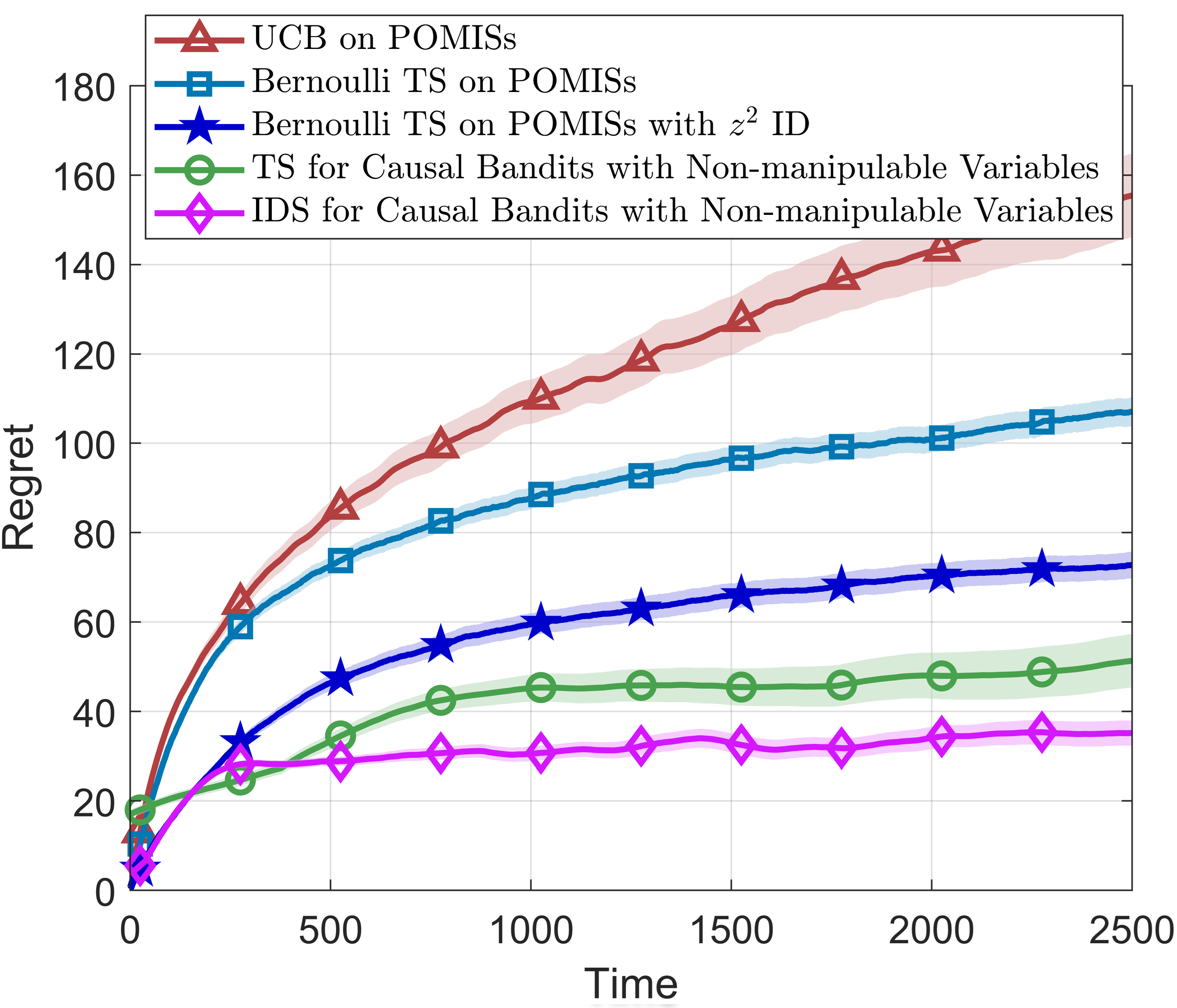}
    \subcaption{Task $3$}
\end{subfigure}  
\caption{Regret versus time for the three tasks with the corresponding causal graphs shown in Figure~\ref{exp_graphs}}

\label{exp_res_1}
\end{figure*}

\begin{figure*}[t!]
\centering
\begin{subfigure}[b]{5.2cm}
    \includegraphics[height = 3.5cm,width=5.2cm]{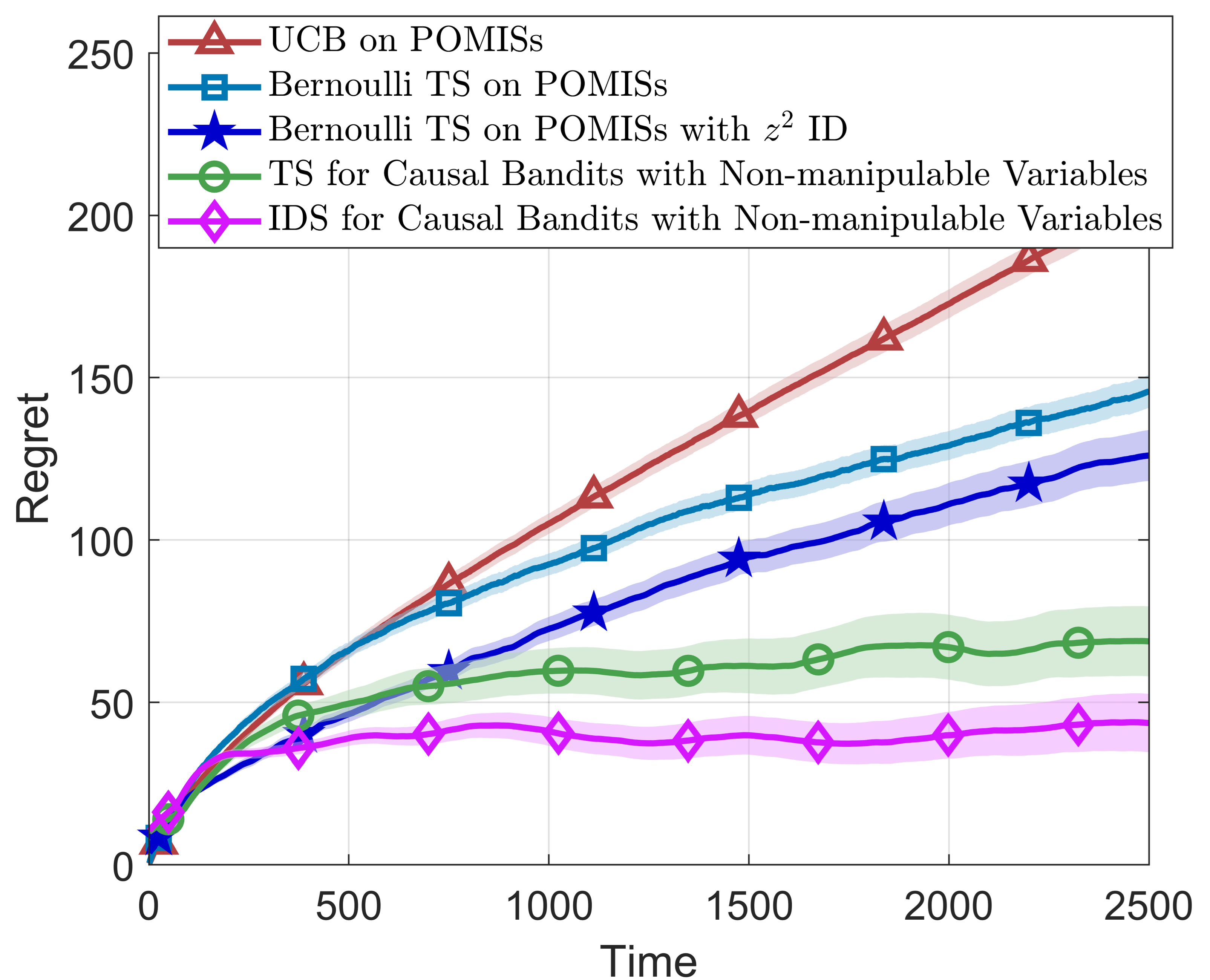}
    \subcaption{$N_{\mathcal{G}}=10, \rho = 0.1$}
    \end{subfigure}
\enspace
\begin{subfigure}[b]{5.2cm}
    \includegraphics[height = 3.5cm,width=5.2cm]{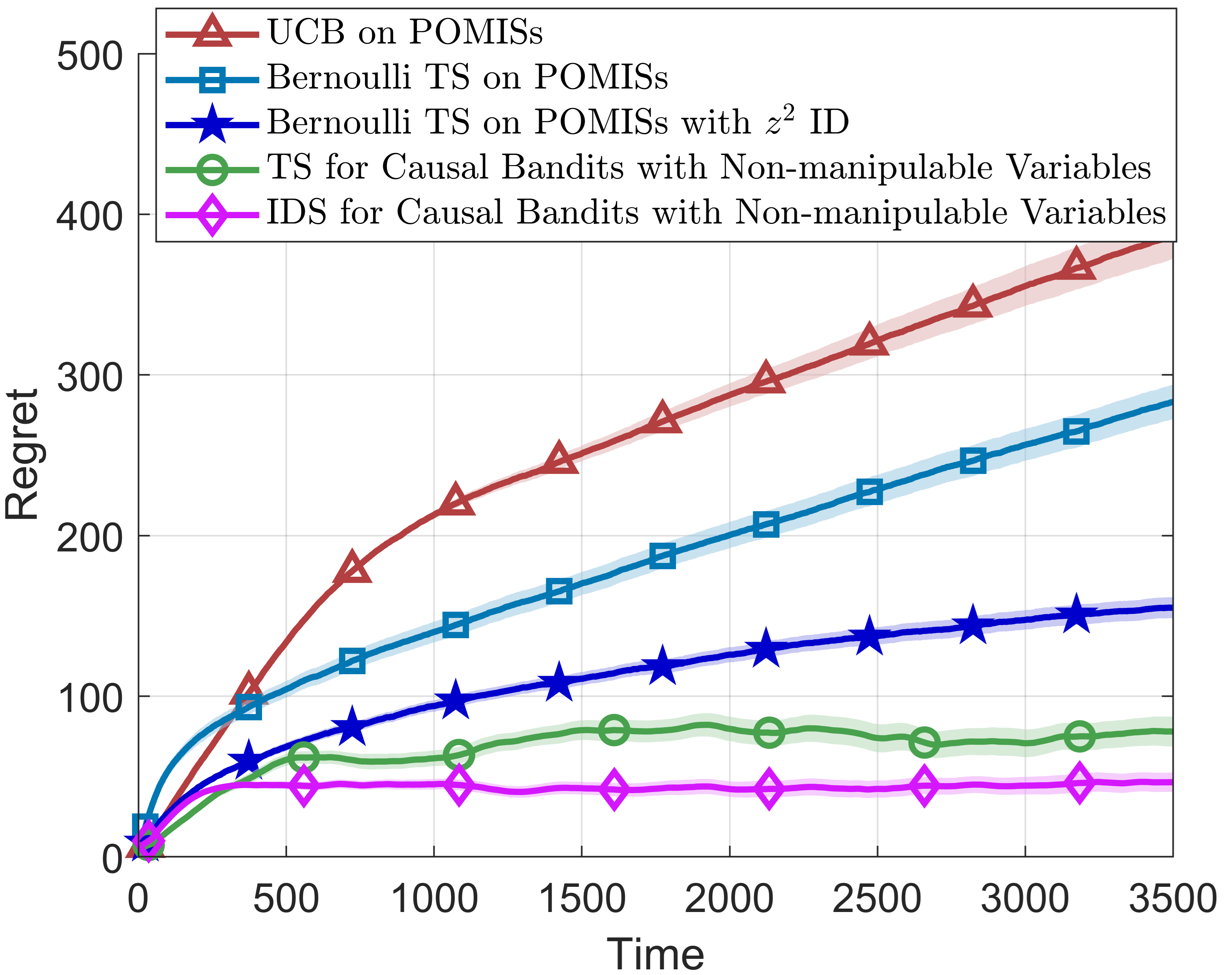}
   \subcaption{$N_{\mathcal{G}}=15, \rho = 0.1$}
    \end{subfigure}
\enspace
\begin{subfigure}[b]{5.2cm}
    \includegraphics[height = 3.5cm,width=5.2cm]{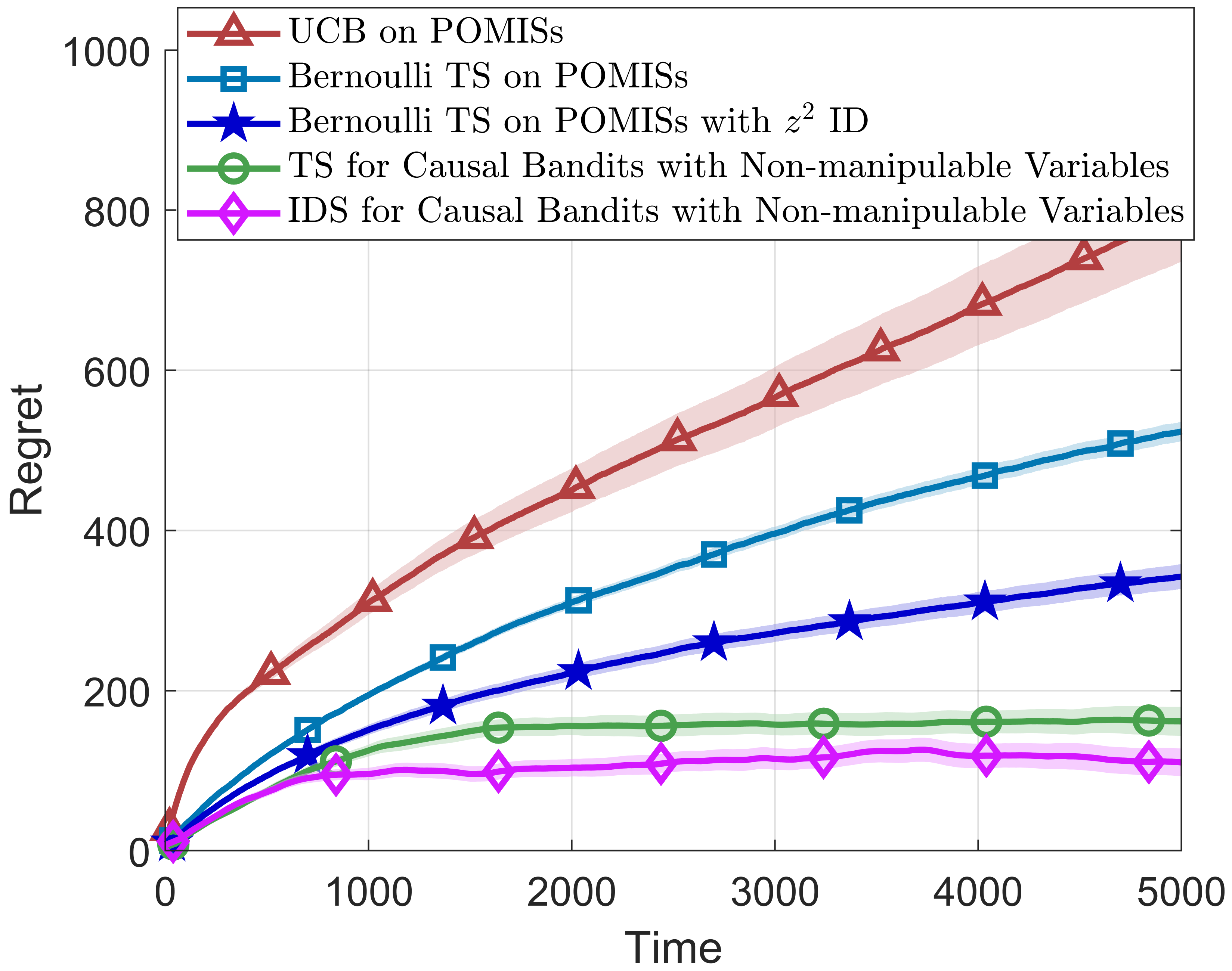}
     \subcaption{$N_{\mathcal{G}}=15, \rho = 0.2$}
\end{subfigure}  
\caption{Regret versus time for Erd\H{o}s--R\'enyi random chordal graphs with varying graph density}
\vspace{-1.0em}
\label{fig_rnd}
\end{figure*}

We further compare our proposed algorithms with the baseline methods on Erd\H{o}s--R\'enyi random chordal graphs with varying numbers of nodes $N_{\mathcal{G}}$ and graph densities $\rho$. After sampling each causal graph, $15\%$ of its nodes are randomly designated as non-manipulable, and a subset of the remaining nodes is designated as context variables. The results are shown in Figure~\ref{fig_rnd}, and the graph-generation procedure is described in Section~\ref{sec_rnd_grph} of the supplementary material. As in the three structured tasks, the proposed algorithms achieve lower cumulative regret than the baselines on the randomly generated graphs. Finally, we consider an additional synthetic causal graph motivated by a real-world healthcare scenario in Section~\ref{real_life} of the supplementary material, further illustrating the applicability of the proposed framework.

\section{Conclusion}

We study contextual causal bandits with non-manipulable variables and introduce Bayesian Thompson Sampling (TS) and Information-Directed Sampling (IDS) algorithms for this setting. By maintaining a posterior distribution over the conditional probability tables of the causal model, the proposed methods exploit information shared across interventions to improve learning efficiency. We establish an entropy-dependent sublinear Bayesian regret bound for Thompson Sampling. For IDS, we derive a regret guarantee that explicitly captures the additional error introduced by Monte Carlo approximation, while the oracle version recovers the standard sublinear IDS bound. Empirical results on structured and randomly generated causal graphs show that the proposed algorithms outperform existing causal and non-causal baselines. Overall, our framework provides a principled approach for sequential decision-making in causal systems containing variables that influence the reward but cannot be directly manipulated.

\section{Acknowledgements}
This research has been supported in part by NSF CAREER 2617987, IIS 2617859, Adobe Research and Intuit.

\clearpage

\bibliographystyle{plainnat}
\bibliography{main}  

\newpage
\appendix

\section{Supplementary Material} 
In the following subsections, we provide comprehensive and formal mathematical proofs for the the theorems presented in the main paper.

\subsection{Optimizing the Contextual Information Ratio}

If $\vec{g}_t(\mathbf{c}_t)=\vec{0}$, then none of the available actions provides information about the optimal action under context $\mathbf{c}_t$. In this case, we select an action that minimizes the expected instantaneous regret. Otherwise, at time $t$ we select an action by solving
\begin{equation}
\min_{\pi \in S_{|\mathcal{A}|}}
\frac{\left(\pi^\top \vec{\Delta}_t(\mathbf{c}_t)\right)^2}
{\pi^\top \vec{g}_t(\mathbf{c}_t)},
\end{equation}
where $S_{|\mathcal{A}|}=\{\pi \in \mathbb{R}_+^{|\mathcal{A}|} : \sum_k \pi_k=1\}$ is the probability simplex. We use the conventions $0/0:=0$ and $x/0:=+\infty$ for $x>0$.

Russo and Van Roy~\cite{russo2014learning} show that this optimization problem is convex and admits an optimal solution with at most two nonzero components. Therefore, although IDS is a randomized policy, it suffices to randomize over at most two actions at each round.

\begin{algorithm}[h]
\small
\DontPrintSemicolon
\SetKwFunction{FMain}{IDSAction}
\SetKwProg{Fn}{Function}{:}{}
\Fn{\FMain{$\mathcal{A},\vec{\Delta}_t(\mathbf{c}_t),\vec{g}_t(\mathbf{c}_t)$}}{

\If{$\vec{g}_t(\mathbf{c}_t)=\vec{0}$}{
\textbf{return} an action $a^*\in\arg\min_{a\in\mathcal{A}}\Delta_t(a\mid\mathbf{c}_t)$.
}

For each pair $a,a' \in \mathcal{A}$ such that $a\neq a'$ compute
\[
q_{a,a'} \leftarrow
\arg\min_{q\in[0,1]}
\frac{
\left(q\,\Delta_t(a\mid\mathbf{c}_t)
+(1-q)\,\Delta_t(a'\mid\mathbf{c}_t)\right)^2}
{q\,g_t(a\mid\mathbf{c}_t)
+(1-q)\,g_t(a'\mid\mathbf{c}_t)}.
\]

Select
\[
(a^*,a^{**})
\leftarrow
\arg\min_{a\neq a'}
\frac{
\left(q_{a,a'}\,\Delta_t(a\mid\mathbf{c}_t)
+(1-q_{a,a'})\,\Delta_t(a'\mid\mathbf{c}_t)\right)^2}
{q_{a,a'}\,g_t(a\mid\mathbf{c}_t)
+(1-q_{a,a'})\,g_t(a'\mid\mathbf{c}_t)}.
\]

\textbf{return} $a^*$ with probability $q_{a^*,a^{**}}$
and $a^{**}$ with probability $1-q_{a^*,a^{**}}$.
}
\textbf{End Function}
\caption{Function to optimize the contextual information ratio}
\label{alg: IDSaction}
\end{algorithm}

\subsection{Proof of Theorem~\ref{thm1}}

\begin{equation}
\mathbb{E}[\mathrm{Reg}_T]
=
\mathbb{E}\!\left[\sum_{t=1}^T \left( Y_{\Pi^\star(\mathbf{C}_t),t} - Y_{A_t,t} \right)\right]
=
\mathbb{E}\!\left[\sum_{t=1}^T \mathbb{E}_t\!\left[ Y_{\Pi^\star(\mathbf{C}_t),t} - Y_{A_t,t} \mid \mathbf{C}_t\right]\right].
\end{equation}

We introduce a nonnegative constant $\Gamma_t$ such that the per-round regret satisfies
\begin{equation}
\mathbb{E}_t\!\left[
Y_{\Pi^\star(\mathbf{C}_t),t} - Y_{A_t,t}
\mid \mathbf{C}_t\right]
\leq
\Gamma_t 
\sqrt{
I_t\!\left(\Pi^\star(\mathbf{C}_t) ; (A_t,Y_{A_t,t}) \mid \mathbf{C}_t \right)
}.
\end{equation}

Then,
\begin{align*}
\mathbb{E}[\mathrm{Reg}_T]
&\leq
\mathbb{E}\!\left[
\sum_{t=1}^T 
\Gamma_t 
\sqrt{
I_t\!\left(\Pi^\star(\mathbf{C}_t) ; (A_t,Y_{A_t,t}) \mid \mathbf{C}_t \right)
}
\right] \\
&\leq
\Gamma \,
\mathbb{E}\!\left[
\sum_{t=1}^T 
\sqrt{
I_t\!\left(\Pi^\star(\mathbf{C}_t) ; (A_t,Y_{A_t,t}) \mid \mathbf{C}_t \right)
}
\right] \\
&\leq
\Gamma 
\sqrt{
T \,
\mathbb{E}\!\left[
\sum_{t=1}^T 
I_t\!\left(\Pi^\star(\mathbf{C}_t) ; (A_t,Y_{A_t,t}) \mid \mathbf{C}_t \right)
\right]
}.
\end{align*}

The second inequality follows from the assumption that $\Gamma \geq \Gamma_t$ for all $t \in \mathbb{N}$, and the third inequality follows from the Cauchy--Schwarz and Jensen inequalities.

Since $\Pi^\star(\mathbf{C}_t)$ is a deterministic function of the full optimal-action mapping $\Pi^\star$, the data-processing inequality gives
\begin{equation}
I_t\!\left(\Pi^\star(\mathbf{C}_t) ; (A_t,Y_{A_t,t}) \mid \mathbf{C}_t \right)
\leq
I_t\!\left(\Pi^\star ; (A_t,Y_{A_t,t}) \mid \mathbf{C}_t \right).
\end{equation}

Let $\mathcal{O}_t$ denote the complete observation collected at time $t$, including $\mathbf{C}_t$, $A_t$, $Y_{A_t,t}$, and all observed non-intervened variables, so that $\mathcal{F}_{t+1}=\sigma(\mathcal{F}_t,\mathcal{O}_t)$. By the chain rule for mutual information,
\begin{equation}
I_t\!\left(\Pi^\star ; (A_t,Y_{A_t,t}) \mid \mathbf{C}_t \right)
\leq
I_t\!\left(\Pi^\star ; \mathcal{O}_t\right).
\end{equation}

By the definition of conditional mutual information,
\begin{equation*}
I_t\!\left(\Pi^\star ; \mathcal{O}_t\right)
=
H\!\left(\Pi^\star \mid \mathcal{F}_t\right)
-
\mathbb{E}\!\left[
H\!\left(\Pi^\star \mid \mathcal{F}_{t+1}\right)
\mid \mathcal{F}_t
\right].
\end{equation*}

Summing over $t$ yields a telescoping series:
\begin{equation}
\mathbb{E}\!\left[
\sum_{t=1}^T 
I_t\!\left(\Pi^\star(\mathbf{C}_t) ; (A_t,Y_{A_t,t}) \mid \mathbf{C}_t \right)
\right]
\leq
H(\Pi^\star)
-
\mathbb{E}\!\left[
H\!\left(\Pi^\star \mid \mathcal{F}_{T+1}\right)
\right]
\leq
H(\Pi^\star).
\end{equation}

Substituting this bound gives
\begin{equation}
\mathbb{E}[\mathrm{Reg}_T]
\leq
\Gamma \sqrt{ H(\Pi^\star) \, T }.
\end{equation}

This completes the proof of Theorem~\ref{thm1}.

\subsection{Proof of Corollary \ref{crr_reg}}

We fix a realization of $\mathcal{F}_t$ and condition on the realized context $\mathbf{C}_t=\mathbf{c}$.
Write $p_a := P_t(\Pi^\star(\mathbf{c})=a\mid \mathbf{C}_t=\mathbf{c})$.
By the TS posterior-matching property, $P_t(A_t=a\mid \mathbf{C}_t=\mathbf{c})=p_a$. Moreover, conditional on $\mathcal{F}_t$ and $\mathbf{C}_t=\mathbf{c}$, the action $A_t$ and the optimal action $\Pi^\star(\mathbf{c})$ are independent because $A_t$ is generated using an independent posterior sample. Also let $K = |\mathcal{A}|$ be the total number of actions.

Define the posterior mean reward of action $a$ at context $\mathbf{c}$:
\[
\bar\mu_a := E_t[Y_{a,t}\mid \mathbf{C}_t=\mathbf{c}].
\]
Also define the posterior mean reward under the optimal action for that context:
\[
\bar\mu^\star := E_t[Y_{\Pi^\star(\mathbf{c}),t}\mid \mathbf{C}_t=\mathbf{c}]
= \sum_{a\in\mathcal{A}} p_a\, E_t[Y_{a,t}\mid \Pi^\star(\mathbf{c})=a, \mathbf{C}_t=\mathbf{c}].
\]

Then the conditional instantaneous regret is
\begin{align}
\Delta_t(\mathbf{c})
&:= E_t\!\left[Y_{\Pi^\star(\mathbf{c}),t} - Y_{A_t,t}\mid \mathbf{C}_t=\mathbf{c}\right]\nonumber\\
&= \bar\mu^\star - \sum_{a\in\mathcal{A}} P_t(A_t=a\mid \mathbf{C}_t=\mathbf{c})\,\bar\mu_a \nonumber\\
&= \sum_{a\in\mathcal{A}} p_a\left(E_t[Y_{a,t}\mid \Pi^\star(\mathbf{c})=a, \mathbf{C}_t=\mathbf{c}] - \bar\mu_a\right).
\label{eq:delta-expand}
\end{align}

Applying the Cauchy--Schwarz inequality gives
\begin{align}
\Delta_t(\mathbf{c})^2
&\le K \sum_{a\in\mathcal{A}} p_a^2\left(E_t[Y_{a,t}\mid \Pi^\star(\mathbf{c})=a, \mathbf{C}_t=\mathbf{c}] - \bar\mu_a\right)^2.
\label{eq:cs1}
\end{align}

Because rewards are Bernoulli, for each $a$ the two distributions
\[
P_t(Y_{a,t}\in\cdot \mid \Pi^\star(\mathbf{c})=a, \mathbf{C}_t=\mathbf{c})
\quad\text{and}\quad
P_t(Y_{a,t}\in\cdot \mid \mathbf{C}_t=\mathbf{c})
\]
are distributions on $\{0,1\}$, and their total variation distance equals the absolute difference of their means:
\[
TV\!\left(P_t(Y_{a,t}\mid \Pi^\star(\mathbf{c})=a, \mathbf{C}_t=\mathbf{c}),\,P_t(Y_{a,t}\mid \mathbf{C}_t=\mathbf{c})\right)
=
\left|E_t[Y_{a,t}\mid \Pi^\star(\mathbf{c})=a, \mathbf{C}_t=\mathbf{c}]-\bar\mu_a\right|.
\]

Applying Pinsker's inequality for each $a$ gives
\begin{align}
\left(E_t[Y_{a,t}\mid \Pi^\star(\mathbf{c})=a, \mathbf{C}_t=\mathbf{c}]-\bar\mu_a\right)^2
&\le \frac{1}{2}KL\!\left(P_t(Y_{a,t}\mid \Pi^\star(\mathbf{c})=a, \mathbf{C}_t=\mathbf{c})\ \|\ P_t(Y_{a,t}\mid \mathbf{C}_t=\mathbf{c})\right).
\label{eq:pinsker}
\end{align}

Plugging \eqref{eq:pinsker} into \eqref{eq:cs1} yields
\begin{align}
\Delta_t(\mathbf{c})^2
&\le \frac{K}{2}\sum_{a\in\mathcal{A}} p_a^2\,
KL\!\left(P_t(Y_{a,t}\mid \Pi^\star(\mathbf{c})=a, \mathbf{C}_t=\mathbf{c})\ \|\ P_t(Y_{a,t}\mid \mathbf{C}_t=\mathbf{c})\right).
\label{eq:cs2}
\end{align}

We now relate the right-hand side to the information obtained under Thompson Sampling. Since $A_t$ and $\Pi^\star(\mathbf{c})$ are conditionally independent given $\mathcal{F}_t$ and $\mathbf{C}_t=\mathbf{c}$, we have
\begin{align}
&I_t\!\left(\Pi^\star(\mathbf{c});(A_t,Y_{A_t,t})\mid \mathbf{C}_t=\mathbf{c}\right)\nonumber\\
&=
\sum_{a\in\mathcal{A}}p_a\,
I_t\!\left(\Pi^\star(\mathbf{c});Y_{a,t}\mid \mathbf{C}_t=\mathbf{c}\right)\nonumber\\
&=
\sum_{a\in\mathcal{A}}\sum_{a'\in\mathcal{A}}p_a p_{a'}\,
KL\!\left(
P_t(Y_{a,t}\mid \Pi^\star(\mathbf{c})=a',\mathbf{C}_t=\mathbf{c})
\ \|\ 
P_t(Y_{a,t}\mid \mathbf{C}_t=\mathbf{c})
\right)\nonumber\\
&\geq
\sum_{a\in\mathcal{A}}p_a^2\,
KL\!\left(
P_t(Y_{a,t}\mid \Pi^\star(\mathbf{c})=a,\mathbf{C}_t=\mathbf{c})
\ \|\ 
P_t(Y_{a,t}\mid \mathbf{C}_t=\mathbf{c})
\right).
\label{eq:kl-sum}
\end{align}

Combining \eqref{eq:cs2} and \eqref{eq:kl-sum} gives
\[
\Delta_t(\mathbf{c})^2
\leq
\frac{K}{2}
I_t\!\left(\Pi^\star(\mathbf{c});(A_t,Y_{A_t,t})\mid \mathbf{C}_t=\mathbf{c}\right).
\]

Therefore, for every realized context $\mathbf{c}$,
\[
\Gamma_t^2\leq\frac{K}{2},
\]
or equivalently,
\[
\Gamma_t\leq\sqrt{\frac{K}{2}}
=
\sqrt{\frac{|\mathcal{A}|}{2}}.
\]

Thus, $\Gamma=\sqrt{\frac{|\mathcal{A}|}{2}}$ is a valid upper bound on the information-ratio coefficient $\Gamma_t$ for every time step $t$. Therefore, by Theorem~\ref{thm1},
\begin{equation}
\mathbb{E}[\mathrm{Reg}_T]
\leq
\Gamma \sqrt{ H(\Pi^\star) \, T }
=
\sqrt{\frac{|\mathcal{A}|}{2} \, H(\Pi^\star) \, T }.
\end{equation}

Moreover, since $\Pi^\star$ maps the context space $\mathcal{C}$ to the action space $\mathcal{A}$, the total number of possible mappings is at most $|\mathcal{A}|^{|\mathcal{C}|}$ when $\mathcal{C}$ is finite. This implies that the entropy is bounded as $H(\Pi^\star) \leq \log\!\left(|\mathcal{A}|^{|\mathcal{C}|}\right) = |\mathcal{C}| \log\!\left(|\mathcal{A}|\right)$. Consequently,
\[
\mathbb{E}[\mathrm{Reg}_T]
\leq
\sqrt{\frac{T|\mathcal{A}||\mathcal{C}|\log|\mathcal{A}|}{2}}.
\]
This completes the proof of Corollary \ref{crr_reg}.

\subsection{Proof of Lemma \ref{lemm_1}}

We fix time step $t$ and the realized context $\mathbf{C}_t=\mathbf{c}_t$. We begin with the definition of the information gain:
\[
g_t(a|\mathbf{c}_t)
=
D_{KL}\!\left(
P_t(\Pi^*(\mathbf{c}_t),Y_{a,t} \mid \mathbf{c}_t)
\,\middle\|\,
P_t(\Pi^*(\mathbf{c}_t)\mid \mathbf{c}_t) P_t(Y_{a,t} \mid \mathbf{c}_t)
\right).
\]

We compute each term separately.

\paragraph{Posterior of $\Pi^*(\mathbf{c}_t)$.}

By definition,
\[
P_t(\Pi^*(\mathbf{c}_t)=a^*\mid\mathbf{c}_t)
=
P(\Pi^*(\mathbf{c}_t)=a^* \mid \mathcal{F}_t,\mathbf{C}_t=\mathbf{c}_t).
\]

Using the law of total expectation over $\boldsymbol{\theta}$,
\[
P_t(\Pi^*(\mathbf{c}_t)=a^*\mid\mathbf{c}_t)
=
\mathbb{E}_{\boldsymbol{\theta}\sim P_t(\boldsymbol{\theta}\mid\mathbf{c}_t)}
\left[
P(\Pi^*(\mathbf{c}_t)=a^* \mid \boldsymbol{\theta},\mathbf{c}_t)
\right].
\]

Given $\boldsymbol{\theta}$ and context $\mathbf{c}_t$, the optimal action is deterministic, hence
\[
P(\Pi^*(\mathbf{c}_t)=a^* \mid \boldsymbol{\theta},\mathbf{c}_t)
=
\mathbbm{1}\left\{
a^* = \arg\max_{a\in\mathcal{A}}
\mathbb{E}[Y_{a,t} \mid \boldsymbol{\theta},\mathbf{c}_t]
\right\}.
\]

Substituting,
\[
P_t(\Pi^*(\mathbf{c}_t)=a^*\mid\mathbf{c}_t)
=
\mathbb{E}_{\boldsymbol{\theta}\sim P_t(\boldsymbol{\theta}\mid\mathbf{c}_t)}
\left[
\mathbbm{1}\left\{
a^* = \arg\max_{a\in\mathcal{A}}
\mathbb{E}[Y_{a,t} \mid \boldsymbol{\theta},\mathbf{c}_t]
\right\}
\right].
\]

\paragraph{Posterior of $Y_a$.}

By definition,
\[
P_t(Y_{a,t}=y \mid \mathbf{c}_t)
=
P(Y_{a,t}=y \mid \mathcal{F}_t,\mathbf{C}_t=\mathbf{c}_t).
\]

Conditioning on $\boldsymbol{\theta}$ gives
\[
P_t(Y_{a,t}=y \mid \mathbf{c}_t)
=
\mathbb{E}_{\boldsymbol{\theta}\sim P_t(\boldsymbol{\theta}\mid\mathbf{c}_t)}
\left[
P(Y_{a,t}=y \mid \boldsymbol{\theta},\mathbf{c}_t)
\right].
\]

\paragraph{Joint posterior.}

Using the law of total expectation,
\begin{align*}
P_t(\Pi^*(\mathbf{c}_t)=a^*,Y_{a,t}=y \mid \mathbf{c}_t)
&=
\mathbb{E}_{\boldsymbol{\theta}\sim P_t(\boldsymbol{\theta}\mid\mathbf{c}_t)}
\Big[
P(\Pi^*(\mathbf{c}_t)=a^*,Y_{a,t}=y
\mid \boldsymbol{\theta},\mathbf{c}_t)
\Big].
\end{align*}

Since $\Pi^*(\mathbf{c}_t)$ is deterministic given $\boldsymbol{\theta}$ and $\mathbf{c}_t$, we have
\begin{align*}
P_t(\Pi^*(\mathbf{c}_t)=a^*,Y_{a,t}=y \mid \mathbf{c}_t)
&=
\mathbb{E}_{\boldsymbol{\theta}\sim P_t(\boldsymbol{\theta}\mid\mathbf{c}_t)}
\Big[
P(\Pi^*(\mathbf{c}_t)=a^* \mid \boldsymbol{\theta},\mathbf{c}_t)
\\
&\qquad\qquad \times
P(Y_{a,t}=y \mid \boldsymbol{\theta},\mathbf{c}_t)
\Big].
\end{align*}

Substituting the indicator representation,
\begin{align*}
P_t(\Pi^*(\mathbf{c}_t)=a^*,Y_{a,t}=y \mid \mathbf{c}_t)
=
\mathbb{E}_{\boldsymbol{\theta}\sim P_t(\boldsymbol{\theta}\mid\mathbf{c}_t)}
\Big[
\mathbbm{1}\{a^* = \arg\max_{a\in\mathcal{A}}
\mathbb{E}[Y_{a,t} \mid \boldsymbol{\theta},\mathbf{c}_t]\}
\\
\times
P(Y_{a,t}=y \mid \boldsymbol{\theta},\mathbf{c}_t)
\Big].
\end{align*}

This establishes the expressions used in the KL divergence.

\paragraph{Expected instantaneous regret.}

By definition,
\[
\Delta_t(a|\mathbf{c}_t)
=
\mathbb{E}\!\left[
Y_{\Pi^*(\mathbf{c}_t),t} - Y_{a,t}
\mid \mathbf{C}_t=\mathbf{c}_t,\mathcal{F}_t
\right].
\]

Using the law of total expectation over $\boldsymbol{\theta}$,
\[
\Delta_t(a|\mathbf{c}_t)
=
\mathbb{E}_{\boldsymbol{\theta}\sim P_t(\boldsymbol{\theta}\mid\mathbf{c}_t)}
\left[
\mathbb{E}\!\left[
Y_{\Pi^*(\mathbf{c}_t),t} - Y_{a,t}
\mid \boldsymbol{\theta},\mathbf{c}_t
\right]
\right].
\]

Expanding the first term gives
\begin{align*}
\mathbb{E}[Y_{\Pi^*(\mathbf{c}_t),t} \mid \boldsymbol{\theta},\mathbf{c}_t]
&=
\sum_{a^*\in\mathcal{A}}
\mathbbm{1}\left\{
a^*=\arg\max_{a\in\mathcal{A}}
\mathbb{E}[Y_{a,t} \mid \boldsymbol{\theta},\mathbf{c}_t]
\right\}
\\
&\qquad\qquad
\times
P(Y_{a^*,t}=1 \mid \boldsymbol{\theta},\mathbf{c}_t).
\end{align*}

Also, since the rewards are Bernoulli,
\[
\mathbb{E}[Y_{a,t} \mid \boldsymbol{\theta},\mathbf{c}_t]
=
P(Y_{a,t}=1 \mid \boldsymbol{\theta},\mathbf{c}_t).
\]

Substituting both expressions,
\begin{align*}
\Delta_t(a|\mathbf{c}_t)
=
\mathbb{E}_{\boldsymbol{\theta}\sim P_t(\boldsymbol{\theta}\mid\mathbf{c}_t)}
\Bigg[
\sum_{a^*\in\mathcal{A}}
\mathbbm{1}\left\{
a^*=\arg\max_{a\in\mathcal{A}}
\mathbb{E}[Y_{a,t} \mid \boldsymbol{\theta},\mathbf{c}_t]
\right\}
\\
\times P(Y_{a^*,t}=1 \mid \boldsymbol{\theta},\mathbf{c}_t)
-
P(Y_{a,t}=1 \mid \boldsymbol{\theta},\mathbf{c}_t)
\Bigg].
\end{align*}

This completes the proof of Lemma \ref{lemm_1}.

\subsection{Proof of Lemma \ref{lemm_2}}

Fix time step $t$ and the realized context $\mathbf{C}_t=\mathbf{c}_t$. From Lemma~\ref{lemm_1}, define
\[
f(\boldsymbol{\theta})
:=
\sum_{a^* \in \mathcal{A}}
\mathbbm{1}\!\left\{
a^* = \arg \max_{a\in \mathcal{A}} \mathbb{E}[Y_{a,t} \mid \boldsymbol{\theta},\mathbf{c}_t]
\right\}
P(Y_{a^*,t}=1 \mid \boldsymbol{\theta},\mathbf{c}_t)
-
P(Y_{a,t}=1 \mid \boldsymbol{\theta},\mathbf{c}_t).
\]
Then
\[
\Delta_t(a \mid \mathbf{c}_t)
=
\mathbb{E}_{\boldsymbol{\theta} \sim P_t(\boldsymbol{\theta}\mid\mathbf{c}_t)}
\!\left[f(\boldsymbol{\theta})\right].
\]

Let $\boldsymbol{\theta}_1,\ldots,\boldsymbol{\theta}_N$ be i.i.d.\ samples from $P_t(\boldsymbol{\theta}\mid\mathbf{c}_t)$, and define the Monte Carlo estimator
\[
\widehat{\Delta}_t(a \mid \mathbf{c}_t)
=
\frac{1}{N}\sum_{i=1}^N f(\boldsymbol{\theta}_i).
\]

We now bound the range of $f(\boldsymbol{\theta})$. For any fixed $\boldsymbol{\theta}$, exactly one term in the indicator sum equals $1$ because ties are broken deterministically. Hence,
\[
\sum_{a^* \in \mathcal{A}}
\mathbbm{1}\{\cdots\}\,P(Y_{a^*,t}=1 \mid \boldsymbol{\theta},\mathbf{c}_t)
=
P\!\left(Y_{\Pi^*(\mathbf{c}_t),t}=1
\mid \boldsymbol{\theta},\mathbf{c}_t\right).
\]
Because $\Pi^*(\mathbf{c}_t)$ maximizes the expected reward under $\boldsymbol{\theta}$ and $\mathbf{c}_t$,
\[
P\!\left(Y_{\Pi^*(\mathbf{c}_t),t}=1
\mid \boldsymbol{\theta},\mathbf{c}_t\right)
\geq
P(Y_{a,t}=1 \mid \boldsymbol{\theta},\mathbf{c}_t).
\]
Since both probabilities belong to $[0,1]$, it follows that
\[
0 \leq f(\boldsymbol{\theta}) \leq 1.
\]

Define $X_i := f(\boldsymbol{\theta}_i)$. Then $X_1,\ldots,X_N$ are independent and satisfy $0\leq X_i\leq1$ almost surely, and
\[
\mathbb{E}[X_i] = \Delta_t(a \mid \mathbf{c}_t),
\qquad
\frac{1}{N}\sum_{i=1}^N X_i = \widehat{\Delta}_t(a \mid \mathbf{c}_t).
\]
Applying Hoeffding's inequality to the sample mean of bounded variables yields, for any $\epsilon>0$,
\[
\mathbb{P}\!\left(
\left|
\widehat{\Delta}_t(a \mid \mathbf{c}_t) - \Delta_t(a \mid \mathbf{c}_t)
\right|
\geq \epsilon
\right)
\leq
2\exp\!\left(-2N\epsilon^2\right).
\]

Setting the right-hand side equal to $\delta$, i.e.,
\[
\delta = 2\exp\!\left(-2N\epsilon^2\right),
\]
gives
\[
\epsilon
=
\sqrt{\frac{1}{2N}\log\!\left(\frac{2}{\delta}\right)}.
\]
Therefore, with probability at least $1-\delta$,
\[
\left|
\widehat{\Delta}_t(a \mid \mathbf{c}_t) - \Delta_t(a \mid \mathbf{c}_t)
\right|
\leq
\sqrt{\frac{1}{2N}\log\!\left(\frac{2}{\delta}\right)}
\leq
\sqrt{\frac{2}{N}\log\!\left(\frac{2}{\delta}\right)}.
\]
Hence,
\[
\mathbb{P}\!\left(
\left|
\widehat{\Delta}_t(a \mid \mathbf{c}_t) - \Delta_t(a \mid \mathbf{c}_t)
\right|
\leq
\sqrt{\frac{2}{N}\log\!\left(\frac{2}{\delta}\right)}
\right)
\geq 1-\delta.
\]
This completes the proof of Lemma~\ref{lemm_2}.

\subsection{Concentration of the Information Gain}
\label{concen_sec}

\begin{proof}[Proof of Lemma~\ref{lemm_gt_conc}]
Fix a time step $t$ and condition on the realized context
$\mathbf{C}_t=\mathbf{c}_t$. Throughout this section, ties in the
definition of the optimal action are resolved using a fixed deterministic
tie-breaking rule. We use
$P_t(\boldsymbol{\theta}\mid\mathbf{c}_t)$ as shorthand for
$P(\boldsymbol{\theta}\mid\mathcal{F}_t,\mathbf{C}_t=\mathbf{c}_t)$.

For each action $a\in\mathcal{A}$, define
\begin{align*}
p_t^a(a^*,y)
&:=
P_t\!\left(
\Pi^*(\mathbf{c}_t)=a^*,Y_{a,t}=y
\mid \mathbf{c}_t
\right),\\
p_t^\star(a^*)
&:=
P_t\!\left(
\Pi^*(\mathbf{c}_t)=a^*
\mid \mathbf{c}_t
\right),\\
p_t^a(y)
&:=
P_t\!\left(
Y_{a,t}=y
\mid \mathbf{c}_t
\right),
\end{align*}
for $a^*\in\mathcal{A}$ and $y\in\{0,1\}$. By
Lemma~\ref{lemm_1}, the joint posterior probability satisfies
\begin{equation}
\label{eq:ig-joint-posterior}
p_t^a(a^*,y)
=
\mathbb{E}_{\boldsymbol{\theta}\sim
P_t(\boldsymbol{\theta}\mid\mathbf{c}_t)}
\left[
\mathbbm{1}\!\left\{
a^*
=
\arg\max_{a'\in\mathcal{A}}
\mathbb{E}
\left[
Y_{a',t}
\mid
\boldsymbol{\theta},\mathbf{c}_t
\right]
\right\}
P\!\left(
Y_{a,t}=y
\mid
\boldsymbol{\theta},\mathbf{c}_t
\right)
\right].
\end{equation}

Let
$\boldsymbol{\theta}^{(1)},\ldots,\boldsymbol{\theta}^{(N)}$
be $N$ independent samples from
$P_t(\boldsymbol{\theta}\mid\mathbf{c}_t)$, and define
\[
a_i^*
:=
\arg\max_{a'\in\mathcal{A}}
\mathbb{E}
\left[
Y_{a',t}
\mid
\boldsymbol{\theta}^{(i)},\mathbf{c}_t
\right].
\]
The corresponding Monte Carlo estimators are
\begin{align}
\widehat{p}_t^a(a^*,y)
&:=
\frac{1}{N}
\sum_{i=1}^{N}
\mathbbm{1}\{a_i^*=a^*\}
P\!\left(
Y_{a,t}=y
\mid
\boldsymbol{\theta}^{(i)},\mathbf{c}_t
\right),
\label{eq:ig-joint-estimator}\\
\widehat{p}_t^\star(a^*)
&:=
\frac{1}{N}
\sum_{i=1}^{N}
\mathbbm{1}\{a_i^*=a^*\},
\label{eq:ig-policy-estimator}\\
\widehat{p}_t^a(y)
&:=
\frac{1}{N}
\sum_{i=1}^{N}
P\!\left(
Y_{a,t}=y
\mid
\boldsymbol{\theta}^{(i)},\mathbf{c}_t
\right).
\label{eq:ig-outcome-estimator}
\end{align}

Define the confidence radii
\begin{equation}
\label{eq:ig-confidence-radii}
\begin{aligned}
\epsilon_1(N,\delta)
&:=
\sqrt{
\frac{1}{2N}
\log\!\left(
\frac{12|\mathcal{A}|^2}{\delta}
\right)
},\\
\epsilon_2(N,\delta)
&:=
\sqrt{
\frac{1}{2N}
\log\!\left(
\frac{6|\mathcal{A}|}{\delta}
\right)
},\\
\epsilon_3(N,\delta)
&:=
\sqrt{
\frac{1}{2N}
\log\!\left(
\frac{12|\mathcal{A}|}{\delta}
\right)
}.
\end{aligned}
\end{equation}

\paragraph{Concentration of the posterior probabilities.}

The summands in \eqref{eq:ig-joint-estimator} lie in $[0,1]$.
Therefore, for any $\varepsilon>0$, Hoeffding's inequality gives
\[
\mathbb{P}\!\left(
\left|
\widehat{p}_t^a(a^*,y)-p_t^a(a^*,y)
\right|
\geq \varepsilon
\right)
\leq
2\exp(-2N\varepsilon^2).
\]
Applying a union bound over all
$a,a^*\in\mathcal{A}$ and $y\in\{0,1\}$ gives, with probability at
least $1-\delta/3$,
\begin{equation}
\label{eq:ig-joint-concentration}
\left|
\widehat{p}_t^a(a^*,y)-p_t^a(a^*,y)
\right|
\leq
\epsilon_1(N,\delta),
\qquad
\forall a,a^*\in\mathcal{A},\quad y\in\{0,1\}.
\end{equation}

Similarly, applying Hoeffding's inequality and a union bound over
$a^*\in\mathcal{A}$ gives, with probability at least $1-\delta/3$,
\begin{equation}
\label{eq:ig-policy-concentration}
\left|
\widehat{p}_t^\star(a^*)-p_t^\star(a^*)
\right|
\leq
\epsilon_2(N,\delta),
\qquad
\forall a^*\in\mathcal{A}.
\end{equation}

Finally, applying Hoeffding's inequality and a union bound over
$a\in\mathcal{A}$ and $y\in\{0,1\}$ gives, with probability at least
$1-\delta/3$,
\begin{equation}
\label{eq:ig-outcome-concentration}
\left|
\widehat{p}_t^a(y)-p_t^a(y)
\right|
\leq
\epsilon_3(N,\delta),
\qquad
\forall a\in\mathcal{A},\quad y\in\{0,1\}.
\end{equation}

Combining \eqref{eq:ig-joint-concentration},
\eqref{eq:ig-policy-concentration}, and
\eqref{eq:ig-outcome-concentration} with a union bound shows that all
three concentration events hold simultaneously with probability at
least $1-\delta$.

For notational convenience, define
\begin{align*}
L_{1,t}^a(a^*,y)
&:=
\max\!\left\{
0,\widehat{p}_t^a(a^*,y)-\epsilon_1
\right\},\\
U_{1,t}^a(a^*,y)
&:=
\min\!\left\{
1,\widehat{p}_t^a(a^*,y)+\epsilon_1
\right\},\\
L_{2,t}(a^*)
&:=
\max\!\left\{
0,\widehat{p}_t^\star(a^*)-\epsilon_2
\right\},\\
U_{2,t}(a^*)
&:=
\min\!\left\{
1,\widehat{p}_t^\star(a^*)+\epsilon_2
\right\},\\
L_{3,t}^a(y)
&:=
\max\!\left\{
0,\widehat{p}_t^a(y)-\epsilon_3
\right\},\\
U_{3,t}^a(y)
&:=
\min\!\left\{
1,\widehat{p}_t^a(y)+\epsilon_3
\right\},
\end{align*}
where $\epsilon_i=\epsilon_i(N,\delta)$ for $i\in\{1,2,3\}$.

\paragraph{Upper confidence bound.}

The contextual information gain can be written as
\begin{equation}
\label{eq:ig-kl-form}
g_t(a\mid\mathbf{c}_t)
=
D_{\mathrm{KL}}
\left(
p_t^a
\,\middle\|\,
p_t^\star p_t^a
\right),
\end{equation}
where
\[
\bigl(p_t^\star p_t^a\bigr)(a^*,y)
=
p_t^\star(a^*)p_t^a(y).
\]
Using
\[
D_{\mathrm{KL}}(P\|Q)
\leq
\chi^2(P\|Q)
=
\sum_x\frac{P(x)^2}{Q(x)}-1,
\]
together with the concentration bounds above, define
\begin{equation}
\label{eq:gt-upper-bound}
\overline{g}_t(a\mid\mathbf{c}_t)
:=
\min\!\left\{
\log 2,\,
\max\!\left\{
0,\,
\sum_{\substack{a^*\in\mathcal{A}\\y\in\{0,1\}}}
\frac{
\bigl(U_{1,t}^a(a^*,y)\bigr)^2
}{
L_{2,t}(a^*)L_{3,t}^a(y)
}
-1
\right\}
\right\}.
\end{equation}
Whenever
$L_{2,t}(a^*)L_{3,t}^a(y)=0$, the corresponding ratio in
\eqref{eq:gt-upper-bound} is interpreted as $+\infty$. The additional
bound by $\log 2$ follows from
\[
g_t(a\mid\mathbf{c}_t)
=
I_t\!\left(
\Pi^*(\mathbf{c}_t);Y_{a,t}
\mid
\mathbf{c}_t
\right)
\leq
H_t(Y_{a,t}\mid\mathbf{c}_t)
\leq
\log 2.
\]
Consequently, on the joint concentration event,
\begin{equation}
\label{eq:ig-upper-guarantee}
g_t(a\mid\mathbf{c}_t)
\leq
\overline{g}_t(a\mid\mathbf{c}_t),
\qquad
\forall a\in\mathcal{A}.
\end{equation}

\paragraph{Lower confidence bound.}

Because the KL divergence is jointly convex in its two arguments, a
lower confidence bound is obtained from the following convex program:
\begin{equation}
\label{eq:gt-lower-bound}
\begin{aligned}
\underline{g}_t(a\mid\mathbf{c}_t)
:=
\underset{P,Q}{\operatorname{minimize}}
\quad &
D_{\mathrm{KL}}(P\|Q)\\
\text{subject to}\quad
&
L_{1,t}^a(a^*,y)
\leq
P(a^*,y)
\leq
U_{1,t}^a(a^*,y),\\
&
\hspace{4.7cm}
\forall a^*\in\mathcal{A},\quad y\in\{0,1\},\\
&
L_{2,t}(a^*)L_{3,t}^a(y)
\leq
Q(a^*,y)
\leq
U_{2,t}(a^*)U_{3,t}^a(y),\\
&
\hspace{4.7cm}
\forall a^*\in\mathcal{A},\quad y\in\{0,1\},\\
&
\sum_{\substack{a^*\in\mathcal{A}\\y\in\{0,1\}}}
P(a^*,y)=1,\\
&
\sum_{\substack{a^*\in\mathcal{A}\\y\in\{0,1\}}}
Q(a^*,y)=1,\\
&
P(a^*,y)\geq 0,\qquad Q(a^*,y)\geq 0,\\
&
\hspace{4.7cm}
\forall a^*\in\mathcal{A},\quad y\in\{0,1\}.
\end{aligned}
\end{equation}

The KL divergence in \eqref{eq:gt-lower-bound} is understood in the
extended-value sense, with $0\log(0/q)=0$ and
$p\log(p/0)=+\infty$ for $p>0$.

On the joint concentration event, the true distributions
\[
P(a^*,y)=p_t^a(a^*,y)
\]
and
\[
Q(a^*,y)=p_t^\star(a^*)p_t^a(y)
\]
are feasible for \eqref{eq:gt-lower-bound}. Therefore, the optimal
value of the program satisfies
\begin{equation}
\label{eq:ig-lower-guarantee}
\underline{g}_t(a\mid\mathbf{c}_t)
\leq
g_t(a\mid\mathbf{c}_t),
\qquad
\forall a\in\mathcal{A}.
\end{equation}

Combining \eqref{eq:ig-upper-guarantee} and
\eqref{eq:ig-lower-guarantee}, we conclude that, with probability at
least $1-\delta$,
\[
\underline{g}_t(a\mid\mathbf{c}_t)
\leq
g_t(a\mid\mathbf{c}_t)
\leq
\overline{g}_t(a\mid\mathbf{c}_t),
\qquad
\forall a\in\mathcal{A}.
\]
The numerical solution of \eqref{eq:gt-lower-bound} is described in
Section~\ref{solver}.
\end{proof}

\subsection{KL Divergence Minimization under Box Constraints}
\label{solver}

We fix time step $t$, context $\mathbf{C}_t=\mathbf{c}_t$, and an
action $a\in\mathcal{A}$. We consider the problem of minimizing the
Kullback--Leibler (KL) divergence between two discrete distributions
under the box and normalization constraints induced by the
high-probability confidence intervals.

Recall that the contextual information gain is defined as
\[
g_t(a \mid \mathbf{c}_t)
=
D_{KL}\!\left(
P_t(\Pi^*(\mathbf{c}_t),Y_{a,t} \mid \mathbf{c}_t)
\;\middle\|\;
P_t(\Pi^*(\mathbf{c}_t)\mid\mathbf{c}_t)
P_t(Y_{a,t} \mid \mathbf{c}_t)
\right).
\]

To obtain a lower bound on $g_t(a\mid\mathbf{c}_t)$, we solve the
following constrained optimization problem over probability
distributions $P$ and $Q$:

\begin{align*}
\textbf{minimize} \quad &
D_{KL}(P \,\|\, Q) \\[0.3em]
\textbf{subject to} \quad &
L_{1,t}^a(a^*,y)
\le
P(a^*,y)
\le
U_{1,t}^a(a^*,y), \\
&\hspace{5em}
\forall a^* \in \mathcal{A},\, y \in \{0,1\}, \\[0.4em]
&
L_{2,t}(a^*)L_{3,t}^a(y)
\le
Q(a^*,y)
\le
U_{2,t}(a^*)U_{3,t}^a(y),\\
&\hspace{5em}
\forall a^* \in \mathcal{A},\, y \in \{0,1\},\\[0.4em]
&
\sum_{a^*\in \mathcal{A},\;y\in \{0,1\}}P(a^*,y)=1,
\quad
\sum_{a^*\in \mathcal{A},\;y\in \{0,1\}}Q(a^*,y)=1,\\
&
P(a^*,y)\geq0,\quad Q(a^*,y)\geq0,
\quad
\forall a^*\in\mathcal{A},\,y\in\{0,1\}.
\end{align*}

Here, the confidence radii are
\[
\epsilon_1(N,\delta)
=
\sqrt{\frac{1}{2N}
\log\!\left(\frac{12|\mathcal{A}|^2}{\delta}\right)},
\]
\[
\epsilon_2(N,\delta)
=
\sqrt{\frac{1}{2N}
\log\!\left(\frac{6|\mathcal{A}|}{\delta}\right)},
\]
\[
\epsilon_3(N,\delta)
=
\sqrt{\frac{1}{2N}
\log\!\left(\frac{12|\mathcal{A}|}{\delta}\right)}.
\]

The KL divergence is jointly convex in $P$ and $Q$, and all the
constraints above are linear. Therefore, the optimization problem is a
convex program, and every locally optimal solution is globally optimal.

We solve this program using a standard convex optimization solver that
supports the relative-entropy function. In particular, the objective is
represented as
\[
D_{KL}(P\|Q)
=
\sum_{a^*\in\mathcal{A},\;y\in\{0,1\}}
\operatorname{rel\_entr}
\!\left(
P(a^*,y),Q(a^*,y)
\right),
\]
where
\[
\operatorname{rel\_entr}(p,q)
=
p\log\!\left(\frac{p}{q}\right)
\]
is interpreted in the extended-value sense. The box, nonnegativity, and
normalization constraints are supplied directly to the solver.

If a feasible initialization is required, one can project the midpoint
of each box onto the corresponding bounded probability simplex. For
bounds $L_i\leq x_i\leq U_i$, the Euclidean projection of a vector $z$
onto the bounded simplex has the form
\[
x_i
=
\min\!\left\{
U_i,
\max\!\left\{
L_i,z_i-\lambda
\right\}
\right\},
\]
where $\lambda$ is chosen so that $\sum_i x_i=1$. The value of
$\lambda$ can be found efficiently by bisection. Unlike independently
clamping and then renormalizing the coordinates, this projection
satisfies the box and normalization constraints simultaneously.

The convex program is solved separately for every
$a\in\mathcal{A}$, and its optimal value is used as
$\underline{g}_t(a\mid\mathbf{c}_t)$.

\subsection{Proof of Theorem~\ref{thm2} (Contextual Case)}

Fix a horizon $T\in\mathbb{N}$ and a realized context sequence
\[
\mathbf{C}_{1:T}
=
(\mathbf{C}_1,\ldots,\mathbf{C}_T)
=
(\mathbf{c}_1,\ldots,\mathbf{c}_T).
\]
For notational convenience, write
\[
\mathbb{E}_{\mathbf{c}_{1:T}}[\cdot]
:=
\mathbb{E}\!\left[
\,\cdot\,
\middle|
\mathbf{C}_{1:T}=\mathbf{c}_{1:T}
\right].
\]

For every $\pi\in S_{|\mathcal{A}|}$, define
\begin{align}
\Delta_t(\pi\mid\mathbf{c}_t)
&:=
\sum_{a\in\mathcal{A}}
\pi(a)\Delta_t(a\mid\mathbf{c}_t),
\\
g_t(\pi\mid\mathbf{c}_t)
&:=
\sum_{a\in\mathcal{A}}
\pi(a)g_t(a\mid\mathbf{c}_t),
\end{align}
where
\begin{align}
\Delta_t(a\mid\mathbf{c}_t)
&:=
\mathbb{E}_t\!\left[
Y_{\Pi^\star(\mathbf{c}_t),t}-Y_{a,t}
\middle|
\mathbf{C}_t=\mathbf{c}_t
\right],
\\
g_t(a\mid\mathbf{c}_t)
&:=
I_t\!\left(
\Pi^\star(\mathbf{c}_t);
Y_{a,t}
\middle|
\mathbf{C}_t=\mathbf{c}_t
\right).
\end{align}
The corresponding information ratio is
\begin{equation}
\Psi_t(\pi\mid\mathbf{c}_t)
:=
\frac{
\Delta_t(\pi\mid\mathbf{c}_t)^2
}{
g_t(\pi\mid\mathbf{c}_t)
}.
\label{eq:information-ratio-contextual}
\end{equation}

Let
\begin{equation}
\pi_t^\circ(\mathbf{c}_t)
\in
\arg\min_{\pi\in S_{|\mathcal{A}|}}
\Psi_t(\pi\mid\mathbf{c}_t)
\label{eq:oracle-ids-contextual}
\end{equation}
denote the oracle IDS distribution, and let
$\widehat{\pi}_t(\mathbf{c}_t)$ denote the distribution selected by
Algorithm~\ref{alg: ids} using the confidence sets.

\paragraph{Step 1: Regret decomposition.}

By the tower property,
\begin{align}
&\mathbb{E}_{\mathbf{c}_{1:T}}\!\left[
\sum_{t=1}^{T}
\left(
Y_{\Pi^\star(\mathbf{C}_t),t}
-
Y_{A_t,t}
\right)
\right]
\nonumber\\
&\qquad=
\mathbb{E}_{\mathbf{c}_{1:T}}\!\left[
\sum_{t=1}^{T}
\Delta_t(\widehat{\pi}_t\mid\mathbf{c}_t)
\right].
\label{eq:regret-identity-contextual}
\end{align}
Using
\[
\Delta_t(\widehat{\pi}_t\mid\mathbf{c}_t)
=
\sqrt{
\Psi_t(\widehat{\pi}_t\mid\mathbf{c}_t)
}
\sqrt{
g_t(\widehat{\pi}_t\mid\mathbf{c}_t)
},
\]
followed by Cauchy--Schwarz and Jensen's inequality, gives
\begin{align}
&\mathbb{E}_{\mathbf{c}_{1:T}}\!\left[
\sum_{t=1}^{T}
\left(
Y_{\Pi^\star(\mathbf{C}_t),t}
-
Y_{A_t,t}
\right)
\right]
\nonumber\\
&\qquad\leq
\sqrt{
\mathbb{E}_{\mathbf{c}_{1:T}}\!\left[
\sum_{t=1}^{T}
\Psi_t(\widehat{\pi}_t\mid\mathbf{c}_t)
\right]
}
\sqrt{
\mathbb{E}_{\mathbf{c}_{1:T}}\!\left[
\sum_{t=1}^{T}
g_t(\widehat{\pi}_t\mid\mathbf{c}_t)
\right]
}.
\label{eq:regret-cs-contextual}
\end{align}

\paragraph{Step 2: Bounding the cumulative information gain.}

Let
\[
Z_t:=\Pi^\star(\mathbf{c}_t).
\]
Since $Z_t$ is a deterministic function of the random policy
$\Pi^\star$, the data-processing inequality gives
\begin{equation}
I_t\!\left(
Z_t;Y_{a,t}
\middle|
\mathbf{C}_t=\mathbf{c}_t
\right)
\leq
I_t\!\left(
\Pi^\star;Y_{a,t}
\middle|
\mathbf{C}_t=\mathbf{c}_t
\right).
\label{eq:data-processing-contextual}
\end{equation}
Consequently,
\begin{align}
g_t(\widehat{\pi}_t\mid\mathbf{c}_t)
&=
\sum_{a\in\mathcal{A}}
\widehat{\pi}_t(a\mid\mathbf{c}_t)
I_t\!\left(
Z_t;Y_{a,t}
\middle|
\mathbf{C}_t=\mathbf{c}_t
\right)
\nonumber\\
&\leq
I_t\!\left(
\Pi^\star;Y_{A_t,t}
\middle|
A_t,\mathbf{C}_t=\mathbf{c}_t
\right).
\label{eq:local-info-to-policy-info}
\end{align}

Conditional on the history $\mathcal{F}_t$ and the observed context,
the randomization used to generate $A_t$ is independent of
$\Pi^\star$. Therefore,
\[
I\!\left(
\Pi^\star;A_t
\middle|
\mathcal{F}_t,\mathbf{C}_{1:T}=\mathbf{c}_{1:T}
\right)
=0.
\]
It follows from the chain rule for mutual information that
\begin{align}
&\mathbb{E}_{\mathbf{c}_{1:T}}\!\left[
\sum_{t=1}^{T}
g_t(\widehat{\pi}_t\mid\mathbf{c}_t)
\right]
\nonumber\\
&\qquad\leq
\sum_{t=1}^{T}
I\!\left(
\Pi^\star;
A_t,Y_{A_t,t}
\middle|
\mathcal{F}_t,
\mathbf{C}_{1:T}=\mathbf{c}_{1:T}
\right)
\nonumber\\
&\qquad=
I\!\left(
\Pi^\star;
\mathcal{F}_{T+1}
\middle|
\mathbf{C}_{1:T}=\mathbf{c}_{1:T}
\right)
\nonumber\\
&\qquad\leq
H\!\left(
\Pi^\star
\middle|
\mathbf{C}_{1:T}=\mathbf{c}_{1:T}
\right)
\leq
H(\Pi^\star).
\label{eq:cumulative-information-contextual}
\end{align}
Substituting \eqref{eq:cumulative-information-contextual} into
\eqref{eq:regret-cs-contextual} yields
\begin{align}
&\mathbb{E}_{\mathbf{c}_{1:T}}\!\left[
\sum_{t=1}^{T}
\left(
Y_{\Pi^\star(\mathbf{C}_t),t}
-
Y_{A_t,t}
\right)
\right]
\nonumber\\
&\qquad\leq
\sqrt{
H(\Pi^\star)
\,
\mathbb{E}_{\mathbf{c}_{1:T}}\!\left[
\sum_{t=1}^{T}
\Psi_t(\widehat{\pi}_t\mid\mathbf{c}_t)
\right]
}.
\label{eq:regret-reduced-to-information-ratio}
\end{align}

\paragraph{Step 3: Oracle information-ratio bound.}

Fix $t$ and let
\begin{equation}
p_t(a\mid\mathbf{c}_t)
:=
P_t\!\left(
\Pi^\star(\mathbf{c}_t)=a
\middle|
\mathbf{C}_t=\mathbf{c}_t
\right).
\end{equation}
Consider the Thompson-sampling distribution
\[
\pi_t^{\mathrm{TS}}(a\mid\mathbf{c}_t)
=
p_t(a\mid\mathbf{c}_t).
\]
Because $\pi_t^\circ$ minimizes the true information ratio,
\begin{equation}
\Psi_t(\pi_t^\circ\mid\mathbf{c}_t)
\leq
\Psi_t(\pi_t^{\mathrm{TS}}\mid\mathbf{c}_t).
\label{eq:oracle-below-ts}
\end{equation}

For $a,b\in\mathcal{A}$, let
\begin{align}
\mu_{t,a}
&:=
\mathbb{E}_t\!\left[
Y_{a,t}
\middle|
\mathbf{C}_t=\mathbf{c}_t
\right],
\\
\mu_{t,a\mid b}
&:=
\mathbb{E}_t\!\left[
Y_{a,t}
\middle|
\Pi^\star(\mathbf{c}_t)=b,
\mathbf{C}_t=\mathbf{c}_t
\right].
\end{align}
Posterior matching gives
\begin{equation}
\Delta_t(\pi_t^{\mathrm{TS}}\mid\mathbf{c}_t)
=
\sum_{a\in\mathcal{A}}
p_t(a\mid\mathbf{c}_t)
\left(
\mu_{t,a\mid a}-\mu_{t,a}
\right).
\label{eq:ts-regret-expansion}
\end{equation}
Hence,
\begin{align}
\Delta_t(\pi_t^{\mathrm{TS}}\mid\mathbf{c}_t)^2
&\leq
|\mathcal{A}|
\sum_{a\in\mathcal{A}}
p_t(a\mid\mathbf{c}_t)^2
\left(
\mu_{t,a\mid a}-\mu_{t,a}
\right)^2.
\label{eq:ts-cauchy}
\end{align}

Since the rewards are Bernoulli, Pinsker's inequality implies
\begin{align}
\left(
\mu_{t,a\mid a}-\mu_{t,a}
\right)^2
&\leq
\frac{1}{2}
D_{\mathrm{KL}}\!\left(
P_t(
Y_{a,t}
\mid
\Pi^\star(\mathbf{c}_t)=a,\mathbf{c}_t
)
\middle\|
P_t(
Y_{a,t}
\mid
\mathbf{c}_t
)
\right).
\label{eq:ts-pinsker}
\end{align}
Therefore,
\begin{align}
\Delta_t(\pi_t^{\mathrm{TS}}\mid\mathbf{c}_t)^2
&\leq
\frac{|\mathcal{A}|}{2}
\sum_{a\in\mathcal{A}}
p_t(a\mid\mathbf{c}_t)^2
D_{\mathrm{KL}}\!\left(
P_t(
Y_{a,t}
\mid
\Pi^\star(\mathbf{c}_t)=a,\mathbf{c}_t
)
\middle\|
P_t(
Y_{a,t}
\mid
\mathbf{c}_t
)
\right).
\label{eq:ts-diagonal-kl}
\end{align}

On the other hand,
\begin{align}
g_t(\pi_t^{\mathrm{TS}}\mid\mathbf{c}_t)
&=
\sum_{a\in\mathcal{A}}
p_t(a\mid\mathbf{c}_t)
I_t\!\left(
\Pi^\star(\mathbf{c}_t);
Y_{a,t}
\middle|
\mathbf{C}_t=\mathbf{c}_t
\right)
\nonumber\\
&=
\sum_{a\in\mathcal{A}}
\sum_{b\in\mathcal{A}}
p_t(a\mid\mathbf{c}_t)
p_t(b\mid\mathbf{c}_t)
\nonumber\\
&\qquad\qquad\times
D_{\mathrm{KL}}\!\left(
P_t(
Y_{a,t}
\mid
\Pi^\star(\mathbf{c}_t)=b,\mathbf{c}_t
)
\middle\|
P_t(
Y_{a,t}
\mid
\mathbf{c}_t
)
\right).
\label{eq:ts-information-expansion}
\end{align}
Every term in \eqref{eq:ts-information-expansion} is nonnegative.
Keeping only the diagonal terms $b=a$ gives
\begin{align}
g_t(\pi_t^{\mathrm{TS}}\mid\mathbf{c}_t)
&\geq
\sum_{a\in\mathcal{A}}
p_t(a\mid\mathbf{c}_t)^2
D_{\mathrm{KL}}\!\left(
P_t(
Y_{a,t}
\mid
\Pi^\star(\mathbf{c}_t)=a,\mathbf{c}_t
)
\middle\|
P_t(
Y_{a,t}
\mid
\mathbf{c}_t
)
\right).
\label{eq:ts-information-diagonal}
\end{align}
Combining \eqref{eq:ts-diagonal-kl} and
\eqref{eq:ts-information-diagonal} yields
\[
\Delta_t(\pi_t^{\mathrm{TS}}\mid\mathbf{c}_t)^2
\leq
\frac{|\mathcal{A}|}{2}
g_t(\pi_t^{\mathrm{TS}}\mid\mathbf{c}_t).
\]
Thus,
\begin{equation}
\Psi_t(\pi_t^\circ\mid\mathbf{c}_t)
\leq
\Psi_t(\pi_t^{\mathrm{TS}}\mid\mathbf{c}_t)
\leq
\frac{|\mathcal{A}|}{2}.
\label{eq:oracle-ratio-bound-contextual}
\end{equation}
\paragraph{Step 4: Confidence sets and Algorithm~\ref{alg: ids}.}

For vectors
$\boldsymbol{\Delta}\in\mathbb{R}^{|\mathcal{A}|}$
and
$\boldsymbol{g}\in\mathbb{R}^{|\mathcal{A}|}_+$,
define
\begin{equation}
F_t(
\pi;
\boldsymbol{\Delta},
\boldsymbol{g}
)
:=
\frac{
(\pi^\top\boldsymbol{\Delta})^2
}{
\pi^\top\boldsymbol{g}
}.
\end{equation}

At time $t$, Algorithm~\ref{alg: ids} selects vectors
\[
\widetilde{\boldsymbol{\Delta}}_t(\mathbf{c}_t)
\in
\mathcal{C}^{\Delta}_t(\mathbf{c}_t)
\qquad\text{and}\qquad
\widetilde{\boldsymbol{g}}_t(\mathbf{c}_t)
\in
\mathcal{C}^{g}_t(\mathbf{c}_t)
\]
by selecting one value from each corresponding confidence interval. Define the information ratio computed from these selected values as
\begin{equation}
\widehat{\Psi}_t(\pi\mid\mathbf{c}_t)
:=
F_t\!\left(
\pi;
\widetilde{\boldsymbol{\Delta}}_t(\mathbf{c}_t),
\widetilde{\boldsymbol{g}}_t(\mathbf{c}_t)
\right).
\label{eq:optimistic-information-ratio}
\end{equation}
Algorithm~\ref{alg: ids} then selects
\begin{equation}
\widehat{\pi}_t(\mathbf{c}_t)
\in
\arg\min_{\pi\in S_{|\mathcal{A}|}}
\widehat{\Psi}_t(\pi\mid\mathbf{c}_t).
\label{eq:algorithm-optimistic-policy}
\end{equation}

Let $\mathcal{E}$ be the event on which, simultaneously for every
$t\leq T$ and every $a\in\mathcal{A}$,
\[
\Delta_t(a\mid\mathbf{c}_t)
\in
\mathcal{C}^{\Delta}_{t,a}(\mathbf{c}_t)
\quad\text{and}\quad
g_t(a\mid\mathbf{c}_t)
\in
\mathcal{C}^{g}_{t,a}(\mathbf{c}_t).
\]
Using
\[
\delta
=
\frac{\delta'}{2T|\mathcal{A}|}
\]
in Lemmas~\ref{lemm_2} and~\ref{lemm_gt_conc}, and applying a union
bound over the time steps, actions, and the two concentration results, gives
\begin{equation}
\mathbb{P}(\mathcal{E})\geq 1-\delta'.
\label{eq:good-event-probability}
\end{equation}
For each time step $t$, define
\begin{equation}
\gamma_t
:=
\sup_{\substack{
\pi\in S_{|\mathcal{A}|},\\
\boldsymbol{\Delta}_t^{\,1}(\mathbf{c}_t),
\boldsymbol{\Delta}_t^{\,2}(\mathbf{c}_t)
\in\mathcal{C}^{\Delta}_t(\mathbf{c}_t),\\
\boldsymbol{g}_t^{\,1}(\mathbf{c}_t),
\boldsymbol{g}_t^{\,2}(\mathbf{c}_t)
\in\mathcal{C}^{g}_t(\mathbf{c}_t)
}}
\left|
\frac{
\left(
\pi^\top
\boldsymbol{\Delta}_t^{\,1}(\mathbf{c}_t)
\right)^2
}{
\pi^\top
\boldsymbol{g}_t^{\,1}(\mathbf{c}_t)
}
-
\frac{
\left(
\pi^\top
\boldsymbol{\Delta}_t^{\,2}(\mathbf{c}_t)
\right)^2
}{
\pi^\top
\boldsymbol{g}_t^{\,2}(\mathbf{c}_t)
}
\right|.
\label{eq:gamma-t}
\end{equation}
The ratios in \eqref{eq:gamma-t} are interpreted using the conventions
$0/0:=0$ and $x/0:=+\infty$ for every $x>0$.

On $\mathcal{E}$, the true vectors
$\boldsymbol{\Delta}_t(\mathbf{c}_t)$ and
$\boldsymbol{g}_t(\mathbf{c}_t)$ belong to their respective confidence
sets. By construction, the vectors
$\widetilde{\boldsymbol{\Delta}}_t(\mathbf{c}_t)$ and
$\widetilde{\boldsymbol{g}}_t(\mathbf{c}_t)$ selected by
Algorithm~\ref{alg: ids} also belong to these confidence sets.
Consequently, in \eqref{eq:gamma-t}, we may choose
\[
\boldsymbol{\Delta}_t^{\,1}(\mathbf{c}_t)
=
\boldsymbol{\Delta}_t(\mathbf{c}_t),
\qquad
\boldsymbol{g}_t^{\,1}(\mathbf{c}_t)
=
\boldsymbol{g}_t(\mathbf{c}_t),
\]
and
\[
\boldsymbol{\Delta}_t^{\,2}(\mathbf{c}_t)
=
\widetilde{\boldsymbol{\Delta}}_t(\mathbf{c}_t),
\qquad
\boldsymbol{g}_t^{\,2}(\mathbf{c}_t)
=
\widetilde{\boldsymbol{g}}_t(\mathbf{c}_t).
\]
Therefore,
\begin{equation}
\left|
\Psi_t(\pi\mid\mathbf{c}_t)
-
\widehat{\Psi}_t(\pi\mid\mathbf{c}_t)
\right|
\leq
\gamma_t
\qquad
\text{for every }\pi\in S_{|\mathcal{A}|}.
\label{eq:true-vs-optimistic-ratio}
\end{equation}

Using \eqref{eq:algorithm-optimistic-policy},
\eqref{eq:true-vs-optimistic-ratio}, and the optimality of
$\pi_t^\circ$, we obtain
\begin{align}
\Psi_t(\widehat{\pi}_t\mid\mathbf{c}_t)
&\leq
\widehat{\Psi}_t(\widehat{\pi}_t\mid\mathbf{c}_t)
+
\gamma_t
\nonumber\\
&\leq
\widehat{\Psi}_t(\pi_t^\circ\mid\mathbf{c}_t)
+
\gamma_t
\nonumber\\
&\leq
\Psi_t(\pi_t^\circ\mid\mathbf{c}_t)
+
2\gamma_t
\nonumber\\
&\leq
\frac{|\mathcal{A}|}{2}
+
2\gamma_t.
\label{eq:algorithm-ratio-bound-contextual}
\end{align}

\paragraph{Step 5: Final bound.}

On the event $\mathcal{E}$, substituting
\eqref{eq:algorithm-ratio-bound-contextual} into
\eqref{eq:regret-reduced-to-information-ratio} gives
\begin{align}
&\mathbb{E}\!\left[
\left.
\sum_{t=1}^{T}
\left(
Y_{\Pi^\star(\mathbf{C}_t),t}
-
Y_{A_t,t}
\right)
\,\right|\,
\{\mathbf{C}_t\}_{t=1}^{T}
\right]
\nonumber\\
&\qquad\leq
\sqrt{
H(\Pi^\star)
\left(
\frac{T|\mathcal{A}|}{2}
+
2\sum_{t=1}^{T}\gamma_t
\right)
}.
\label{eq:final-thm2-contextual}
\end{align}
Since $\mathbb{P}(\mathcal{E})\geq1-\delta'$, the claimed bound holds
with probability at least $1-\delta'$. This completes the proof.
\subsection{Application to a Causal Graph Motivated by a Real-World Scenario}
\label{real_life}

\begin{figure}[t!]
    \centering
    \begin{subfigure}[b]{0.42\textwidth}
        \centering
        \begin{tikzpicture}
            \node[red] (health) {$\mathrm{Health}$};
            \node (recovery) [right=of health] {$\mathrm{Recovery}$};
            \node (treatment) [left=of health] {$\mathrm{Treatment}$};
            \node (diet) [above=of treatment] {$\mathrm{Diet}$};
            \node[red] (lifestyle)
                [above right=of health, xshift=-0.63cm, yshift=-0.1cm]
                {$\mathrm{Lifestyle}$};

            \path
                (health) edge (recovery)
                (lifestyle) edge (health)
                (lifestyle) edge (recovery)
                (treatment) edge (health)
                (diet) edge (health)
                (treatment) edge[bend right] (recovery);
        \end{tikzpicture}
        \subcaption{Causal graph.}
        \label{fig:real-world-graph}
    \end{subfigure}
    \hfill
    \begin{subfigure}[b]{0.52\textwidth}
        \centering
        \includegraphics[width=\textwidth]{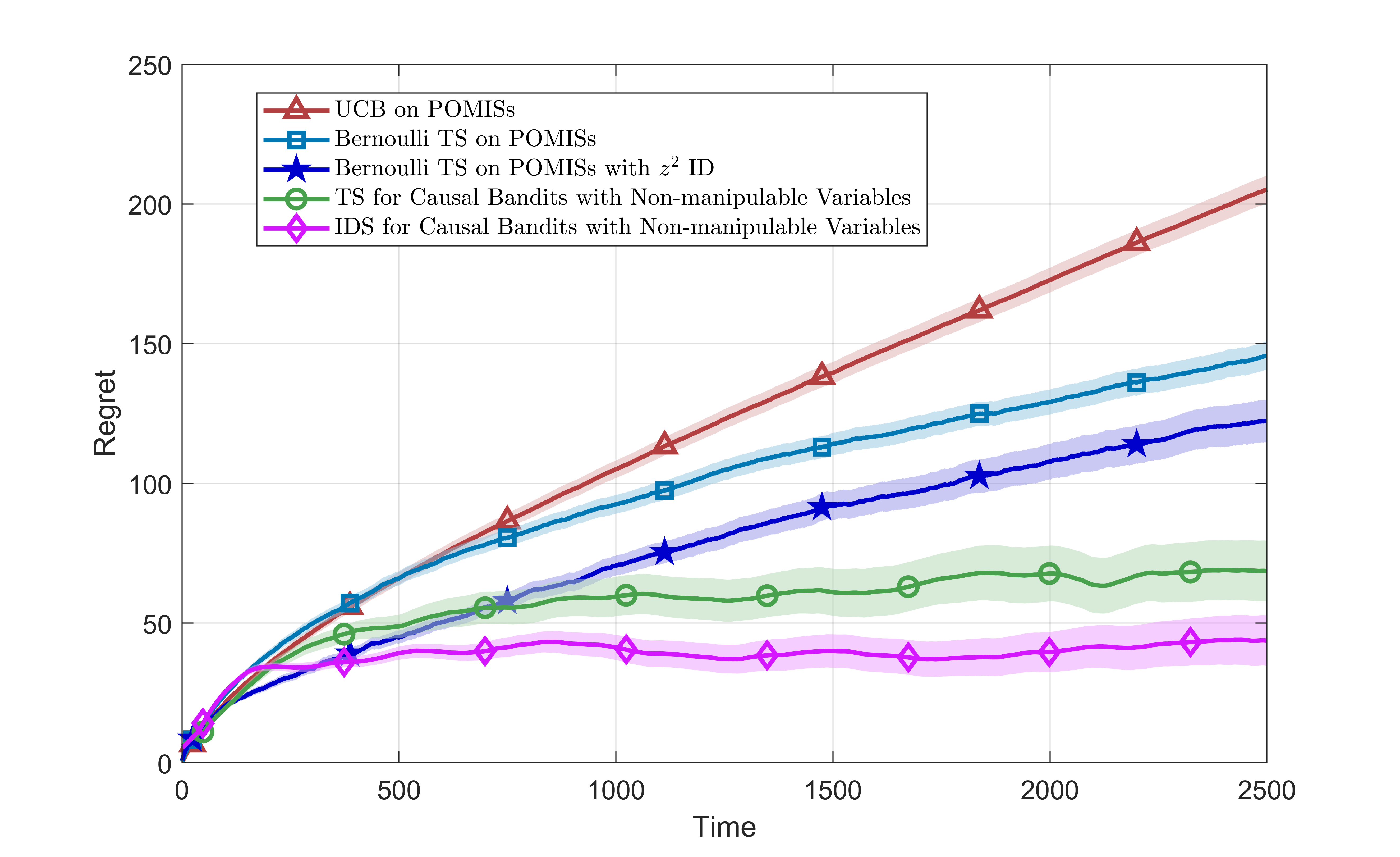}
        \subcaption{Cumulative regret.}
        \label{fig:real-world-regret}
    \end{subfigure}

    \caption{Experimental results for a causal graph motivated by a
    real-world healthcare scenario.}
    \label{exp_res_r}
    \vspace{-0.3em}
\end{figure}

To demonstrate the applicability of our proposed algorithms to practical
settings, we consider a causal graph motivated by a real-world healthcare
scenario. The graph is illustrated in Figure~\ref{fig:real-world-graph}.
The red nodes, $\mathrm{Lifestyle}$ and $\mathrm{Health}$, represent
non-manipulable variables, while $\mathrm{Recovery}$ is the outcome
variable of interest. The manipulable variables $\mathrm{Diet}$ and
$\mathrm{Treatment}$ constitute the candidate intervention arms.

We assume that all variables take values in finite discrete domains. We
then apply our proposed algorithm, together with the considered baseline
methods, to the resulting causal bandit problem. The cumulative regret of
the different methods is shown in Figure~\ref{fig:real-world-regret}.

\subsection{Sampling Erd\H{o}s--R\'enyi Random Chordal Graphs}
\label{sec_rnd_grph}

To generate connected moral directed acyclic graphs (DAGs), we use a
modified Erd\H{o}s--R\'enyi sampling procedure. A moral DAG is a DAG whose
completed partially directed acyclic graph (CPDAG) consists of a single
chain component. Restricting attention to moral DAGs simplifies the
orientation procedure while providing a useful setting that can later be
extended to more general DAG structures.

We begin by sampling a uniformly random ordering of the vertices, denoted
by $\sigma$. For the $n$-th vertex in this ordering, we sample its
in-degree according to
\[
X_n
=
\max\left\{
1,\,
\operatorname{Bin}(n-1,\rho)
\right\},
\]
where $\operatorname{Bin}(n-1,\rho)$ denotes a binomial random variable
with parameters $n-1$ and $\rho$. We then select $X_n$ parents uniformly
at random from the vertices that precede the $n$-th vertex in the ordering
$\sigma$. The use of at least one parent for every vertex after the first
ensures that the resulting graph is connected.

To ensure chordality, we apply the elimination procedure described by
\citet{koller2009probabilistic}, using the reverse of $\sigma$ as the
elimination ordering. This construction follows the methodology adopted
in prior work \citep{squires2020active}. After generating the DAG
structure, we independently sample conditional probability tables (CPTs)
for all variables in a manner consistent with the parent sets of the
sampled graph.

\end{document}